\documentclass{article}
\usepackage[preprint]{neurips_2025}

\usepackage[utf8]{inputenc} 
\usepackage[T1]{fontenc}    
\usepackage{hyperref}       
\usepackage{url}            
\usepackage{booktabs}       
\usepackage{wrapfig}
\usepackage{amsfonts}       
\usepackage{nicefrac}       
\usepackage{microtype}      
\usepackage[dvipsnames]{xcolor}        
\usepackage[T1]{fontenc}
\usepackage{graphicx}   
\usepackage{amsmath}
\usepackage{amssymb} 
\usepackage{amsfonts}
\usepackage{amscd}
\usepackage{bm}
\usepackage{latexsym}
\usepackage{verbatim}
\usepackage[small,bf]{caption}
\usepackage{subcaption}
\setlength{\captionmargin}{30pt}
\usepackage{bbm}
\usepackage{microtype}
\usepackage{kantlipsum}
\usepackage{mathtools, cuted}
\usepackage{units}
\usepackage{comment}
\usepackage{color}  
\usepackage{url}
\usepackage{hyperref}
\usepackage{listings}
\usepackage{tikz} 
\usepackage{tikzlings}
\usepackage{tikzducks}
\usepackage{pgfplots}
\pgfplotsset{compat=1.18}
\usepackage{fancyhdr}
\usetikzlibrary{patterns}
\usepackage{algorithmicx}
\usepackage{algorithm}
\usepackage{algpseudocode}
\usepackage{framed}
\usepackage{fullpage}
\usepackage{tcolorbox}
\usepackage{tgpagella}
\usepackage{fge}
\usepackage{mathdots} 
\usepackage{enumitem}
\usepackage{amsmath}

\usepackage{multicol}

\allowdisplaybreaks  

\hypersetup{
    colorlinks=true,%
    citecolor=blue,%
    filecolor=blue,%
    linkcolor=blue,%
    urlcolor=blue
}

\usepackage[toc, page]{appendix}

\setcounter{totalnumber}{50}
\setcounter{topnumber}{50}
\setcounter{bottomnumber}{50}
\newtheorem{theorem}{Theorem}[section]
\newtheorem{lemma}[theorem]{Lemma}
\newtheorem{corollary}[theorem]{Corollary}

\makeatletter
\newsavebox{\mybox}\newsavebox{\mysim}
\newcommand{\distras}[1]{%
  \savebox{\mybox}{\hbox{\kern3pt$\scriptstyle#1$\kern3pt}}%
  \savebox{\mysim}{\hbox{$\sim$}}%
  \mathbin{\overset{#1}{\kern\z@\resizebox{\wd\mybox}{\ht\mysim}{$\sim$}}}%
}

\newcommand{\sff}{\mr{I\!I}}

\renewcommand{\mathbf}{\boldsymbol} 

\newcommand{\mb}{\mathbf}
\newcommand{\mc}{\mathcal}

\newcommand{\bb}{\mathbb}

\newcommand{\indicator}[1]{\mathbbm 1_{#1}}

\newcommand{\wh}{\widehat}

\newcommand{\paren}[1]{\left( #1 \right)}
\newcommand{\innerprod}[2]{\left\langle #1,  #2 \right\rangle}

\newcommand\subsetsim{\mathrel{%
  \ooalign{\raise0.2ex\hbox{$\subset$}\cr\hidewidth\raise-0.8ex\hbox{\scalebox{0.9}{$\sim$}}\hidewidth\cr}}}
\newcommand\supsetsim{\mathrel{%
  \ooalign{\raise0.2ex\hbox{$\supset$}\cr\hidewidth\raise-0.8ex\hbox{\scalebox{0.9}{$\sim$}}\hidewidth\cr}}}

\newcommand{\mr}{\mathrm}

\numberwithin{equation}{section}

\def \endprf{\hfill {\vrule height6pt width6pt depth0pt}\medskip}

\newenvironment{proof}{\noindent {\bf Proof} }{\endprf\par}


\definecolor{fuchsia}{rgb}{1.0, 0.0, 1.0}

\definecolor{carminered}{rgb}{1.0, 0.0, 0.22}

\definecolor{bittersweet}{rgb}{1.0, 0.44, 0.37}

\definecolor{ao(english)}{rgb}{0.0, 0.5, 0.0}

\definecolor{byzantine}{rgb}{0.74, 0.2, 0.64}

\definecolor{amber}{rgb}{1.0, 0.75, 0.0}

\definecolor{amethyst}{rgb}{0.6, 0.4, 0.8}

\definecolor{rrw}{rgb}{0.9, 0.2, 0.3}

\definecolor{blue-violet}{rgb}{0.54, 0.17, 0.89}

\definecolor{celestialblue}{rgb}{0.29, 0.59, 0.82}

\definecolor{atomictangerine}{rgb}{1.0, 0.6, 0.4}

\definecolor{darkgreen}{rgb}{0.0, 0.2, 0.13}
\definecolor{brass}{rgb}{0.71, 0.65, 0.26}

\definecolor{comment}{RGB}{166, 38, 164}

\definecolor{royalpurple}{rgb}{0.37, 0.22, 0.57}
\definecolor{fuchsia(web)}{rgb}{0.65, 0.2, 0.65}
\definecolor{lava}{rgb}{0.81, 0.06, 0.13}
\definecolor{forestgreen(web)}{rgb}{0.13, 0.55, 0.13}

\DeclareMathOperator*{\diam}{\text{diam}}

\pagestyle{plain}

\usepackage{soul}

\title{Local Averaging Accurately Distills Manifold Structure From Noisy Data}

\author{
  Yihan Shen\\
  Department of Computer Science\\
  Data Science Institute \\ 
  Columbia University\\
  New York, NY, 10027 \\
  \texttt{ys3524@columbia.edu} \\
  \And
  Shiyu Wang \\
  Department of Electrical Engineering \\
  Data Science Institute \\ 
  Columbia University\\
  New York, NY, 10027 \\
  \texttt{sw3601@columbia.edu} \\
  \AND
  Arnaud Lamy \\
  Department of Electrical Engineering \\
  Data Science Institute \\ 
  Columbia University\\
  New York, NY, 10027 \\
  \texttt{aml2379@columbia.edu} \\
  \And
  Mariam Avagyan \\
  Department of Electrical Engineering \\
  Data Science Institute \\ 
  Columbia University\\
  New York, NY, 10027 \\
  \texttt{ma3810@columbia.edu} \\
  \And
  John Wright \\
  Department of Electrical Engineering \\
  Department of Applied Physics and Applied Mathematics \\ 
  Data Science Institute \\ 
  Columbia University\\
  New York, NY, 10027 \\
  \texttt{jw2966@columbia.edu} \\
}

\begin{document}

\maketitle

\begin{abstract}
High-dimensional data are ubiquitous, with examples ranging from natural images to scientific datasets, and often reside near low-dimensional manifolds. Leveraging this geometric structure is vital for  downstream tasks, including signal denoising, reconstruction, and generation. However, in practice, the manifold is typically unknown and only noisy samples are available. A fundamental approach to uncovering the manifold structure is local averaging, which is a cornerstone of state-of-the-art provable methods for manifold fitting and denoising. However, to the best of our knowledge, there are no works that rigorously analyze the accuracy  of local averaging in a manifold setting in high-noise regimes. In this work, we provide theoretical analyses of a two-round mini-batch local averaging method applied to noisy samples drawn from a $d$-dimensional manifold $\mathcal M \subset \mathbb{R}^D$, under a relatively high-noise regime where the noise size is comparable to the reach $\tau$. We show that with high probability, the averaged point $\hat{\mathbf q}$ achieves the bound $d(\hat{\mathbf q}, \mathcal M) \leq \sigma \sqrt{d\left(1+\frac{\kappa\mathrm{diam}(\mathcal M)}{\log(D)}\right)}$, where $\sigma, \mathrm{diam(\mathcal M)},\kappa$ denote the standard deviation of the Gaussian noise, manifold's diameter and a bound on its extrinsic curvature, respectively. This is the first analysis of local averaging accuracy over the manifold in the relatively high noise regime where $\sigma \sqrt{D} \approx \tau$. The proposed method can serve as a preprocessing step for a wide range of provable methods designed for lower-noise regimes. Additionally, our framework can provide a theoretical foundation for a broad spectrum of denoising and dimensionality reduction methods that rely on local averaging techniques. 
\end{abstract}

\section{Introduction}

 Many commonly occurring high-dimensional data -- including natural and medical images, videos, and scientific measurements such as gravitational waves and neuronal recordings -- reside near low-dimensional manifolds, a principle known as the {\em manifold hypothesis}. This intrinsic low-dimensional structure has inspired advances in manifold learning, fitting, embedding, and denoising, all aimed at revealing and utilizing this underlying geometry. In real-world settings, where the manifold is unknown and only noisy samples are available, a natural approach to unveiling the manifold structure is local averaging -- as illustrated in Figure \ref{fig:averaging_example}, often an average of noisy points is closer to the manifold than any of the points themselves. This improvement is especially significant for high-dimensional data, making local averaging a crucial building block for both practical methods and theoretical estimators.

 \begin{figure}
 \centerline{
 \begin{tikzpicture}
     \node at (-5,.35) {\includegraphics[width=2.05in]{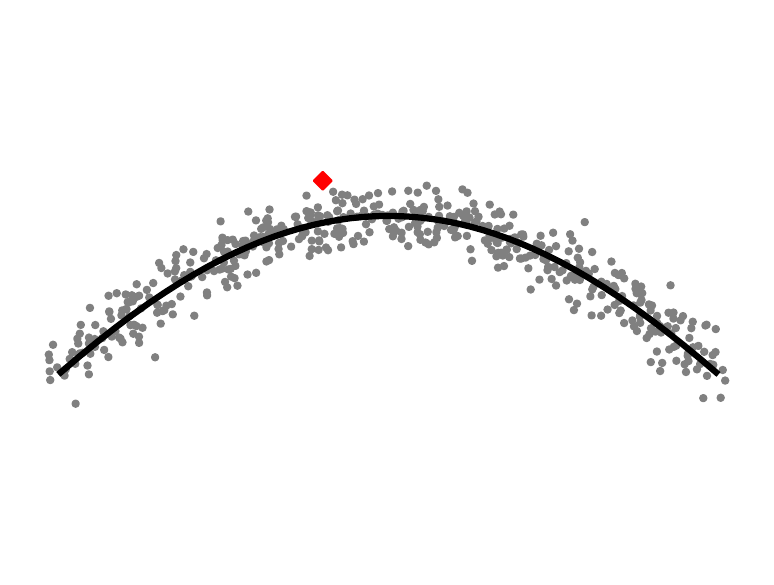}};
     \node at (-5,-.6) {\bf Problem Setup};
     \node at (0,.35) {\includegraphics[width=2.05in]{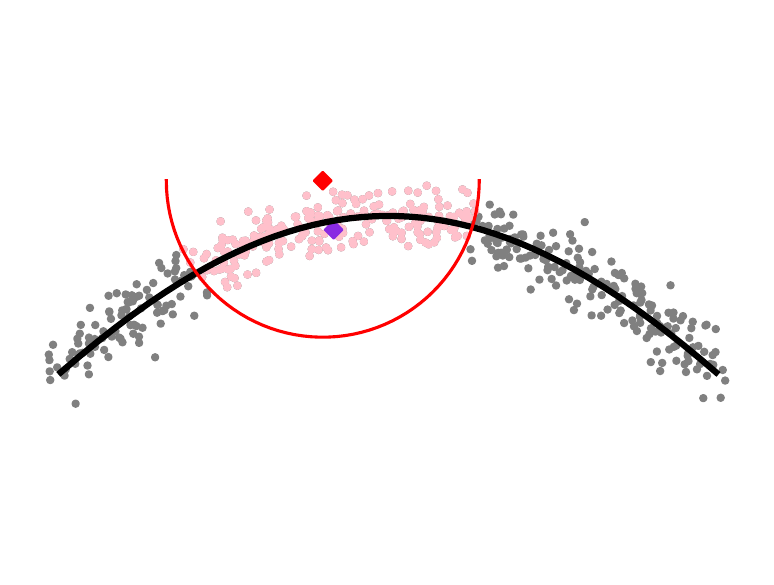}};
     \node at (0,-.6) {\bf First Round Local Avg.}; 
     \node at (5,.35) {\includegraphics[width=2.05in]{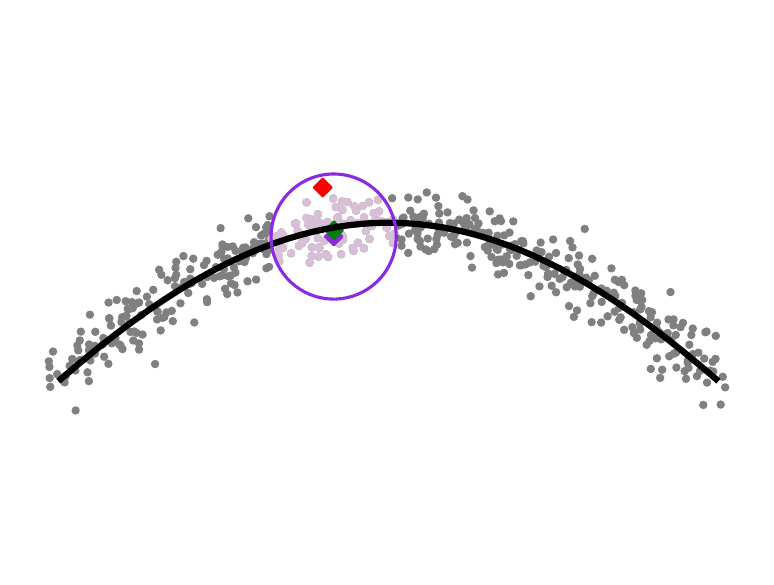}};
     \node at (5,-.6) {\bf Second Round Local Avg.};
     \node at (5,-3) {\includegraphics[width=2in]{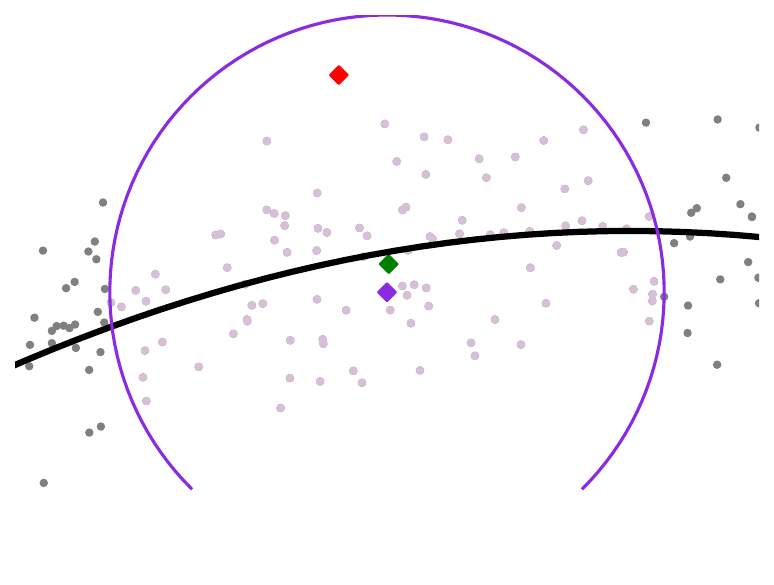}};
    \node at (-2.5,-2.75) {\begin{minipage}{3.5in}\begin{tcolorbox} {\bf Our results (sketch):} for $\sigma \sqrt{D} \lesssim \mathrm{reach}(\mc M)$ \color{Purple}
    \begin{itemize} 
    \item  $d(\mb q^1,\mc M) \lesssim \sigma \sqrt{d (\kappa \, \mr{diam}(\mc M) + \log(D)) } $ \color{PineGreen} 
    \item $d(\mb q^2,\mc M) \lesssim \sigma \sqrt{d (1 + \kappa \, \mr{diam}(\mc M) / \log(D)) }$ \end{itemize}\end{tcolorbox} \end{minipage}}; 
    \node at (5,-1.5) {\color{red} $\mb q^0$};
    \node at (5.3,-3.1) {\color{Purple} $\mb q^1$};
    \node at (5.25,-2.5) {\color{PineGreen} $\mb q^2$};
    \node at (5,-4) {\bf Detail};
     \end{tikzpicture} 
 } \vspace{-.25in}
\caption{{\bf Local Averaging for Manifold Estimation.} Left: problem setup - we observe noisy points $\mb x_1 = \mb x_{1,\natural} + \mb z_1, \mb x_2 = \mb x_{2,\natural} + \mb z_2, \dots$ near a manifold $\mc M$. In contrast to existing theory, our results pertain to the {\em large noise regime} ($\sigma \sqrt{D} \approx \mr{reach}(\mc M)$). Middle: {\color{Purple} \bf local average  $\mb q^1$} around a {\color{red} \bf noisy point $\mb q^0 = \mb x_i$}. Right:  {\color{PineGreen} \bf second round local average $\mb q^2$} around the first round average {\color{Purple} $\mb q^1$}. Our main result shows that with appropriately chosen neighborhood size, {\color{Purple} $\mb q^1$} and {\color{PineGreen} $\mb q^2$} are very close to $\mc M$ - and controls the distance in terms of the noise standard deviation $\sigma$, intrinsic dimension $d$ and manifold curvature $\kappa$. Crucially, {\color{PineGreen}$d(\mb q^2,\mc M)$} bounded by $\sigma \sqrt{d}$, {\em regardless of ambient dimension $D$.} } \label{fig:averaging_example}
 \end{figure}
 
A variety of approaches based on local averaging aim to leverage commonalities between neighboring points to extract clean structure from messy data. For instance, the {\em mean shift} algorithm \cite{fukunaga1975estimation} iteratively moves data points towards local averages to approximate the gradient of the data density. {\em Nonlocal means} denoising \cite{buades2005non, buades2011non} reduces the noise in images by averaging the similar structure in image patches. Local averages also play a key role in capturing manifold structure: classical methods such as locally-linear embedding (LLE) \cite{roweis2000lle} seek a low-dimensional coordinate system that preserves local weighted averages. State-of-the-art provable methods for manifold estimation and denoising are also built on local averaging \cite{yao2023manifold, yao2025manifold}. 

Despite its widespread and successful use, the theoretical understanding of local averaging over manifolds remains incomplete. Existing analyses depend on restrictive assumptions -- such as low noise level--that limit their applicability. This gap between the practice and theory highlights the need for a comprehensive theoretical framework for studying local averaging in higher noise regimes.

In this work, we close this gap by providing a theoretical analysis of the simple yet powerful local averaging method in the large noise regime. We consider data sampled from a $d$-dimensional Riemannian manifold $\mc M \subset \mathbb{R}^D$, corrupted by independent Gaussian noise $\mb z \sim \mc{N}(\mb 0, \sigma^2 \mb I)$. We analyze a simple two-stage local averaging method, where each stage operates by averaging over a mini-batch of neighboring samples, developing a finite-sample error bound in the regime in which $\sigma\sqrt{D} \lesssim \tau$, where $\tau $ is the reach of $\mc M$. As we review below, existing analyses pertain to much smaller noise levels (e.g., translating the results into gaussian noise, implicitly assuming $\sigma D \sqrt{\log(D)} < \tau $ in \cite{aizenbud2025estimation}, or $\sigma e^{c D/d} \ll 1$ in \cite{yao2023manifold, yao2025manifold}).

Under natural assumptions, for Stage I (“Coarse Localization”), we show that averaging over a single mini-batch of nearby samples produces a point whose distance to the manifold is bounded by $O\left(\sigma\sqrt{d\left(\log(D) + \kappa\:\mr{diam}(\mc M)\right)}\right)$. For stage II ("Fine Refinement"), with slightly stronger assumptions, we show that the distance is reduced to $O\left(\sigma\sqrt{d\left(1 + \frac{\kappa\:\mr{diam}(\mc M)}{\log(D)}\right)}\right)$. 

Our main technical innovation is a careful analysis of localization in this problem -- controlling the probability that a noisy point $\mb x_\natural + \mb z$ belongs to the ball $B(\mb q, R )$. This probability exhibits a sharp phase transition around $\| \mb q - \mb x_\natural \|_2^2 \approx R^2 - \sigma^2 D$; we develop sharp subgaussian bounds for this and related probabilities, leading to estimates that accurately capture localization even when $\sigma$ is relatively large. 

To the best knowledge of the authors, our work provides the first theoretical guarantee for local averaging  in the relatively large noise regime ($\sigma\sqrt{D}\approx \tau$). Since local averaging is ubiquitous in the area of machine learning and signal processing, our theory provides both a design principle –– choosing the appropriate neighborhood radii and minibatch sizes according to noise and manifold geometry -- and a performance guarantee that researchers can directly apply while using local averaging. Moreover, our novel analysis framework offers a valuable tool for further theoretical investigations into this local averaging problem. Apart from the theoretical novelty, our results also have immediate practical implications: they justify the use of averaging in graph Laplacian construction\citep{belkin2008towards}, similarity distance estimation \citep{trillos2019local} , and net‐building \citep{wang2025fast}. Our work could be integrated as a pre-processing step for the more refined techniques for manifold estimation \cite{yao2023manifold} which have assumed a much smaller noise level.

\section{Relation to the Literature} \label{sec:rtl} 

Local averaging forms the core of classical algorithms such as Mean Shift \cite{fukunaga1975estimation} and Nonlocal Means \cite{buades2005non}. Mean Shift\footnote{Reminiscent of \cite{schnell1964methode}.} repeatedly averages data points to seek modes in the data distribution and can reveal manifold structure in noisy data \cite{genovese2014nonparametric}. Existing theory on mean shift analyzes its rate of convergence to a mode of the data distribution for various kernels and under various hypotheses (see, e.g., \cite{comaniciu1999mean, huang2018convergence,carreira2007gaussian,yamasaki2024convergence}). While inspiring, these results do not have direct implications on the accuracy of local averaging as a manifold estimator.  

A wide variety of provable manifold estimation and denoising methods rely on local geometry (\cite{aamari2019nonasymptotic,wang2010manifold}). For example, \cite{mohammed2017manifold} leverages the Approximate Squared‐Distance Function (ASDF) framework, which one can estimate based on local neighborhoods of clean samples, to  recover $\mathcal{M}$.  Similarly, \cite{yao2025manifold} estimates local biases by averaging and fits a smooth manifold by estimating tangents at the projected points onto the latent manifold. \cite{yao2023manifold} propose a two‐step mapping that moves any noisy point $\mathbf{x}$ with $d(\mathbf{x},\mathcal{M})\le c\,\sigma$ closer to the manifold, yielding a final accuracy of $O\bigl(\sigma^2\log(1/\sigma)\bigr)$. Both \cite{yao2023manifold} and \cite{yao2025manifold} enjoy rigorous theoretical guarantees and achieve state-of-the-art accuracy, but they all implicitly assume a “small‐noise’’ regime where \ $\sigma e^{cD/d}\ll 1$.

\cite{aizenbud2025estimation} shows  accurate point‐and‐tangent estimation under certain sampling and smoothness conditions without requiring the noise level to be exponentially small. However, this work still restricts samples to lie in a bounded tubular neighborhood\footnote{Many other works on smooth manifold reconstruction \cite{chazal2008smooth, genovese2012minimax}, homology estimation \cite{niyogi2008finding}, and manifold distance estimation \cite{ trillos2019local} operate under similar stringent noise models: for example, bounding its magnitude or assuming it lies exactly in the normal space at each clean point. Our work also compares to previous results in pairwise distance estimation \cite{trillos2019local}, and provide more accurate results in the gaussian noise regime.} of $\mathcal{M}$ of radius $\tau / \sqrt{D\log D}$. Translating to the Gaussian noise model studied here, this scaling would ask $\sigma \sqrt{D} \lesssim \tau / \sqrt{D \log D}$. Below, we will study the Gaussian noise model under the less restrictive assumption $\sigma \sqrt{D} \lesssim \tau$. Prior algorithmic theory by \cite{fefferman2019fitting} also pertains to the large noise regime, giving a method which accurately estimates $\mc M$ when $\sigma \sqrt{D} \lesssim \tau$. \cite{fefferman2019fitting} also analyzes a PCA preprocessing step which further weakens this assumption to   $\sigma\sqrt{D_{\mathrm{PCA}}}\le c\,\tau$ for a projection dimension $D_{\mathrm{PCA}}$ that only depends on $\mathrm{Vol}(\mc M)$ and $d$. The main practical downside of \cite{fefferman2019fitting} is the rather intricate algorithm, which involves local PCA, discretization and refined local estimation. The complexity of this algorithm means that its importance is mostly theoretical. In contrast, in this work, we study the behavior of a very simple, practical and widely used approximation technique (local averaging) in the moderate-to-large noise regime.

\section{Problem Formulation}
Let $\mathcal{M} \subset \mathbb{R}^D$ be a smooth, compact $d$-dimensional Riemannian manifold embedded in the higher dimension ambient space $\mathbb{R}^D$. We observe a stream of noisy data points whose clean signals are uniformly sampled from $\mc M$, i.e.,
\begin{equation} \label{eq: data model}
\{ \mb{x}_i \}_{i\in \mathbb{N}} \subset \mathbb{R}^D, \quad \text{where } \mathbf{x}_i = \mathbf{x}_{i,\natural} + \mb z_i, \quad \mathbf{x}_{i,\natural} \in \mathcal{M},\; \mb z_i \overset{iid}{\sim} \mathcal{N}(\mb 0, \sigma^2 \mb I).
\end{equation}
Our goal is to produce cleaner data points $\{\mb q_i\}_{i\in \mathbb{N}} \subset \mathbb{R}^D$ referred to as {\em landmarks}, with a provable upper bound on $d\Bigl(\mb q_i, \mc M \Bigr).$ The landmarking problem -- identifying representative points that lie closer to the manifold from noisy samples -- is  challenging due to the interplay between the noise size and the manifold's geometric properties. Since the noisy sample's distance to the manifold scales as $\Theta(\sigma \sqrt{D})$, the regime where $\sigma \sqrt{D} \leq \tau$, where the projection of noisy samples onto the manifold is unique, is of greater theoretical interest \cite{aizenbud2025estimation, yao2025manifold,yao2023manifold}.

The simplest approach to identify points closer to the manifold and distill its structure from noisy samples is local averaging. It leverages the fact that nearby points share similar tangent structures, and their mean are often closer to the manifold. In practice, similar to the idea of {\em Mean Shift}, iterative local averaging is simple and useful: starting from an initial noisy sample $\mb q^0$, at each iteration $\ell$, it collects $\mb N_\ell$ neighboring samples whose extrinsic distance to $\mb q^\ell$ is less than $R_\ell:$
\[
\mathcal X_\ell = \{\mathbf x_{\ell,i}\}_{i=1}^{N_\ell}
    \;\subset\; B\bigl(\mathbf q^\ell,\,R_\ell\bigr),
\]
where each $\mb x_{\ell,i}$ follows the data model in Equation \eqref{eq: data model}, and updates the point by averaging over these neighbors:
\begin{equation}
    \mb q^{\ell+1} \leftarrow \frac{1}{N_\ell} \sum_{i=1}^{N_{\ell}} \mb x_{\ell,i}.
\end{equation}

Under the additive Gaussian noise model, we would like to show that the local average $\frac{1}{N_\ell} \sum_{i=1}^{N_{\ell}} \mb x_{\ell,i} = \frac{1}{N_\ell} \sum_{i=1}^{N_{\ell}} \mb x_{\ell,i, \natural} + \frac{1}{N_\ell} \sum_{i=1}^{N_{\ell}} \mb z_{\ell,i}$
effectively suppresses the noise and produces a point close to the manifold. Intuitively, averages of independent gaussian noise vectors have a zero expectation and tend to reduce the noise magnitude, but in this setting we are averaging noise vectors $\mb z_{\ell,i}$ conditioned on the noisy observations $\mb x_{\ell,i}$ being close to $\mb q^\ell.$ We bound its norm by introducing a novel technical tool: subgaussian bounds for grouping probability function $h(s)$ (\ref{section: introduce h}). We also analyze the performance of averaging the latent clean signals of samples near a given landmark using a similar framework around $h(s)$. Geometrically, the local neighborhood of the manifold resembles an affine space. As a result, the average of clean points within a small neighborhood remains close to the manifold. It should be emphasized that local averaging is widely used and serves as the core building block in many state-of-the-art provable manifold estimation methods \cite{yao2025manifold, yao2023manifold}.

In this work, we provide rigorous analysis of the distance between points produced by local averaging through a two-round process (Algorithm \ref{alg:sl}) and the manifold. Our technical contribution lies in the development of subgaussian bounds on $h(s),$ which describes the probability that noisy points fall into an extrinsic ball centered at a given landmark, along with related probability estimates. This analysis provides a foundation for downstream tasks such as tangent space and curvature estimation. Notably, our analyses are adaptive to moderate-to-large noise regime where $\sigma \sqrt{D} \lesssim \tau$. As such, our results can serve as a valuable pre-processing step for methods aimed at handling lower noise scenarios.

\section{Main Result: Accuracy of Two-Stage Local Averaging}
In this algorithm, we perform two consecutive minibatch-averaging steps and introduce a Gaussian perturbation immediately after the first averaging. This noise injection resembles that of stochastic gradient Langevin dynamics (SGLD). Here, it puts a lower bound on the distance between first-round average and the manifold, which is crucial for further decreasing the error bound in the second stage. This procedure is stated more precisely as Algorithm \ref{alg:sl}. Our main theoretical result shows that when the radii $R_1$, $R_2$ are appropriately chosen, the output $\wh{\mb q}$ of Algorithm \ref{alg:sl} is very close to the manifold $\mc M$: 

\begin{algorithm}[t]
\caption{$\mathtt{TwoRoundLandmarking}$} \label{alg:sl} \vspace{-.175in}
\begin{multicols}{2}
    \begin{algorithmic}
    \State {\bf Input:} noisy samples $\mb x_1, \mb x_2, \mb x_3, \dots $, \\ minibatch sizes $N_{\mr{mb},1}$, $N_{\mr{mb},2}$, radii $R_1, R_2$.
    \State {\color{red} $\mb q^{0} \leftarrow \mb x_1$. \Comment{\bf Init.\ with noisy sample}}
    \State {\color{Purple}$\mc X_1 \leftarrow \emptyset$.     \Comment{ \bf First minibatch}}
    \State $i \leftarrow 2$.
    \While{$|\mc X_1| < N_{\mr{mb},1}$} 
    \If{$\| \mb x_i - \mb q^{0} \| \le R_1$}
    \State $\mc X_{1} \leftarrow \mc X_{1} \cup \{ \mb x_i \} $
    \EndIf
    \State $i \leftarrow i + 1$
    \EndWhile
    \State $\mb \vartheta \sim_{\mr{iid}} \mc N(0,\sigma^2 D^{-1/4})$
    \State {\color{Purple} $\mb q^{1} \leftarrow \tfrac{1}{N_{\mr{mb},1}} \sum_{\mb x \in \mc X_{1}} \mb x + \mb \vartheta$ \Comment{\bf Local Avg.}}
    \State {\color{PineGreen} $\mc X_2 \leftarrow \emptyset$. \Comment{\bf Second minibatch}}
    \While{$|\mc X_2| < N_{\mr{mb},2}$} 
    \If{$\| \mb x_i - \mb q^{1} \| \le R_2$}
    \State $\mc X_{2} \leftarrow \mc X_{2} \cup \{ \mb x_i \} $
    \EndIf
    \State $i \leftarrow i + 1$
    \EndWhile  
    \State {\color{PineGreen} $\mb q^{2} \leftarrow \tfrac{1}{N_{\mr{mb},2}} \sum_{\mb x \in \mc X_{2}} \mb x$ \Comment{\bf Local Avg.}}
    \State {\bf Output}: $\wh{\mb q} = \mb q^{2}$. 
    \end{algorithmic}
\end{multicols}
\vspace{-.075in}
\end{algorithm}

\begin{theorem}[Landmarking Accuracy – Two-Stage Refinement]\label{thm:landmark-accuracy}
Let $\mathcal M\subset\mathbb R^D$ be a compact, connected, geodesically complete $d$-dimensional submanifold with extrinsic curvature bounded by~$\kappa$, and intrinsic diameter 
\[
\operatorname{diam}(\mathcal M)\;=\;\sup_{\mb x,\mb y\in\mathcal M}d_{\mathcal M}(\mb x,\mb y)\,,
\]
satisfying $\kappa\,\diam(\mathcal M)\ge1$.  Let $\bar\kappa=\max\{1,\kappa\}$.  Suppose the ambient dimension $D$ is large enough such that
\[
D^{1/12}\;\ge\;C_1\,\max\Bigl\{\bar\kappa^2\,d,\;\kappa\,\diam(\mathcal M)\,d\Bigr\},
\]
and the noise standard deviation~$\sigma$ satisfies 
\[
\sigma\,D\;\ge\;1,\quad
\sigma\,D^{20}\;\ge\;\frac1\kappa,
\quad\text{and}\quad
\sigma\sqrt{D}\;\le\;{c_1\tau},
\]
where $\tau=\operatorname{reach}(\mathcal M)$ is the radius of the largest tubular neighborhood on which the projection onto~$\mathcal M$ is unique.

\medskip
\noindent\textbf{Stage I (Coarse Localization).}  
Run one minibatch pass of Algorithm~\ref{alg:sl} with
\[
N_{\mathrm{mb, 1}}
\;\ge\;C_2
\frac{\log(D)\,\diam(\mathcal M)^2}{\sigma^2\,\bigl(\log(D)+\kappa\,\diam(\mathcal M)\bigr)},
\]
and choose the acceptance radius
\[
R_1^2
\;=\;
\sigma^2\,(2D - d - 3)
\;+\;
C_3\,\bar\kappa^2\,\sigma^2\,d
\;+\;
3C_3\,\bar\kappa\,\sigma^2\sqrt{Dd}.
\]
Then with probability at least $1 - 5e^{-c_2d}$, the first landmark satisfies
\[
d\Bigl(\mb q^1,\mathcal M\Bigr)
\;\le\;
C_4\,\bar\kappa\,\sigma\sqrt{d}
\;+\;
C_4\,\sigma\sqrt{\,d\bigl(\kappa\,\diam(\mathcal M)+\log(D)\bigr)\,}\,.
\]

\medskip
\noindent\textbf{Stage II (Fine Refinement).}  
Assuming in addition $\frac{1}{\kappa}\ge C_5\,\sigma\sqrt{D\log(D)}$, perform a second minibatch pass with
\[{
N_{\mathrm{mb, 2}}
\;\ge\;C_6
\frac{\log^2(D)\,\diam(\mathcal M)^2}{\sigma^2\,\bigl(\log(D)+\kappa\,\diam(\mathcal M)\bigr)}}
\quad\text{and}\quad
R_2^2
=\;
\sigma^2\bigl(D-3 +D^{3/4} + 2C_7\,D^{5/12}\bigr).
\]
Then with { probability at least $1 - 9e^{-c_2d}$} the refined landmark $\mb q^2$ satisfies
\[
d\bigl(\mb q^2,\mathcal M\bigr)
\;\le\;
C\,\sigma\,\sqrt{\,d\,\left(\frac{\kappa\,\diam(\mathcal M)}{\log(D)}\,+ 1\right)} \,.
\]
\end{theorem}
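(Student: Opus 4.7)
The plan is to analyze each stage via the decomposition
\[
\mb q^{\ell+1} \;=\; \tfrac{1}{N_{\mr{mb},\ell+1}}\!\sum_{\mb x_i \in \mc X_{\ell+1}}\! \mb x_{i,\natural} \;+\; \tfrac{1}{N_{\mr{mb},\ell+1}}\!\sum_{\mb x_i \in \mc X_{\ell+1}}\! \mb z_i,
\]
separating the sum into a clean-signal average and a conditionally-distributed noise average. I would bound $d(\mb q^{\ell+1},\mc M)$ by (i) showing that the clean-signal average lies close to a local patch of $\mc M$ near the projection $P_{\mc M}(\mb q^\ell)$ using the extrinsic curvature $\kappa$, and (ii) showing that the noise average, despite the selection bias induced by the inclusion event $\|\mb q^\ell - \mb x_i\|\le R_\ell$, concentrates near $\mb 0$. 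Both steps are organized around the \emph{grouping probability function} $h(s) = \bb P[\|\mb x_\natural + \mb z - \mb q\|\le R \mid \|\mb q - \mb x_\natural\|=s]$, which, because $\|\mb z\|^2$ concentrates sharply at $\sigma^2 D$, exhibits a phase transition around the critical value $s_\star \approx \sqrt{R^2-\sigma^2 D}$; subgaussian tail bounds for $h$ on either side of $s_\star$ are the central technical tool.

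For Stage~I, I would first argue that $d(\mb q^0,\mc M) \lesssim \sigma\sqrt{D}$ with high probability, which dictates the choice $R_1^2 \approx 2\sigma^2 D$. Subgaussian decay of $h$ past $s_\star$ then confines the set of clean signals that can produce accepted samples to a Euclidean ball of radius $\rho_1 = O\bigl(\bar\kappa\sigma\sqrt{d} + \sigma\sqrt{d(\log D+\kappa\,\diam(\mc M))}\bigr)$ around $P_{\mc M}(\mb q^0)$. The extrinsic curvature bound then implies that every such clean signal lies within $O(\kappa \rho_1^2)$ of the tangent plane at $P_{\mc M}(\mb q^0)$, so the clean-signal average inherits a comparable bound on its distance to $\mc M$. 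For the noise average, I would write the conditional mean $\bb E[\mb z \mid \|\mb x_\natural+\mb z-\mb q^0\|\le R_1]$ as an integral in $h$ and apply the subgaussian tail bounds to show it is small; averaging $N_{\mr{mb},1}$ such samples and combining with a Bernstein-type concentration estimate then delivers the stated Stage~I bound.

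For Stage~II, the Langevin-style perturbation $\mb \vartheta\sim\mc N(\mb 0,\sigma^2 D^{-1/4}\mb I)$ plays a crucial role: because its component normal to $\mc M$ at $P_{\mc M}(\mb q^1)$ is of order $\sigma D^{3/8}$ with high probability, it enforces a lower bound on $d(\mb q^1,\mc M)$ that prevents degenerate near-manifold configurations where the inclusion event becomes so unlikely that the ball $B(\mb q^1,R_2)$ fails to be populated predictably. With the tighter $R_2^2 \approx \sigma^2 D$ and a correspondingly narrower transition window of $h$, the effective support of the accepted clean signals around $P_{\mc M}(\mb q^1)$ shrinks by roughly a factor of $\sqrt{\log D}$ in the $\kappa\,\diam$-dependent term compared with Stage~I, yielding the improved bound $\sigma\sqrt{d(1 + \kappa\,\diam(\mc M)/\log D)}$. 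The direct contribution $\|\mb\vartheta\|\approx\sigma D^{3/8}$ is dominated by $\sigma\sqrt{d}$ under the stated scaling on $D$.

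The hardest part will be the conditional noise average in the transition regime of $h$. Each $\mb z_i$ is biased in the direction of $\mb q^\ell - \mb x_{i,\natural}$ by the inclusion event, with a bias whose magnitude depends delicately on where the corresponding clean point $\mb x_{i,\natural}$ sits relative to $s_\star$. Establishing subgaussian tail bounds for $h$ that are simultaneously (a) tight enough to produce a final dimension dependence of $\sqrt{d}$ rather than $\sqrt{D}$, and (b) uniform enough to be coupled to the curvature-induced support control of the clean average without spawning extra factors of $\diam(\mc M)$ or $D$, is the principal technical challenge, and is precisely where the assumed scaling $\sigma\sqrt{D}\lesssim\tau$ (together with the Stage~II hypothesis $1/\kappa \gtrsim \sigma\sqrt{D\log D}$) enters.
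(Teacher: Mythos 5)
Your overall architecture matches the paper's: split the minibatch average into a clean-signal average plus a selection-biased noise average, control the conditional noise mean through subgaussian bounds on the grouping probability $h$ near its phase transition at $s_\star$, control the clean average through extrinsic curvature, inject noise before round two to regularize the geometry, and finish with concentration. However, your quantitative accounting contains genuine errors. In Stage I the accepted clean points are \emph{not} confined to a ball of radius $\rho_1 = O\bigl(\bar\kappa\sigma\sqrt d+\sigma\sqrt{d(\log D+\kappa\,\mr{diam}(\mc M))}\bigr)$ around $\mc P_{\mc M}\mb q^0$: with $R_1^2\approx 2\sigma^2 D$ and $d(\mb q^0,\mc M)\approx\sigma\sqrt{D-d}$, the mean squared intrinsic distance of an accepted clean point from $\mb q^0_\natural$ is of order $\bar\kappa\sigma^2\sqrt{Dd}+\sigma^2\sqrt{Dd(\kappa\,\mr{diam}(\mc M)+\log D)}$, i.e.\ the localization scale is $\sigma(Dd)^{1/4}$ up to factors, far larger than your $\rho_1$ (which is in fact the final answer, so your accounting is circular). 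The $\sigma\sqrt d$ scale emerges only after applying the curvature lemma $d(\text{signal avg},\mc M)\le\kappa\cdot(\text{mean squared intrinsic distance})$ and then invoking $\kappa\sigma\sqrt D\lesssim1$; your sketch hides this essential step. Your Stage II mechanism is likewise misattributed: the transition window of $h$ is \emph{wider}, not narrower, in round two (its width in $s$ is $\nu\approx\sigma^2\sqrt D/s_\star$, which grows as $s_\star$ drops to $\sigma D^{3/8}$), and the intrinsic-spread term $\sigma^2\sqrt{Dd(\kappa\,\mr{diam}(\mc M)+\log D)}$ does not shrink between rounds. The actual gains are: (i) the conditional noise bias scales like $(s_\star/\sqrt D)\sqrt{d(\kappa\,\mr{diam}(\mc M)+\log D)}$, so the smaller $s_\star$ buys a factor $D^{-1/8}$; (ii) the additional hypothesis $1/\kappa\ge C_5\sigma\sqrt{D\log D}$ is what turns $\kappa\sigma^2\sqrt{Dd(\kappa\,\mr{diam}(\mc M)+\log D)}$ into $\sigma\sqrt{d(\kappa\,\mr{diam}(\mc M)/\log D+1)}$ in the signal term; and (iii) the larger minibatch $N_{\mr{mb},2}\propto\log^2 D$ drives the sampling fluctuation down to the target level. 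You mention the $1/\kappa$ hypothesis only in passing and never actually use it, so as written your Stage II bound does not follow.

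Two further points. Your claim that the ``direct contribution'' $\|\mb\vartheta\|\approx\sigma D^{3/8}$ is dominated by $\sigma\sqrt d$ is false under the stated scaling: $d\le D^{1/12}/C_1$ forces $\sqrt d\ll D^{3/8}$. It is also unnecessary, because $\mb q^2$ is a fresh average of the second minibatch and does not contain $\mb\vartheta$; the perturbation influences the output only through the selection center $\mb q^1$. Its purpose is also not to keep $B(\mb q^1,R_2)$ populated (being closer to $\mc M$ only increases acceptance probability); it is to pin $d(\mb q^1,\mc M)$ to $\sigma D^{3/8}$ from above \emph{and below}, so that with the deterministic choice of $R_2$ the quantity $s_{\star,\parallel}^2=s_\star^2-d^2(\mb q^1,\mc M)$ stays of order $\sigma^2 D^{5/12}$: the subgaussian machinery degenerates as $s_{\star,\parallel}\to0$, through terms like $d\log(1/\kappa s_{\star,\parallel})$ and the volume ratio $(\kappa s_{\star,\parallel})^{-d}$. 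Finally, the source of the $\sqrt d$ (rather than $\sqrt D$) dependence and of the $\kappa d\,\mr{diam}(\mc M)$ factor --- a Bishop--Gromov-type lower bound on $\mr{vol}(B_{\mc M}(\mb q_\natural,\Delta))/\mr{vol}(\mc M)$ controlling the relative mass of far-away clean points in the near/far decomposition --- is exactly the part you defer as ``the principal technical challenge,'' so the proposal identifies the right difficulty but does not close it.
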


In the first round, our bound has a (mild) dependence on the ambient dimension $D$ -- namely, the error is proportional to $\sigma \sqrt{d \log (D)}$. With mildly stronger assumption on $\kappa$, the second round of averaging removes this dependence. The resulting accuracy, $O(\sigma\sqrt{d})$, is a natural scale associated with  manifold denoising.\footnote{Namely, if the manifold $\mc M$ is {\em known}, and we attempt to denoise an observation of the form $\mb x = \mb x_\natural + \mb z$, the minimum achievable RMSE $(\bb E[ \| \wh{\mb x} - \mb x_\natural \|^2 ])^{1/2}$ is on the order of $\sigma \sqrt{d}$. In this sense, $\sigma \sqrt{d}$ is a fundamental limit for denoising. Note that {\em it is} possible to achieve a smaller error in estimating the manifold $\mc M$. As described in Section \ref{sec:rtl} several previous works achieve Hausdorff error $O(\sigma^2)$, using more complicated methods or with more restrictive assumptions on the noise level $\sigma$. 
} The two-stage averaging procedure in Algorithm \ref{alg:sl} extends naturally to multiple rounds of averaging; due to its minibatch update it admits a natural online implementation. As described in Section \ref{sec:rtl}, this method is significantly simpler than most existing provable manifold estimation methods; due to its reliance on very simple statistical operations it scales well with ambient dimension $D$. 

Theorem~\ref{thm:landmark-accuracy} guarantees that the landmark $\mb q^1$ produced by local averaging around $\mb q^{0}$ is very close to $\mathcal M$. As a by-product, our analysis also controls the distance between the first round local average and $\mb q^0_\natural = \mc P_{\mc M} \mb q^0$, the projection of $\mb q^0$ onto $\mc M$. We will bound this distance in Corollary \ref{cor: signal estimation via local averaging} below. Based on this bound, we can interpret the first round local average as a (reasonably accurate) estimate of $\mb q^0_\natural$. In Algorithm \ref{alg:sl}, $\mb q^0 = \mb x_1 = \mb x_{1,\natural} + \mb z_1$ is itself a noisy version of a clean point $\mb x_{1,\natural} \in \mc M$. It is not difficult to show that with high probability $\| \mb q^0_\natural - \mb x_{1,\natural} \| \lesssim \sigma \sqrt{d}$, and hence Corollary \ref{cor: signal estimation via local averaging} also controls the accuracy in estimating $\mb x_{1,\natural}$. We have 

\begin{corollary}[Signal Estimation via Local Averaging]\label{cor: signal estimation via local averaging}
Under the same assumptions as Theorem~\ref{thm:landmark-accuracy} and with the same accepting radius $R_1$ and minibatch size $N_{\mr{mb, 1}}$, let $\mb q^{0}$ be the coarse landmark in Stage I and 
$
\mb {q}^0_{\natural}$
be its projection onto $\mc M$. Then there exist constants \(C_1,C_2>0\) such that
with probability at least $1-3e^{-C_1d}$,
the local average's distance from the original clean signal is bounded by
\begin{equation} \label{eqn:signal-accuracy} 
    \left\|\frac{1}{N_{\mathrm{mb, 1}}}
\sum_{\mb x \in \mc X_1}^{}
\mb x -\,\mb q^{0}_{\natural}\right\|
\le 
C_2\sigma\sqrt{\bar{\kappa}}(Dd)^{1/4}+C_2\sigma\Bigl(Dd\:\bigl(\kappa\mr{diam}(\mc M) + \log(D)\bigr)\Bigr)^{1/4}.
\end{equation}
\end{corollary}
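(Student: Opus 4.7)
The plan is to start from the decomposition
\[
\frac{1}{N_{\mr{mb,1}}} \sum_{\mb x \in \mc X_1} \mb x \;-\; \mb q^{0}_{\natural}
\;=\;
\underbrace{\frac{1}{N_{\mr{mb,1}}} \sum_{\mb x \in \mc X_1} \bigl(\mb x_\natural - \mb q^{0}_{\natural}\bigr)}_{\text{clean bias}}
\;+\;
\underbrace{\frac{1}{N_{\mr{mb,1}}} \sum_{\mb x \in \mc X_1} \mb z}_{\text{noise average}},
\]
bound each summand separately, and then combine through the triangle inequality. Both pieces will reuse the subgaussian control on the grouping probability function $h(s)$ that drives the proof of Theorem~\ref{thm:landmark-accuracy}; the corollary amounts to re-running that analysis with $\mb q^{0}_\natural$ (rather than the closest point on $\mc M$) as the reference.

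For the noise average, I would first condition on the clean latent points $\{\mb x_{i,\natural}\}$ of the accepted samples. Given these, the accepted noise vectors are independent, each distributed as $\mc N(\mb 0,\sigma^2\mb I)$ truncated to $\{\mb z : \mb x_{i,\natural}+\mb z \in B(\mb q^{0},R_1)\}$. The paper's bounds on $h(s)$ pin down the conditional mean of each $\mb z_i$ (which points weakly toward $\mb q^{0}-\mb x_{i,\natural}$ with magnitude proportional to $\sigma$) and the conditional covariance (subgaussian at scale $\sigma^2$). A Bernstein/Hoeffding-type inequality then yields $\|\bar{\mb z}\|^2 \lesssim \sigma^2 D/N_{\mr{mb,1}}$ plus a controlled deterministic drift; substituting the prescribed $N_{\mr{mb,1}}$ produces the $\sigma\bigl(Dd(\kappa\,\mr{diam}(\mc M)+\log D)\bigr)^{1/4}$ contribution in \eqref{eqn:signal-accuracy}.

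For the clean bias, acceptance forces $\|\mb x_\natural-\mb q^{0}\|\le R_1+\|\mb z\|=O(\sigma\sqrt D)$ on a high-probability event, and the reach estimate $\|\mb q^{0}-\mb q^{0}_\natural\|\lesssim \sigma\sqrt D$ places each accepted $\mb x_\natural$ within $O(\sigma\sqrt D)$ of $\mb q^{0}_\natural$. Decomposing $\mb x_\natural-\mb q^{0}_\natural = \mb u_i+\mb v_i$ with $\mb u_i\in T_{\mb q^{0}_\natural}\mc M$ and $\mb v_i$ normal, the extrinsic-curvature bound gives $\|\mb v_i\|\lesssim \kappa\|\mb u_i\|^2\lesssim \kappa\sigma^2 D$, while the tangent components lie in a $d$-dimensional subspace. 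Concentrating the tangent contribution over $N_{\mr{mb,1}}$ samples via the clean-sample grouping-probability estimates produces a squared error of order $\sigma^2\bar\kappa\sqrt{Dd}$ for this bias, which yields the $\sigma\sqrt{\bar\kappa}(Dd)^{1/4}$ term after taking the square root.

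The main obstacle will be decoupling the conditional dependence between $\mb x_\natural$ and $\mb z$ induced by the acceptance event: the two summands above are not independent, so a careful two-stage conditioning (first on $\{\mb x_{i,\natural}\}$, then on $\{\mb z_i\}$) is needed, and one must confirm that the normal-direction drifts, which are not averaged away, stay inside the $\sigma^2\sqrt{Dd}$ budget implicit in the claim. The hypothesis $D^{1/12}\ge C_1\bar\kappa^2 d$ from Theorem~\ref{thm:landmark-accuracy} is exactly what keeps those normal drifts in check. Once this bookkeeping is in place, standard subgaussian concentration yields the probability guarantee $1-3e^{-C_1 d}$ stated in the corollary.
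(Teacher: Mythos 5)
Your top-level decomposition (clean bias plus noise average, triangle inequality) is the same as the paper's, and your plan for the noise average (conditional mean controlled via the grouping-probability function $h$, then a vector Hoeffding/Bernstein step) is in the right spirit; note, however, that in the paper this term is only $O\bigl(\sigma\sqrt{d(\kappa\,\mathrm{diam}(\mc M)+\log D)}\bigr)$ and is absorbed into the final bound -- the $\sigma\bigl(Dd(\kappa\,\mathrm{diam}(\mc M)+\log D)\bigr)^{1/4}$ term does not come from the noise average but from the signal bias.

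The genuine gap is in your clean-bias argument. Acceptance only gives $\|\mb x_\natural-\mb q^0\|\lesssim \sigma\sqrt D$, so in your tangent/normal splitting at $\mb q^0_\natural$ each tangential component can be of size $\sigma\sqrt D$, and your normal-component bound $\kappa\|\mb u_i\|^2\lesssim\kappa\sigma^2 D\approx\sigma\sqrt D$ already exceeds the claimed error by itself; moreover, averaging the tangential components only helps if you prove cancellation, i.e.\ (approximate) symmetry of the conditional distribution of accepted clean points around $\mb q^0_\natural$, which the paper deliberately does not attempt (it is flagged in the discussion as a possible future sharpening) and which your sketch asserts rather than derives. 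The actual mechanism giving the $(Dd)^{1/4}$ scaling is finer: with the prescribed $R_1$, the Pythagorean offset $s_{\star,\parallel}^2=s_\star^2-d^2(\mb q^0,\mc M)$ is only of order $\bar\kappa\,\sigma^2\sqrt{Dd}$, and Lemma \ref{lem:expected-signal-distance} (near/far decomposition over $\mc M$, subgaussian tails of $\phi\ast h$, and the volume-ratio bound of Lemma \ref{lem:vol-ratio}) yields the per-sample estimate
\begin{equation}
\bb E\Bigl[d_{\mc M}^2(\mb x_\natural,\mb q^0_\natural)\;\Big|\;\|\mb x-\mb q^0\|\le R_1\Bigr]\;\lesssim\;\bar\kappa\,\sigma^2\sqrt{Dd}+\sigma^2\sqrt{Dd\bigl(\kappa\,\mathrm{diam}(\mc M)+\log D\bigr)},
\end{equation}
after which Bernstein controls the empirical average of $d_{\mc M}^2$, and the corollary follows from $\bigl\|\tfrac1N\sum(\mb x_\natural-\mb q^0_\natural)\bigr\|\le\tfrac1N\sum\|\mb x_\natural-\mb q^0_\natural\|\le\tfrac1N\sum d_{\mc M}(\mb x_\natural,\mb q^0_\natural)$ together with Cauchy--Schwarz, $\tfrac1N\sum d_{\mc M}\le\bigl(\tfrac1N\sum d_{\mc M}^2\bigr)^{1/2}$, which is exactly where the fourth roots appear. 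Your sketch name-drops the ``clean-sample grouping-probability estimates,'' but the geometry you actually invoke (crude $O(\sigma\sqrt D)$ localization plus tangent/normal splitting plus an unproved concentration claim) would stall at $O(\sigma\sqrt D)$; to close the argument you must use the sharp conditional intrinsic-distance bound above (or prove the symmetry/cancellation statement, which is a substantially harder route).
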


This corollary allows us to use the local average of noisy $\mb x$ around ${\mb q^{0}}$ as an accurate approximation for $\mb q^{0}_{\natural}$ (and hence for $\mb x_{1,\natural}$, which lies within distance $O(\sigma \sqrt{d})$ of $\mb q^0_\natural$), and hence the local average can be used in many downstream applications. For instance, to estimate pairwise distances between two clean signals from $\mc M$, $\mb q_{i,\natural}$ and $\mb q_{j,\natural}$, we simply compute $
\bigl\|\tfrac1{N_{\rm mb}}\sum_{\ell=1}^{N_{\rm mb}}\mb x_{i,\ell}\;-\;\tfrac1{N_{\rm mb}}\sum_{\ell=1}^{N_{\rm mb}}\mb x_{j,\ell}\bigr\|,
$ where each average is taken over the noisy minibatch grouped to $\mathbf q_i$ and $\mathbf q_j$, respectively. Our result implies that 
\begin{equation} \label{eqn:pairwise-distance} 
   \left| \bigl\|\tfrac1{N_{\rm mb}}\sum_{\ell=1}^{N_{\rm mb}}\mb x_{i,\ell}\;-\;\tfrac1{N_{\rm mb}}\sum_{\ell=1}^{N_{\rm mb}}\mb x_{j,\ell}\bigr\| - \| \mb q_{i,\natural} - \mb q_{j,\natural} \| \right|
\end{equation}
is bounded by $\sigma d^{1/4} D^{1/4} (\log D)^{1/4}$. In high dimensions, this is signficantly smaller than the distance $O(\sigma D^{1/2})$ between a pair of noisy points, which in turn has implications on the accuracy of algorithms that depend on pairwise distances, including  embedding methods such as isomap \cite{tenenbaum2000global} and spectral embedding  \cite{belkin2008towards}.

This bound can be compared with existing work on pairwise distance estimation. For example, \cite{trillos2019local} studies pairwise distance estimation with small bounded noise\footnote{With additional restrictions that include constraining the noise to the normal direction at the clean point $\mb x_\natural$.} ($\|\mb z\| \le \vartheta \le 1)$ and concluded an error bound of $C_{\mc M} \vartheta^{3/2}$, where the constant $C_{\mc M}$ depends on properties of the manifold $\mc M$ and clean data distribution. In our setting, the typical noise norm is $\sigma \sqrt{D}$; setting $\vartheta = \sigma \sqrt{D}$, one obtains a bound of order $\sigma^{3/2} D^{3/4}$. In the large noise regime $\sigma D^{1/2} = \Omega(1)$, \eqref{eqn:signal-accuracy}-\eqref{eqn:pairwise-distance} improves by a multiplicative factor of $D^{1/4}$. 

Another consequence of \eqref{eqn:pairwise-distance} is for the construction of nets for the manifold $\mc M$. We can construct an $O(\sigma \sqrt{D})$ net by collecting a large number of noisy samples; replacing these points with local averages yields an $O(\sigma (d D \log D)^{1/4})$ net, yielding a discretized approximation of $\mc M$ over which one can perform downstream tasks, including the denoising of new samples \cite{hein2006manifold,wang2025fast}. 

\section{Framework of Analysis and Proof Ideas}

\subsection{Decomposition of Landmark Error into Noise Error and Signal Error}

In this section, we decribe the framework of our analysis to bound the distance between the minibatch average and manifold $\mc M$. We bound the error of local averaging by separately analyzing the contributions from clean signals and noise. Each sample in the $\ell$-th minibatch $\mc X_{\ell}$ can be expressed as $\mb x = \mb x_\natural + \mb z$, and so 
\begin{equation}
\frac{1}{N_{\mr{mb},\ell}} \sum_{\mb x \in \mc X_{\ell}} \mb x \;=\;  \underset{\color{red} \text{\bf Signal average}}{ \frac{1}{N_{\mr{mb},\ell}} \sum_{\mb x = \mb x_\natural + \mb z \in \mc X_{\ell}} \mb x_\natural } \;+\; \underset{\color{blue}\text{\bf Noise average}}{  \frac{1}{N_{\mr{mb},\ell}} \sum_{\mb x = \mb x_\natural + \mb z \in \mc X_{\ell}} \mb z }. \label{eqn:decomp}
\end{equation} 
For the signal average, lemma \ref{lem:signal-avg} uses a first-order expansion to 
show that \[d\left(  \frac{1}{N_{\mr{mb},\ell}} \sum_{\mb x = \mb x_\natural + \mb z \in \mc X_{\ell}} \mb x_\natural, \; \mc M \right) \le \frac{\kappa}{N_{\mr{mb},\ell}} \sum_{\mb x = \mb x_\natural + \mb z \in \mc X_{\ell}} d^2_{\mc M}( \mb x_\natural, \mb q_{\natural}),\]
where
$
\mb q_{\natural} = \mc P_{\mc M}[ \mb q ]
$ is the nearest point on the manifold $\mc M$ to the landmark $\mb q.$ A carefully chosen acceptance radius $R$, together with tools described in the next subsection, controls $d^2_{\mc M}( \mb x_\natural, \mb q_{\natural} ).$ The noise average involves the average of independent gaussian vectors, but it has non-zero conditional mean since we are considering only the $\mb x_{i,\natural} + \mb z_i$ within extrinsic distance $R$ from $\mb q$. We control its expectation by developing a new technical tool, the subgaussian bound for the grouping probability function, which we introduce below in Section \ref{section: introduce h}. During the second round of averaging, our analysis shows the expectation of {\em squared intrinsic distance} satisfies
\begin{equation}
\bb E\Bigl[ \, d_{\mc M}^2(\mb x_\natural,\mb {q^{1}}_{\natural}) \mid \| \mb x - \mb {q^{1}}  \| \le R_2 \, \Bigr] \le C\sigma^2\sqrt{Dd \times \left(\kappa \mr{diam}(\mc M) + \log(D)\right)},
\end{equation} while 
the expectation of {\em noise average} satisfies 
\begin{equation}
\left\| \bb E\Bigl[ \, \mb z \mid \| \mb x - \mb {q^{1}}  \| \le R_2 \Bigr] \right\| \:\le\; C \frac{\sigma}{D^{1/8}}\sqrt{d\times\left(\kappa\, \mr{diam}(\mc M) + \log(D)\right)}, 
\end{equation}
where in both equations $\mb q^1$ is the local average of first round after perturbation. For finite sample bounds, we apply standard Hoeffding and Bernstein inequalities, and we eventually use the triangle inequality to combine the signal and noise error to bound $d\bigl(\mb q^2,\mathcal M\bigr).$  

\subsection{Phase Transition and Subgaussian Concentration for Grouping Probability $h(s)$ }\label{section: introduce h}

\begin{figure}[ht]
\vskip -0.1in
\begin{center}
\begin{subfigure}{0.48\columnwidth}
    \centering
    \begin{tikzpicture}
    \node at (0,0) {    \includegraphics[width=\linewidth]{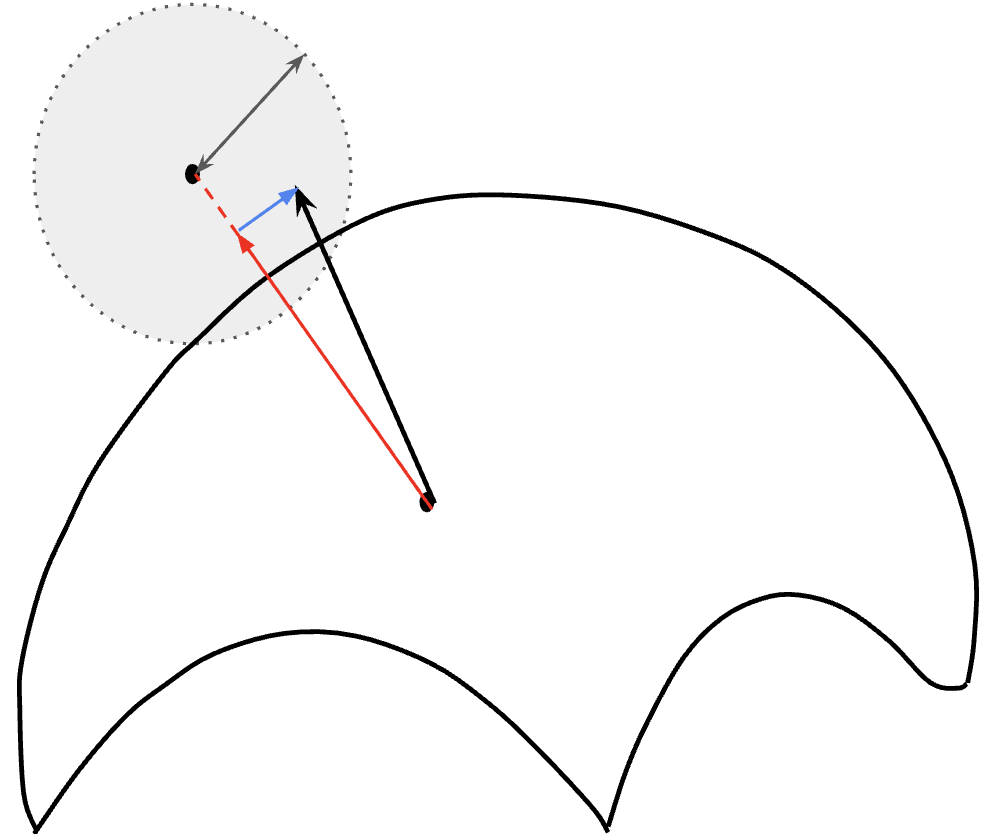}}; 
    \node at (-.2,-.8) {$\mb x_\natural$};
    \node at (1.9,1.5) {$\mc M$};
    \node at (-2.3,1.7) {$\mb q$};
    \node at (-1.8,2.2) {$R$};
    \node at (-.65,.5) {$\mb z$};
    \node at (-1.3,.1) {\footnotesize \color{red}$\mb z_\parallel$};
    \node at (-1.6,1.65) {\color{blue} \footnotesize $\mb z_\perp$};
    \end{tikzpicture}
    \caption{\textbf{Geometric Meaning of $h(s)$}: after decomposing noise $\mb z$ into independent {\color{red} $\mb z_\parallel$} in the direction of $\mb q - \mb x_\natural$ and {\color{blue}$\mb z_\perp$} perpendicular to it, $h(s)$ describes the grouping probability for a fixed {\color{red} $\mb z_\parallel$}.}
    \label{fig:h(s) on the manifold}
\end{subfigure}
\hfill
\begin{subfigure}{0.48\columnwidth}
    \centering
    \includegraphics[width=\linewidth]{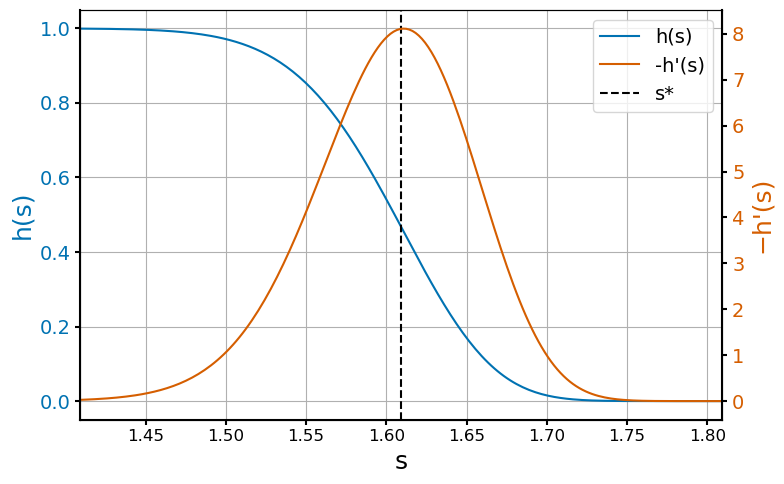}
    \caption{\textbf{Phase Transition of $h(s)$ and $-h'(s)$} with $D = 128, \sigma = 0.1, R^2 = 3\sigma^2D = 3.84$.}
    \label{fig:h_and_hprime_values}
\end{subfigure}
\hfill
\caption{Motivation of Grouping Probability $h(s)$}
\label{fig:behavior_of_h_and_hprime}
\end{center}
\end{figure}

A natural question in our analysis is, for a fixed $\mb x_{i,\natural} \in \mc M$, what is the probability that it can be grouped to $\mb q$? Formally, we need to find 
$$\bb P_{\mb z_i \sim_{\mr{iid}} \mc N(0,\sigma^2 I_D)}\left[\| \mb x_{i,\natural} + \mb z_i- \mb q \| \le R\right],$$ and we will use $\mc E_i$ to denote this random event. We introduce the following function to help us answer this question:
\begin{eqnarray}
    h(s) &=& \bb P_{g_2,\dots, g_D \sim_{\mr{iid}} \mc N(0,1)} \left[ g_2^2 + \dots + g_D^2 \le \frac{R^2 - s^2  }{\sigma^2} \right]. 
\end{eqnarray}
Given the rotational symmetry of $\mb z_i$, we can align the coordinate system such that $\mb e_1$ is in the direction of $\mb x_{i,\natural}- \mb q$. Then, as shown in figure \ref{fig:h(s) on the manifold}, $h(s)$ describes the probability of $\mb x_i = \mb x_{i,\natural} + \mb z_i$ being grouped into $\mb q$, given \[s = -\ z_i^{(1)} + \|\mb x_{i,\natural} - \mb q\|.\]
We note that $z_i^{(1)}$ ( {\color{red} $\mb z_{\parallel}$} in \ref{fig:h(s) on the manifold}) is independent from $z_i^{(2)},\cdots z_i^{(D)}$, ( {\color{blue} $\mb z_{\perp}$} in \ref{fig:h(s) on the manifold}). This allows us to decompose the noise term and represent $\bb P[\mc E_i]$ as a convolution between a gaussian scalar $z_i^{(1)}$ and $h(s).$ This representation naturally motivates us to derive subgaussian bounds for $h$. Up to reparameterization, $h(s)$ is the cumulative distribution function of a $(D-1)$-dimensional chi-square random variable, and as shown in fig \ref{fig:h_and_hprime_values}, it exhibits a clear phase transition around $s_\star$ where 
\begin{eqnarray}
    \frac{R^2- s_\star^2}{2 \sigma^2} \approx \frac{D-3}{2}.
\end{eqnarray}

This property of $h(s)$ motivates us to develop a subgaussian bound for it around $s_\star.$ Geometrically, $s_\star^2 \approx R^2 - \sigma^2(D-3) $ captures the typical distance between the $\mb q$ and $\mb x_{i,\natural}$, because in high ambient dimensions, $\mb z$ is close to perpendicular to $\mb x_i - \mb q$, which gives $\|\mb x_{i,\natural}-\mb q\|_2^2 \approx \|\mb x_i - \mb q\|_2^2 - \|\mb z_i\|_2^2$. Further,  convolving these tails with the Gaussian kernel preserves subgaussian decay, yielding matching upper and lower bounds for \(\phi*h\) near \(s_\star\). 

Translating this approach back to the language of the manifold, those $\mb x_{i,\natural} \in M$ which satisfies $\|\mb x_{i,\natural} - \mb q\|_2^2 \approx R^2-\sigma^2(D-3)$ contributes the most to our estimation of $\bb P[\mc E_i]$, and the contribution from the rest of $\mc M$ decays at the subgaussian rates.

\section{Discussion, Limitation, and Future Work}

In this work, we analyze the accuracy and sample complexity of local averaging for manifold estimation, providing the first such theoretical analysis in the moderate-to-high noise regime ($\sigma \sqrt{D} \lesssim \tau$) and achieving a final accuracy of $O\left(\sigma\sqrt{d\left(1 + \frac{\kappa\:\mr{diam}(\mc M)}{\log(D)}\right)}\right)$. The key innovation lies in developing a subgaussian bound on the probability that a noisy sample belongs to the ball $B(\mb q, R)$ and related probabilities. This tool enables rigorous analysis beyond the low-noise assumptions made in prior manifold estimation studies -- for example, \cite{yao2023manifold, yao2025manifold} implicitly requires $\sigma e^{cD/d} < 1$. 

As an additional contribution, we also bound the deviation of the local average of noisy samples around $\mb q$ from $\mb q_\natural,$ the projection of $\mb q$ onto the manifold. By setting the landmark itself to a noisy point, $\mb q^0 = \mb x_1 = \mb x_{1,\natural} + \mb z_1,$ since $\| \mb q^0_\natural - \mb x_{1,\natural} \| \lesssim \sigma \sqrt{d}$, and our result also controls the accuracy in estimating $\mb x_{1,\natural}.$ Compared to \cite{trillos2019local}, our results apply to a broader noise regime and exhibit improved scaling with respect to the ambient dimension. Together, these results contribute to a more comprehensive theoretical understanding of the simple and fundamental local averaging methods.

Since the most interesting applications of manifold estimation and denoising involve large noise, an important direction for future work is to consider noise levels $\sigma \sqrt{D} \gg \tau$. This regime is largely unstudied (with the exception of the  \cite{fefferman2023fitting}). Although $\sigma \sqrt{D} \le \tau$ is often regarded in the literature as a critical barrier, there is reason to believe that the fundamental limit lies beyond this threshold. Intuitively, this limit is set not by comparing the norm of the noise to the reach, but rather by comparing the maximum component of the noise along secant directions.\footnote{Fundamental limit is $\mathbb{E}[\Sigma] < \tau$, where $\Sigma = \sup_{\mb v\in S}\langle \mb v, \mb z\rangle  > \tau$, $S = \left \{\frac{\mb x-\mb x'}{\|\mb x-\mb x' \|}, \mb x\ne \mb x'\in \mc M \right \}$ is the secant bundle and $\mb z$ is the noise. Projecting noisy samples back onto the manifold may become unreliable beyond this limit.} As most of the noise energy lies orthogonal to the secant direction, it follows that $\mathbb{E}[\Sigma] \ll \sigma \sqrt{D}$ when $d \ll D$. We believe that the subgaussian bounds developed here will be helpful for approaching the critical  $\mathbb{E}[\Sigma] \approx \tau$ limit, but it would require a finer-grained accounting for the global geometry of the manifold beyond the near-far decomposition used in our proof. An interesting related question is whether {\em robust} local averaging methods will be required for accurate estimation in this regime.

Another promising direction for future work is to leverage our analysis to analyze more sophisticated manifold estimators. For example, our analysis can easily extend beyond single-point analysis to sequences of noisy samples, making it naturally adaptable to streaming data.  Our analysis can also support analysis of stochastic gradient descent (SGD). Specifically, when minimizing the population loss $\bb E_{\mb x}\Bigl[ \, \| \mb x - \mb q \|_2^2 \mid \| \mb x - \mb q \| \le R \, \Bigr]$ using SGD, the negative gradient -- computed using the subset of samples within the ball -- points in the direction of $\bb E_{\mb x}\Bigl[ \, \mb x \mid \| \mb x - \mb q \| \le R \, \Bigr]$. In our analysis, we provide a tight bound on empirical conditional mean  -- that is, the average of points within the ball $B(\mb q, R)$. We believe this will facilitate a more controlled analysis of gradient descent in the context of SGD.

One limitation of this work is that it does not provide analysis on geometric quantities such as tangent space, curvature, or the smoothness of the recovered manifold, which are often the focus of many other related works in manifold estimation. However, we believe our technical tools here can be generalized to analyze such quantities.
Extending the analysis beyond two rounds of local averaging is also a natural direction to explore. Doing so may require new technical innovations, as the subgaussian bound in our analyses breaks down when the landmark $\mb q$ is very close to the manifold, which occurs after two iterations in our setting. Additionally, in Corollary \ref{cor: signal estimation via local averaging}, our analysis could be sharpened by accounting for cancellations in the signal average, due to (approximate) symmetry around $\mb q_\natural$. 

\section*{Acknowledgement} 

The authors gratefully acknowledge support from the National Science Foundation, through the grant NSF 2112085. The authors thank Szabolcs and Zsuzsa Marka (Columbia University) for inspiring discussions around manifold denoising and astrophysics, and Tim Wang (Columbia / Walmart Labs) for discussions on manifold estimation and denoiing. Shiyu Wang also gratefully acknowledges support from the Chiang Chen Industrial fellowship.

\bibliographystyle{unsrt}
\bibliography{refs}

\newpage
\appendix
\onecolumn
\section{Analyzing Landmarking} 

We are motivated by the following minibatch gradient algorithm for landmarking: 

\begin{algorithm}[h]
\caption{$\mathtt{MinibatchSGDLandmarking}$}
    \begin{algorithmic}
    \State {\bf Input:} sequence of noisy samples $\mb x_1, \mb x_2, \mb x_3, \dots $, minibatch sizes $N_{\mr{mb,1}},N_{\mr{mb,2}},\cdots$, sequence of radii $R_1, R_2, \dots$. 
    \State $\mb q_1 \leftarrow \mb x_1$ \Comment{Initialize with the first sample}
    \State $N_1 \leftarrow 1$ \Comment{Number of points assigned to this landmark}
    \State $i \leftarrow 2$ \Comment{Current sample index} 
    \For{minibatch number $\ell = 1, 2, \dots$}
    \State $\mc X_\ell \leftarrow \emptyset$. 
    \While{$|\mc X_\ell| < N_{\mr{mb,\ell}}$} \Comment{Collect minibatch}
    \If{$\| \mb x_i - \mb q_1 \| \le R_\ell$}
    \State $\mc X_{\ell} \leftarrow \mc X_{\ell} \cup \{ \mb x_i \} $
    \EndIf
    \State $i \leftarrow i + 1$
    \EndWhile
    \State $\mb q_1 \leftarrow \frac{N_1}{N_1+N_{\mr{mb,\ell}}} \mb q_1 + \frac{1}{N_1+N_{\mr{mb,\ell}}} \sum_{\mb x \in \mc X_{\ell}} \mb x.$ \Comment{Update landmark with minibatch average}
    \State $N_1 \leftarrow N_1 + N_{\mr{mb,\ell}} $ \Comment{Update number of points assigned to this landmark}
    \EndFor
    \end{algorithmic}
\end{algorithm}

\vspace{.1in}

\noindent We analyze the following simplified version of the above method (the same algorithm as \ref{alg:sl}). This simplified version has the following features (i) it uses two minibatch steps, and (ii) it injects noise after the first minibatch step, in a similar fashion to stochastic gradient Langevin dynamics (SGLD),\footnote{Here, noise injection plays a purely technical role -- it puts a lower bound on the distance between first-round average and the manifold, which is needed to apply our results from subgaussian bounds} (iii) it updates the landmark to the average of current minibatch alone, with no mixing of previous estimates. 

\vspace{.1in} 

\begin{algorithm}[h]
\caption{$\mathtt{SimplifiedLandmarking}$} \label{alg:sl--appendix}
    \begin{algorithmic}
    \State {\bf Input:} sequence of noisy samples $\mb x_1, \mb x_2, \mb x_3, \dots $, minibatch size $N_{\mr{mb,1}},N_{\mr{mb,2}}$, radii $R_1, R_2$.
    \State $\mb q_1^{(0)} \leftarrow \mb x_1$. \Comment{Initialize with the first sample}
    \State $\mc X_1 \leftarrow \emptyset$.
    \State $i \leftarrow 2$.
    \Comment{Collect first minibatch}
    \While{$|\mc X_1| < N_{\mr{mb,1}}$} 
    \If{$\| \mb x_i - \mb q_1^{(0)} \| \le R_1$}
    \State $\mc X_{1} \leftarrow \mc X_{1} \cup \{ \mb x_i \} $
    \EndIf
    \State $i \leftarrow i + 1$
    \EndWhile
    \State $\mb \vartheta \sim_{\mr{iid}} \mc N(0,\sigma^2 D^{-1/4})$
    \State $\mb q_1^{(1)} \leftarrow \tfrac{1}{N_{\mr{mb,1}}} \sum_{\mb x \in \mc X_{1}} \mb x + \mb \vartheta$ \Comment{Update landmark}
    \State $\mc X_2 \leftarrow \emptyset$. \Comment{Collect second minibatch}
    \While{$|\mc X_2| < N_{\mr{mb,2}}$} 
    \If{$\| \mb x_i - \mb q_1^{(1)} \| \le R_2$}
    \State $\mc X_{2} \leftarrow \mc X_{2} \cup \{ \mb x_i \} $
    \EndIf
    \State $i \leftarrow i + 1$
    \EndWhile  
    \State $\mb q_1^{(2)} \leftarrow \tfrac{1}{N_{\mr{mb,2}}} \sum_{\mb x \in \mc X_{2}} \mb x$ \Comment{Final landmark}
    \State {\bf Output}: $\wh{\mb q} = \mb q_1^{(2)}$. 
    \end{algorithmic}
\end{algorithm}

\vspace{.025in} 

\noindent We analyze this landmarking procedure, arguing that with appropriate choices of $R_1, R_2$,  it produces accurate landmarks, satisfying the following bound:

\begin{theorem}[Landmarking Accuracy – Two-Stage Refinement]\label{thm:landmark-accuracy_appendix}
Let $\mathcal M\subset\mathbb R^D$ be a compact, connected, geodesically complete $d$-dimensional submanifold with extrinsic curvature bounded by~$\kappa$, intrinsic diameter 
\[
\mathrm{diam}(\mathcal M)=\sup_{\mb x,\mb y\in\mathcal M}d_{\mathcal M}(\mb x,\mb y),
\]
and set $\bar\kappa=\max\{1,\kappa\}$.  Assume
\[
\kappa\,\mathrm{diam}(\mathcal M)\ge1,\qquad
D^{1/12}\ge C_1\,\max\{\bar\kappa^2\,d,\;\kappa\,\mathrm{diam}(\mathcal M)\,d\},
\]
and the noise level~$\sigma$ satisfies
\[
\sigma\,D\ge1,\quad
\sigma\,D^{20}\ge\frac1\kappa,\quad
\sigma\sqrt{D}\le c_1\,\tau,
\]
where $\tau=\mathrm{reach}(\mathcal M)$.

\smallskip\noindent\textbf{Stage I (Coarse Localization).}  
Run Algorithm~\ref{alg:sl} for one minibatch with
\[
N_{\mathrm{mb},1}\;\ge\;C_2\,
\frac{\log(D)\,\mathrm{diam}(\mathcal M)^2}
     {\sigma^2\bigl(\log(D)+\kappa\,\mathrm{diam}(\mathcal M)\bigr)},
\qquad
R_1^2
=\sigma^2\,(2D - d - 3)
+C_3\,\bar\kappa^2\,\sigma^2\,d
+3C_3\,\bar\kappa\,\sigma^2\,\sqrt{D\,d}.
\]
Then with probability at least $1 - 5e^{-c_2d}$,
\[
d\bigl(\mb q^1,\mathcal M\bigr)
\le
C_4\,\bar\kappa\,\sigma\sqrt{d}
+
C_4\,\sigma\sqrt{\,d\,\bigl(\kappa\,\mathrm{diam}(\mathcal M)+\log(D)\bigr)}.
\]

\smallskip\noindent\textbf{Stage II (Fine Refinement).}  
If in addition 
\[
\frac1\kappa\;\ge\;C_5\,\sigma\sqrt{D\log(D)},
\]
run a second minibatch with
\[
N_{\mathrm{mb},2}
\;\ge\;
C_6\,
\frac{\log^2(D)\,\mathrm{diam}(\mathcal M)^2}
     {\sigma^2\bigl(\log(D)+\kappa\,\mathrm{diam}(\mathcal M)\bigr)},
     \qquad
R_2^2
=\sigma^2\bigl(D - 3 + D^{3/4} + 2C_7\,D^{5/12}\bigr).
\]
Then with probability at least $1 - 9e^{-c_2d}$, the final landmark $\wh{\mb q}$ satisfies
\[
d\bigl(\wh{\mb q},\mathcal M\bigr)
\;\le\;
C\,\sigma\,\sqrt{\,d\,\Bigl(\tfrac{\kappa\,\mathrm{diam}(\mathcal M)}{\log(D)}+1\Bigr)}.
\]
\end{theorem}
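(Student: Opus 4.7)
The plan is to apply the decomposition framework of Section~5 separately to each of the two minibatch rounds, reducing control of $d(\mb q^\ell,\mc M)$ to three sub-problems: (a) bounding the conditional expected squared intrinsic distance $\bb E[d_{\mc M}^2(\mb x_\natural,\mb q^{\ell-1}_\natural)\mid \mc E_i]$ between a grouped clean signal and the base point's projection, which together with the curvature lemma for the signal average controls the signal bias; (b) bounding the conditional mean $\|\bb E[\mb z\mid \mc E_i]\|$ of the Gaussian noise given the grouping event $\mc E_i=\{\|\mb x_i-\mb q^{\ell-1}\|\le R_\ell\}$, which controls the noise bias; and (c) quantifying the finite-sample deviations of both averages by Hoeffding/Bernstein. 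The three pieces will be glued together via the triangle inequality. Both (a) and (b) will be attacked using the subgaussian bounds on the grouping probability $h(s)$ from Section~5.2, which localize the effective contribution of a clean point $\mb x_\natural\in\mc M$ to an annulus of extrinsic radius $\approx s_\star=\sqrt{R_\ell^2-\sigma^2(D-3)}$ around the current landmark.

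For Stage~I, I would initialize from $\mb q^0=\mb x_1$ at extrinsic distance $\Theta(\sigma\sqrt{D})$ from $\mc M$. The radius $R_1^2\approx\sigma^2(2D-d-3)$ is picked so that, after centering, $\|\mb x_i-\mb q^0\|^2\approx\|\mb z_i\|^2+\|\mb z_1\|^2\approx 2\sigma^2 D$; the additive $\bar\kappa$-corrections absorb the bending of $\mc M$ and ensure a full tangent slab of radius $O(\bar\kappa\sigma\sqrt{d})$ around $\mb x_{1,\natural}$ is captured. Aligning coordinates so that $\mb e_1\propto \mb q^0-\mb x_{i,\natural}$, the convolution identity $\bb P[\mc E_i]=(\phi*h)(\|\mb q^0-\mb x_{i,\natural}\|)$ reduces both (a) and (b) to integrals against $\phi*h$, which by Section~5.2 is concentrated in a window of width $O(\sigma)$ around $s_\star$. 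Integrating this weight against the uniform clean density on $\mc M$, using $\kappa\diam(\mc M)\ge 1$ together with the volume-growth estimates of geodesic balls, yields $\bb E[d_{\mc M}^2(\mb x_\natural,\mb q^0_\natural)\mid \mc E_i]\lesssim \sigma^2 d(\kappa\diam(\mc M)+\log D)$ and a parallel bound on $\|\bb E[\mb z\mid \mc E_i]\|$. Choosing $N_{\mr{mb},1}$ as in the theorem makes the Hoeffding deviation dominated by the bias, and the triangle inequality delivers the Stage~I bound.

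For Stage~II, the landmark $\mb q^1$ is already close to $\mc M$, but the injected perturbation $\mb\vartheta\sim\mc N(\mb 0,\sigma^2 D^{-1/4}\mb I)$ is essential: it keeps $d(\mb q^1,\mc M)\gtrsim \sigma D^{3/8}$ with high probability, which is needed because the subgaussian control of $h$ degrades when the landmark sits exactly on $\mc M$. The refined radius $R_2^2=\sigma^2(D-3+D^{3/4}+2C_7D^{5/12})$ is tuned so that $s_\star^2\approx D^{3/4}\sigma^2$ matches the inherited squared distance $\|\mb q^1-\mb q^1_\natural\|^2$. Rerunning the convolution argument at this finer scale, the phase-transition window now sits at a much smaller offset from $\mc M$, so the conditional signal bias shrinks to $\sigma^2\sqrt{Dd(\kappa\diam(\mc M)+\log D)}$ and the conditional noise bias shrinks to $O(\sigma D^{-1/8}\sqrt{d(\kappa\diam(\mc M)+\log D)})$. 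Combining these with Bernstein-type finite-sample fluctuations of order $\sigma\sqrt{d/(N_{\mr{mb},2}\log D)}$ and applying the triangle inequality along the two stages gives the claimed $O(\sigma\sqrt{d(1+\kappa\diam(\mc M)/\log D)})$ accuracy.

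The hardest step will be the tight control of the conditional noise mean $\|\bb E[\mb z\mid\mc E_i]\|$ in the regime where $\mb q^1$ is close to, but not on, the manifold. Because the direction $\mb q^1-\mb x_\natural$ varies with $\mb x_\natural$, the bias in the grouped noise is essentially dictated by how sharply the convolution $\phi*h$ transitions across the window around $s_\star$, and extracting the $D^{-1/8}$ improvement in Stage~II requires tracking how the small gap $d(\mb q^1,\mc M)$ interacts with that window. A secondary difficulty is integrating the resulting Gaussian-like tails over $\mc M$ against the uniform density while properly accounting for far-away contributions at intrinsic distance $\Theta(\diam(\mc M))$ from $\mb q^1_\natural$. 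The scaling conditions $\sigma D^{20}\ge 1/\kappa$ and $1/\kappa\ge C_5\sigma\sqrt{D\log D}$ are precisely what makes this balance close, and verifying that all constants compose correctly across both stages is the main bookkeeping task.
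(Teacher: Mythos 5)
Your overall architecture matches the paper's proof: decompose each minibatch average into signal and noise parts, bound the conditional signal spread and the conditional noise mean via the phase-transition/subgaussian analysis of $\phi\ast h$ (the paper's Theorem \ref{thm:noise-size} and Lemma \ref{lem:expected-signal-distance}), convert the averaged squared intrinsic distance into a distance to $\mc M$ via Lemma \ref{lem:signal-avg}, add Hoeffding/Bernstein finite-sample fluctuations, and chain the two rounds with the triangle inequality, using the injected noise to keep $\mb q^1$ away from $\mc M$ and tuning $R_2$ to the resulting offset.

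There is, however, a genuine gap in your Stage II setup. You assert that the injection keeps $d(\mb q^1,\mc M)\gtrsim\sigma D^{3/8}$ and that $R_2$ "matches the inherited squared distance," but you give no mechanism, and this step is not a triviality: adding $\mb\vartheta$ to $\bar{\mb x}_1$ also moves the nearest point of $\mc M$, so $d(\mb q^1,\mc M)\approx\|\mb\vartheta\|$ does not follow from the triangle inequality alone. The paper needs a two-sided bound $\sigma^2D^{3/4}-C\sigma^2D^{5/12}\le d^2(\mb q^1,\mc M)\le\sigma^2D^{3/4}+C\sigma^2D^{5/12}$, with the error matched exactly to the $2C_7D^{5/12}$ slack in $R_2^2$, so that $s_{\star,\parallel}^2\asymp\sigma^2D^{5/12}$ is simultaneously positive (needed to invoke the noise-size theorem at all, via the upper bound on $d^2(\mb q^1,\mc M)$) and not too large (otherwise $\check s_{\star,\parallel}^2$ could be as big as $\sigma^2D^{3/4}$ and the signal term $\kappa\check s_{\star,\parallel}^2$ would overwhelm the claimed $\sigma\sqrt{d(\kappa\,\mr{diam}(\mc M)/\log D+1)}$). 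Establishing this requires controlling the motion of the projection point: the paper differentiates $t\mapsto\mc P_{\mc M}(\bar{\mb x}_1+t\mb\vartheta)$ via the shape-operator/Neumann-series formula and bounds the displacement by a supremum of $\innerprod{\mb v}{\mb\vartheta}$ over tangent vectors in a geodesic ball (a Gaussian-process estimate of order $\bar\kappa\sigma D^{-1/2}\sqrt d$), combined with $\chi^2$ concentration of $\|\mb\vartheta\|$ and the Stage I bound on $d(\bar{\mb x}_1,\mc M)$. None of this appears in your plan. A secondary, quantitative slip: your Stage I claim $\bb E[d_{\mc M}^2(\mb x_\natural,\mb q^0_\natural)\mid\mc E_i]\lesssim\sigma^2 d(\kappa\,\mr{diam}(\mc M)+\log D)$ is stronger than what the phase-transition analysis can deliver; the window of $\phi\ast h$ has width $O(\sigma)$ in $s$ but width $O(\sigma^2\sqrt D)$ in $s^2$, so the correct bound is of order $\bar\kappa\sigma^2\sqrt{Dd}+\sigma^2\sqrt{Dd(\kappa\,\mr{diam}(\mc M)+\log D)}$ (indeed larger than your Stage II figure, as it must be). This slip does not sink Stage I — after multiplying by $\kappa$ and using $\kappa\sigma\sqrt D\lesssim1$ the correct bound still yields $C\bar\kappa\sigma\sqrt d+C\sigma\sqrt{d(\kappa\,\mr{diam}(\mc M)+\log D)}$ — but it signals that the scaling of the signal term, and hence why Stage II needs the extra hypothesis $1/\kappa\ge C_5\sigma\sqrt{D\log D}$, should be tracked more carefully.
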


\begin{proof} We analyze the three steps of the algorithm:

\vspace{.1in}

\noindent {\em (a) Initialization.} From Lemma \ref{lem:noisy-point distance}, on an event $\mc E_0$ of probability at least $1 - 2 e^{-cd}$, we have 
\begin{equation}  \label{eqn:q0-distance}
    \sigma \sqrt{D-d} - C \bar\kappa \sigma \sqrt{d} \le d(\mb q_1^{(0)}, \mc M ) \le \sigma \sqrt{D-d} + C \bar\kappa \sigma \sqrt{d}. 
\end{equation}

\noindent {\em (b) First Round.} 

We observe that squaring both sides of equation \ref{eqn:q0-distance} gives 
\begin{equation}\label{eqn:sqaure of q0-distance}
-2C\bar{\kappa}\sigma^2\sqrt{Dd} \le d^2(\mb q_1^{(0)}, \mc M ) -\sigma^2(D-d)-C^2{\bar{\kappa}}^2\sigma^2d \le 2C\bar{\kappa}\sigma^2\sqrt{Dd}
\end{equation}
With our choice of 
\begin{equation}
    R_1^2 =  \sigma^2 (2D - d - 3) + C\bar{\kappa}^2\sigma^2d + 3C\bar{\kappa}\sigma^2\sqrt{Dd},
\end{equation}
we have 
\begin{equation}
    {s_\star}^2 = R_1^2 - \sigma^2(D-3) = \sigma^2 (D - d) + C\bar{\kappa}^2\sigma^2d + 3C\bar{\kappa}\sigma^2\sqrt{Dd}
\end{equation}
and 
\begin{equation}\label{eqn: bound on s star, parallel}
    C\bar{\kappa}\sigma^2\sqrt{Dd}
    \le {s_{\star,\parallel}}^2 = {s_\star}^2 - d^2(\mb q_1^{(0)}, \mc M ) 
    \le 5C\bar{\kappa}\sigma^2\sqrt{Dd}
\end{equation}

Write 
\begin{equation}
    \mc X_1 = \Bigl\{ \mb x_{1,1}, \dots, \mb x_{N_{\mr{mb},1}} \, \Bigr\} = \Bigl\{ \mb x_{1,1,\natural} + \mb z_{1,1}, \; \dots, \; \mb x_{N_{\mr{mb},1,\natural}} + \mb z_{N_{\mr{mb}},1} \, \Bigr\}
\end{equation}
We have 
\begin{eqnarray}
    \left\| \bb E\left[ \frac{1}{N_{\mr{mb}}} \sum_{\ell=1}^{N_{\mr{mb}}} \mb z_{\ell,1} \right] \right\| &\le& \left\| \bb E \left[ \mb z \, \middle | \, \mb q_1^{(0)}, \, \left\| \mb x_{\natural} + \mb z - \mb q_1^{(0)} \right\| \le R_1 \,  \right] \right\|_2.
\end{eqnarray}

We will control the $d(\mb q_1^{(1)}, \mc M )$ by using triangular inequality and breaking up the terms into noise and signal average. We will invoke Theorem \ref{thm:noise-size} for the noise average and Theorem \ref{lem:expected-signal-distance} and lemma \ref{lem:signal-avg} for the signal average. 

Theorem \ref{thm:noise-size} and \ref{lem:expected-signal-distance} expects the following conditions
\begin{align}
 \label{eq: upper bound on d(q, M) to apply noise and signal theorem}d\left(\mb q, \mc M\right) &\le \min\{\frac{c_1}{\kappa}, \frac{1}{2}\tau\} \\
 \label{eq: lower bound on s star to apply noise and signal theorem}\max\left\{C_2 \left(\log\left(D \right) + \kappa d \, \mr{diam}(\mc M) + d \log( \frac{1}{\kappa s_{\star,\parallel}} )\right)  \, \sigma^2 D^{2/3} , d^2(\mb q, \mc M )\right\}
    &\le s_\star^2 \\
    \label{eq: upper bound on s star to apply noise and signal theorem}
    s_\star^2 &\le 3 \sigma^2 D \le c \tau^2 \\ 
    \label{eq: complex bound 1 to apply signal theorem}
    \sigma \sqrt{\log \left(\frac{\mr{diam}(\mc M)} {\check{s}_{\star,\parallel}}\right) + d \times \kappa \mr{diam}(\mc M) + d\log\left(\frac{1}{\kappa s_{\star,\parallel}}\right)} &\le c_3\tau \\ 
    \label{eq: complex bound 2 to apply signal theorem}
    \check{s}_{\star,\parallel}^2 +\sqrt{\log \left(\frac{\mr{diam}(\mc M)}{\check{s}_{\star,\parallel}}\right) + d \times \kappa \mr{diam}(\mc M) + d\log\left(\frac{1}{\kappa s_{\star,\parallel}}\right)} \times \sigma^2 \sqrt{D} &\le c_4 \check{s}_{\star}^2 
\end{align}

We now verify the above conditions, 
from upper bound on $d^2(\mb q_1^{(0)}, \mc M )$ from equation \ref{eqn:sqaure of q0-distance} and our assumption that $\tau \ge C\bar\kappa\sigma\sqrt{d}, \tau \ge C\sigma\sqrt{D}$ we have 
\begin{eqnarray}
     d^2(\mb q_1^{(0)}, \mc M ) \le \sigma^2(D-d)+C^2{\bar{\kappa}}^2\sigma^2d +2C\bar{\kappa}\sigma^2\sqrt{Dd}
    \le c \tau^2
\end{eqnarray} so condition \ref {eq: upper bound on d(q, M) to apply noise and signal theorem} is satisfied. Next, by assumption $\sigma\,D^{20}\ge\frac1\kappa$ and bounds on $s_{\star,\parallel}$ from equation \ref{eqn: bound on s star, parallel}, we have
\begin{equation}\label{eqn: round 1 upper bound of 1 over kappa s star parallel}
    d \log( \frac{1}{\kappa s_{\star,\parallel}} ) \le d \log( \frac{\sigma D^{20}}{C\sigma D^{1/4}d^{1/4}} ) = Cd\log(D)
\end{equation}
Together with our assumption $D^{1/12} \ge C\kappa\mr{diam}(\mc M) \times d$ we have 
\begin{equation}\label{eqn: lower bound of s star}
   \left(\log\left(D \right) + \kappa d \, \mr{diam}(\mc M) + d \log( \frac{1}{\kappa s_{\star,\parallel}} )\right)  \, \sigma^2 D^{2/3} \le c\sigma^2 D \le s_\star^2
\end{equation}
along with the observation that $s_{\star, \parallel}^2 = s_\star^2 - d^2(\mb q_1^{(0)}, \mc M ) > C\bar{\kappa}\sigma^2\sqrt{Dd} > 0$, we verify that condition 
\ref {eq: lower bound on s star to apply noise and signal theorem} is satisfied.
For the upper bound \ref{eq: upper bound on s star to apply noise and signal theorem}, given our assumption $D > C'\bar{\kappa}^2d$, we have 
\begin{equation}
    s_\star^2\le \sigma^2 D + C\bar{\kappa}^2\sigma^2d + 3C\bar{\kappa}\sigma^2\sqrt{Dd} \le 3 \sigma^2D
\end{equation} for appropriate choice of $C'$. 
Then, to verify \ref{eq: complex bound 1 to apply signal theorem}, we observe that since $\check{s_{\star, \parallel}^2}= s_{\star, \parallel}^2 - 2\sigma^2 > C'\bar{\kappa}\sigma^2\sqrt{Dd} > cs_{\star, \parallel}^2$
\begin{eqnarray}
     \log \left(\frac{\mr{diam}(\mc M)} {\check{s}_{\star,\parallel}}\right) 
     &\le &
    C \log \left(\frac{\mr{diam}(\mc M)} {s_{\star,\parallel}}\right) \\ 
    &=& C \log \left(\frac{\kappa\mr{diam}(\mc M)} {\kappa s_{\star,\parallel}}\right) \\ 
    &=&\label{eqn: log diam M over s check star parallel}  C \log \left(\kappa\mr{diam}(\mc M)\right) + C \log \left(\frac{1} {\kappa s_{\star,\parallel}}\right)
\end{eqnarray}
and we observe that both terms can be included in later terms of \ref{eq: complex bound 1 to apply signal theorem}. Specifically, $\log \left(\kappa\mr{diam}(\mc M)\right)$ can be dissolved into the $d \times \kappa \mr{diam}(\mc M)$ term, and $\log \left(\frac{1} {\kappa s_{\star,\parallel}}\right)$ can be dissolved into  $d\log\left(\frac{1}{\kappa s_{\star,\parallel}}\right)$. Then, using our previous result \ref{eqn: lower bound of s star}, we have 
\begin{equation}
    d \times \kappa \mr{diam}(\mc M) + d\log\left(\frac{1}{\kappa s_{\star,\parallel}}\right) \le cD^{1/3}
\end{equation}
which means 
\begin{eqnarray}
    \sigma \sqrt{\log \left(\frac{\mr{diam}(\mc M)} {\check{s}_{\star,\parallel}}\right) + d \times \kappa \mr{diam}(\mc M) + d\log\left(\frac{1}{\kappa s_{\star,\parallel}}\right)} \le \sigma\sqrt{CD^{1/3}}\le c_3\tau,
\end{eqnarray} so \ref{eq: complex bound 1 to apply signal theorem} indeed holds. Lastly, for \ref{eq: complex bound 2 to apply signal theorem}, we observe 
since 
\begin{equation}
    {s_{\star,\parallel}}^2
    \le 5C\bar{\kappa}\sigma^2\sqrt{Dd} \le c\sigma^2\sqrt{D}
\end{equation}
and 
\begin{equation}
    \sqrt{\log \left(\frac{\mr{diam}(\mc M)}{\check{s}_{\star,\parallel}}\right) + d \times \kappa \mr{diam}(\mc M) + d\log\left(\frac{1}{\kappa s_{\star,\parallel}}\right)} \le D^{1/6}
\end{equation}
we have 
\begin{eqnarray}
    && \check{s}_{\star,\parallel}^2 +\sqrt{\log \left(\frac{\mr{diam}(\mc M)}{\check{s}_{\star,\parallel}}\right) + d \times \kappa \mr{diam}(\mc M) + d\log\left(\frac{1}{\kappa s_{\star,\parallel}}\right)} \times \sigma^2 \sqrt{D} \\
    &\le& c\sigma^2\sqrt{D} + \sigma^2D^{2/3}\\
     &\le& c_4 \check{s}_{\star}^2 
\end{eqnarray}
Finally, we can use Theorem \ref{thm:noise-size}, which gives us 

\begin{align}
   \left\|  \bb E\Bigl[ \, \mb z_i \mid \| \mb x_{i,\natural} + \mb z_i- \mb q^{(0)}_1 \| \le R_1 \, \Bigr] \right\| \;&\le&\; \frac{C_3 s_\star}{(D-3)^{1/2}} \sqrt{\log\left(D \right) + \kappa d \, \mr{diam}(\mc M) + d \log( \frac{1}{\kappa s_{\star,\parallel}} )} \\
   &\le&\; \frac{C \sigma\sqrt{D}}{(D-3)^{1/2}} \sqrt{\kappa d \, \mr{diam}(\mc M) + d \log( D)} \\
   &\le&\; C' \sigma \sqrt{d\times\left(\kappa\, \mr{diam}(\mc M) + \log(D)\right)} 
\end{align} 
where in the second line we've used previous result \ref{eqn: round 1 upper bound of 1 over kappa s star parallel} to bound $\log( \frac{1}{\kappa s_{\star,\parallel}} )$. 

Since 
\begin{equation}
\|\mb x_{i,\natural} + \mb z_i - \mb q_1^{(0)} \| \le R_1,    
\end{equation}
we have 
\begin{eqnarray}
    \|\mb z_i\| &\le& R_1 + d(\mb q_1^{(0)}, \mc M ) + \|\mb x_{i,\natural} - \mb q_{1,\natural}^{(0)}\| \\
    &\leq& 3\sigma\sqrt D + \sigma\sqrt D + C\bar{\kappa}\sigma\sqrt d + \mr{diam}(\mc M)\\ 
    &\le& 5 \mr{diam}(\mc M)
\end{eqnarray}
as $\mr{diam}(\mc M) \ge \frac{1}{\kappa} \ge \sigma \sqrt D$

The vector Hoeffding inequality \ref{Vector Hoeffding Lemma} gives that 
\begin{equation}
\bb P\left[ \left\| \frac{1}{N_{\mr{mb}}} \sum_{\ell=1}^{N_{\mr{mb}}} \mb z_{\ell} - \bb E\left[ \frac{1}{N_{\mr{mb}}} \sum_{\ell=1}^{N_{\mr{mb}}} \mb z_{\ell} \right] \right\| > t \right] \le D \exp\left( - \frac{ t^2 N_{mb}}{25 * 64 \mr{diam}^2(\mc M)} \right)
\end{equation} 
Setting 
\begin{equation}
    t = C \sigma \sqrt{d\times\left(\kappa\, \mr{diam}(\mc M) + \log(D)\right)}
\end{equation}
and with
\begin{equation}\label{eqn: phase 1 lower bound on N_mb}
    N_{\mr{mb, 1}} \ge C\frac{\log(D)\mr{diam}^2(\mc M)}{\sigma^2 \left(\log(D)+\kappa \mr{diam}(\mc M)\right)}
\end{equation}
we have with probability $> 1- e^{-Cd}$,
\begin{equation}\label{eqn: first round final niose average}
    \left\| \frac{1}{N_{\mr{mb,1}}} \sum_{\ell=1}^{N_{\mr{mb,1}}} \mb z_{\ell}\right\| \le C \sigma \sqrt{d\times\left(\kappa\, \mr{diam}(\mc M) + \log(D)\right)} 
\end{equation}

Meanwhile, equation \ref{eqn: expected value of signal average} from theorem \ref{lem:expected-signal-distance} gives
\begin{align}
    \bb E\Bigl[ \, d_{\mc M}^2(\mb x_\natural,\mb {q^{(0)}}_{1,\natural}) \mid \| \mb x - \mb {q^{(0)}_1}  \| \le R_1 \, \Bigr] &\le &C_2\check{s}_{\star,\parallel}^2 + C_2\sqrt{ d \times \kappa \mr{diam}(\mc M) + d\log\left(\frac{1}{\kappa s_{\star,\parallel}}\right)} \times \sigma^2 \sqrt{D}\\
    &\le &C_2'\bar{\kappa}\sigma^2\sqrt{Dd} + C_2'\sigma^2\sqrt{Dd \times \left(\kappa \mr{diam}(\mc M) + \log(D)\right)} 
\end{align}
    
where in the first line we've dropped the $\log\left(\frac{\mr{diam}(\mc M)}{\check{s}_{\star,\parallel}}\right)$ term based on previous argument \ref{eqn: log diam M over s check star parallel}. 
Similarly, equation \ref{eqn: variance of signal average} from theorem \ref{lem:expected-signal-distance} gives
\begin{align}
    \bb E\Bigl[ \, d_{\mc M}^4(\mb x_\natural,\mb {q^{(0)}}_{1,\natural}) \mid \| \mb x - \mb {q^{(0)}_1}  \| \le R_1 \, \Bigr] &\le &C_3\check{s}_{\star,\parallel}^4 + C_3\left(d \times \kappa \mr{diam}(\mc M) + d\log\left(\frac{1}{\kappa s_{\star,\parallel}}\right)\right) \times \sigma^4 D\\\label{eqn: variance of signal term in round 1}
    &\le &C_3'\bar{\kappa}^2\sigma^4 Dd + C_3'\sigma^4Dd \times \left(\kappa \mr{diam}(\mc M) + \log(D)\right)
\end{align}
by Bernstein's inequality (Theorem 2.8.4 from \cite{vershynin2018high}), we know for any  \(X_1,\dots,X_N\) independent, mean–zero random variables satisfying \(\lvert X_i\rvert\le K\) almost surely, when we let $ 
\sigma^2 \;=\;\sum_{i=1}^N \bb E[X_i^2]$, then, for every \(t\ge0\),
\begin{equation}\label{eqn: Berstein inequality}
\Pr\!\Bigl(\sum_{i=1}^N X_i \ge t\Bigr)
\;\le\;
2\exp\!\Bigl(
-\frac{t^2}{2\,\sigma^2 \;+\;\tfrac23\,K\,t}
\Bigr),
\end{equation}
so in our setting, we consider $X_i$ = $d_{\mc M}^2(\mb x_\natural,\mb {q^{(0)}}_{1,\natural}) - \bb E\Bigl[\:d_{\mc M}^2(\mb x_\natural,\mb {q^{(0)}}_{1,\natural})\:\Bigr]$ conditioned on event $\| \mb x - \mb {q^{(0)}_1}  \| \le R_1$, then, using equation \ref{eqn: variance of signal term in round 1} to bound $\bb E[X_i^2]$ and the fact that $\left|d_{\mc M}^2(\mb x_\natural,\mb {q^{(0)}}_{1,\natural})\right| \le \mr{diam}^2(\mc M)$ almost surely, 
\begin{eqnarray}
    &&\quad \bb P\left[ \frac{1}{N_{\mr{mb}}} \sum_{\ell=1}^{N_{\mr{mb}}} d^2_{\mc M}\Bigl(\mb x_{\ell,1,\natural}, \mb q^{(0)}_{1,\natural} \Bigr) > \bb E \left[ \frac{1}{N_{\mr{mb}}} \sum_{\ell=1}^{N_{\mr{mb}}} d^2_{\mc M}\Bigl(\mb x_{\ell,1,\natural}, \mb q^{(0)}_{1,\natural} \Bigr) \right] + t \right] \\
    &&\label{eqn: phase 1signal average prob bound, raw expression}\le  2\exp\left(\frac{-ct^2 N_{mb}}{\bar{\kappa}^2\sigma^4 Dd + \sigma^4Dd(\kappa \mr{diam}(\mc M) + \log(D)) + t\mr{diam}^2(\mc M)}\right)
\end{eqnarray}
And so with 
\begin{equation}
    t = c'\left(\bar{\kappa}\sigma^2\sqrt{Dd} + \sigma^2\sqrt{Dd \times \left(\kappa \mr{diam}(\mc M) + \log(D)\right)}\right)
\end{equation} 
we note $t \le \sigma^2 D\le \mr{diam}^2(\mc M),$ so the denominator of equation \ref{eqn: phase 1signal average prob bound, raw expression} can be upper bounded by $2 t\:\mr{diam}^2(\mc M).$ 
With the same $N_{\mr{mb}}$ as required by finite sample bound from noise average (equation \ref{eqn: phase 1 lower bound on N_mb}), 
\begin{eqnarray}
    &&\quad \bb P\left[ \frac{1}{N_{\mr{mb}}} \sum_{\ell=1}^{N_{\mr{mb}}} d^2_{\mc M}\Bigl(\mb x_{\ell,1,\natural}, \mb q^{(0)}_{1,\natural} \Bigr) > \bb E \left[ \frac{1}{N_{\mr{mb}}} \sum_{\ell=1}^{N_{\mr{mb}}} d^2_{\mc M}\Bigl(\mb x_{\ell,1,\natural}, \mb q^{(0)}_{1,\natural} \Bigr) \right] + t \right] \\ &\le&\label{eqn: phase i finite sample intermediate bound}2\exp\left(\frac{-ct N_{mb}}{\mr{diam}^2(\mc M)}\right)\\
    &\le& 2\exp\left(\frac{-c\left(\sigma^2\sqrt{Dd \times \left(\kappa \mr{diam}(\mc M) + \log(D)\right)}\right)\left(\frac{\log(D)\mr{diam}^2(\mc M)}{\sigma^2 \left(\log(D)+\kappa \mr{diam}(\mc M)\right)}\right)}{\mr{diam}^2(\mc M)}\right) \\
    &=& 2\exp\left(-\frac{c\log(D)\sqrt{D\times d}}{ \sqrt{\log(D)+\kappa \mr{diam}(\mc M)}}\right) \\
    &\le& 2\exp\left(-\frac{C\log(D)\sqrt{d\:\kappa \mr{diam} \times d}}{ \sqrt{\log(D)+\kappa \mr{diam}(\mc M)}}\right)\\
    &\le& 2 \exp\left(-C d\right)
\end{eqnarray}
Thus, with probability $> 1- 2e^{-Cd}$ 
\begin{equation}\label{eqn: state i bound on expected squared  distance along the manifold}
    \frac{1}{N_{\mr{mb}}} \sum_{\ell=1}^{N_{\mr{mb}}} d^2_{\mc M}\Bigl(\mb x_{\ell,1,\natural}, \mb q^{(0)}_{1,\natural} \Bigr) \le C\bar{\kappa}\sigma^2\sqrt{Dd} + C'\sigma^2\sqrt{Dd \times \left(\kappa \mr{diam}(\mc M) + \log(D)\right)} 
\end{equation}

Lemma \ref{lem:signal-avg} further gives
\begin{eqnarray}
    d\left( \frac{1}{N_{\mr{mb}}} \sum_{\ell=1}^{N_{\mr{mb}}} \mb x_{1,\natural,\ell} ,\mc M\right) &\le& \kappa\left(C\bar{\kappa}\sigma^2\sqrt{Dd} + C'\sigma^2\sqrt{Dd \times \left(\kappa \mr{diam}(\mc M) + \log(D)\right)} \right)\\
    \label{eqn: first round final signal avg}
    &\le& C\bar{\kappa}\sigma\sqrt{d} + C'\sigma\sqrt{d \times \left(\kappa \mr{diam}(\mc M) + \log(D)\right)}
\end{eqnarray}
Finally, we can use triangular inequality to combine results from \ref{eqn: first round final niose average} and \ref{eqn: first round final signal avg}, which yields 
{
\begin{eqnarray}
 d\left( \frac{1}{N_{\mr{mb}}} \sum_{\ell=1}^{N_{\mr{mb}}} \mb x_{1,\ell} ,\mc M\right)
&\le& C\bar{\kappa}\sigma\sqrt{d} + C'\sigma \sqrt{d\times\left(\kappa\, \mr{diam}(\mc M) + \log(D)\right)} 
\end{eqnarray}
}

\vspace{.1in} 

 \noindent {\em (c) Second Round.} The idea is to apply the same tools from  Theorem \ref{thm:noise-size} and \ref{lem:expected-signal-distance}, but we will first need to choose the correct accepting radius $R_2.$ We let 
\begin{equation}
    \bar{\mb x_1} = \frac{1}{N_{\mr{mb}}} \sum_{\ell=1}^{N_{\mr{mb}}} \mb x_{1,\ell}
\end{equation}
Consider $\mb \vartheta \sim_{\mr{iid}} \mc N(0,\sigma^2 D^{-1/4})$, and we let \[\mb q(t) = \bar{\mb x_1} + t \mb \vartheta,\quad \text{and} \quad \mb \gamma(t) = \mc P_{\mc M} \mb q(t).\] so $\mb q(1) = \mb q_1^{(1)}$ in algorithm \ref{alg:sl--appendix}(initialization for round 2 averaging). We first need to develop a high probability bound for \[d(\mb q(1), \mc M ) = \|\mb q(1) - \mb\gamma (1)\|_2.\] From bounds on chi square random variable (equation 2.19 in \cite{wainwright2019high}), we know that with probability $>1 - 2e^{cd}$, 
\begin{equation}
    \sigma^2D^{3/4} - 3\sigma^2D^{1/4}d^{1/2} \le \|\mb \vartheta\|_2^2 \le \sigma^2D^{3/4} + 3\sigma^2D^{1/4}d^{1/2}
\end{equation}
which also implies 
\begin{equation}
    \|\mb\vartheta\|_2 \le 2\sigma D^{3/8} \le \frac{1}{6}\tau
\end{equation}
which ensures 
\begin{equation}
    \forall 0 \le t \le 1, \:d(\mb q(t), \mc M ) \le d(\mb q(0), \mc M )+\|\mb \vartheta\|_2 \le \frac{1}{2}\tau.
\end{equation}
From Theorem C in \cite{leobacher2021existence}, for x within $\tau$ neighborhood from manifold $\mc M,$ the derivative of its projection $p(x)$ is given by 
\[
  Dp(x)
  = \bigl(\mathrm{id}_{T_{p(x)}\mc M}
        - \|x - p(x)\|\,L_{p(x),v}\bigr)^{-1}
    \,P_{T_{p(x)}\mc M},
\]
where 
\[ 
  v = \|x - p(x)\|^{-1}\,(x - p(x))
\quad\text{and}\quad
  L_{p(x),v}
\]
is the shape operator in direction \(v\) at \(p(x)\). Under our noise regime, let $\sff$ be the second fundamental form at $\mb \gamma(t)$ and $\sff^*$ be the symmetric operator defined by $\left(\sff^*\mb\eta([\mb p, \mb q]\right)\:=\: \langle \sff(\mb p, \mb q), \mb\eta\rangle,$ we have 
\begin{eqnarray}
       \|\frac{d}{dt} \mc P_{\mc M} \mb q(t) \|
       &=& \|\sum_{k=0}^\infty \left( \sff^*[\mb q(t) - \mb \gamma(t)] \right)^k \mc P_{T_{\mb\gamma(t)}\mc M}\: (1-t)\:\mb \vartheta\|,\\ 
       &\le& \sum_{k=0}^\infty \|\left( \sff^*[\mb q(t) - \mb \gamma(t)] \right)^k \mc P_{T_{\mb\gamma(t)}\mc M}\:\mb \vartheta\|,\\
       &\le& \frac{1}{ 1 - 3 \kappa \| \mb q(t) - \mb \gamma(t) \|_2} \left\| \mc P_{T_{\mb \gamma(t)}\mc M} \mb \vartheta \right\|, \\
       &\le& \frac{1}{ 1 - 3 \kappa \| \mb \vartheta \|_2} \left\| \mc P_{T_{\mb \gamma(t)}\mc M} \mb \vartheta \right\|, \\
       &\le& 2 \left\| \mc P_{T_{\mb \gamma(t)}\mc M} \mb \vartheta \right\|
\end{eqnarray}
which implies that 
\begin{eqnarray}
    \|\mb\gamma(1) - \mb\gamma(0)\| 
    &=& \|\int_0^1 \left[\frac{d}{ds} \mc P_{\mc M} \mb q(s) \right] \Bigr|_{s = t} dt \|\\
    &\le& 2 \sup_{0\le t\le 1}\left\| \mc P_{T_{\mb \gamma(t)}\mc M} \mb \vartheta \right\|\\ 
    &\le&\label{eqn: substitution with T_max} 2 T_{\max, \vartheta}
\end{eqnarray}
with 
\begin{equation}
T_{\max,\vartheta} = \sup_{\bar{\mb x} \in B_{\mc M}(\mb \gamma(0),1/\kappa), \mb v \in T_{\bar{\mb x}}\mc M} \innerprod{\mb v}{\mb \vartheta}.
\end{equation}
where in equation \ref{eqn: substitution with T_max} we've used that $\|\mb\gamma(t)-\mb\gamma(0)\|_2^2\le \|\mb\vartheta\|_2^2\le\frac{\tau^2}{36}\le\frac{1}{36\kappa^2} $ for any $t,$ which along with Lemma B.5 from \cite{wang2025fast}, implies $d_{\mc M}\left(\mb \gamma(t),\mb\gamma(0)\right) \le 1/\kappa,\:\forall t.$
{ From Lemma A.4 from \cite{wang2025fast}, we have with probability $\ge 1 - e^{-9d/2}$
\begin{equation}
    T_{\max, \vartheta}  \le C \bar\kappa \sigma D^{-1/2} \sqrt{d}, 
\end{equation}}
Therefore, we have 
\begin{align}
    d(\mb q(1), \mc M ) &= \|\mb q(1) - \mb\gamma (1)\| \\ 
    &= \|\mb q(0) -\mb \gamma(0) +\mb \gamma(0)+ \mb \vartheta - \mb\gamma (1)\|\\
    &\ge \|\mb \vartheta \| - \|\mb q(0) -\mb \gamma(0)\| - \|\mb\gamma(1) - \mb\gamma(0)\|  \\
    &\ge \|\mb \vartheta \| - C\bar{\kappa}\sigma\sqrt{d} - C'\sigma \sqrt{d\times\left(\kappa\, \mr{diam}(\mc M) + \log(D)\right)} - C'' \bar\kappa \sigma D^{-1/2} \sqrt{d}\\
    &\ge \|\mb \vartheta \| - C'''\bar{\kappa}\sigma\sqrt{d} - C'\sigma \sqrt{d\times\left(\kappa\, \mr{diam}(\mc M) + \log(D)\right)}\\
    &\ge \|\mb \vartheta \| - C\sigma\sqrt{D^{1/12}} \\ 
    &\ge \|\mb \vartheta \| - C\sigma D^{1/24}
\end{align}
where in the second to last line we've invoked our assumption that $D^{1/12} \ge C\max\{\bar{\kappa}^2d,\,\kappa\mr{diam}(\mc M)d\}.$ 
Taking the square gives us 
\begin{eqnarray}
     d^2(\mb q(1), \mc M )
     &\ge& \|\mb \vartheta \|_2^2 - 2C\|\mb \vartheta \|_2 \sigma D^{1/24}\\ 
     &\ge& 
     \sigma^2D^{3/4} - 3\sigma^2D^{1/4}d^{1/4} - 4C\sigma D^{3/8} \sigma D^{1/24}\\
     &\ge& 
     \sigma^2D^{3/4} - C'\sigma^2 D^{5/12}
\end{eqnarray}

Similarly we have the following upper bound  
\begin{equation}
    d(\mb q(1), \mc M ) \le  |\mb \vartheta \| + C\sigma D^{1/24}
\end{equation}
which means 
\begin{eqnarray}
     d^2(\mb q(1), \mc M )
     &\le& \|\mb \vartheta \|_2^2 + 2C\|\mb \vartheta \|\sigma D^{1/24} + \sigma^2 D^{1/12}\\
     &\le& 
     \sigma^2D^{3/4} + \sigma^2D^{1/4}d^{1/4}+ C'\sigma^2 D^{5/12}\\
     &\le& 
     \sigma^2D^{3/4} + C''\sigma^2 D^{5/12}
\end{eqnarray}
Therefore, with the appropriate choice of $C$, we have 
\begin{equation}
    \sigma^2D^{3/4} - C\sigma^2 D^{5/12} \le d^2(\mb q(1), \mc M ) \le \sigma^2D^{3/4} + C\sigma^2 D^{5/12}
\end{equation}
setting 
\begin{equation}
    R_2^2 = \sigma^2(D-3 +D^{3/4} + 2CD^{5/12})
\end{equation}
we would have corresponding 
\begin{equation}
    s_\star^2 = \sigma^2(D^{3/4} + 2CD^{5/12})
\end{equation}
and 
\begin{equation}
    C \sigma^2D^{5/12}\le s_{\star, \parallel}^2 \le 3C \sigma^2D^{5/12}
\end{equation}
We now need to verify  
Eqs.~\eqref{eq: upper bound on d(q, M) to apply noise and signal theorem}
--\eqref{eq: complex bound 2 to apply signal theorem}, which are the requirements for invoking Theorem \ref{thm:noise-size} and \ref{lem:expected-signal-distance}

We have $d\left(\mb q, \mc M\right) \le 2\sigma D^{3/8} \le c\tau$ trivially, and given the above lower bound on $s_{\star,\parallel}$, we have $\log( \frac{1}{\kappa s_{\star,\parallel}} )\le C\log(D)$ as before. With the assumption $\kappa d \, \mr{diam}(\mc M) \le D^{1/12}$  we further have 
\begin{equation}
    \left(\log\left(D \right) + \kappa d \, \mr{diam}(\mc M) + d \log( \frac{1}{\kappa s_{\star,\parallel}} )\right)  \, \sigma^2 D^{2/3} 
    \le 
    c\sigma^2 D^{1/12+2/3} \le s_\star^2
\end{equation}

Similar to round 1,  will apply Theorem \ref{thm:noise-size}, and with the new $s_\star^2,\,s_{\star,\parallel}^2$ of round two, it gives us 
\begin{align}
   \left\|  \bb E\Bigl[ \, \mb z_i \mid \| \mb x_{i,\natural} + \mb z_i- \mb q^{(1)}_1 \| \le R_2 \, \Bigr] \right\| \;&\le\; \frac{C_3 s_\star}{(D-3)^{1/2}} \sqrt{\log\left(D \right) + \kappa d \, \mr{diam}(\mc M) + d \log( \frac{1}{\kappa s_{\star,\parallel}} )} \\
   &\le\; C \frac{\sigma}{D^{1/8}}\sqrt{d\times\left(\kappa\, \mr{diam}(\mc M) + \log(D)\right)} 
\end{align}

The vector Hoeffding inequality \ref{Vector Hoeffding Lemma} gives that 
\begin{equation}
\bb P\left[ \left\| \frac{1}{N_{\mr{mb}}} \sum_{\ell=1}^{N_{\mr{mb}}} \mb z_{\ell} - \bb E\left[ \frac{1}{N_{\mr{mb}}} \sum_{\ell=1}^{N_{\mr{mb}}} \mb z_{\ell} \right] \right\| > t \right] \le D \exp\left( - \frac{ t^2 N_{mb}}{25 * 64 \mr{diam}^2(\mc M)} \right)
\end{equation} 
Setting 
\begin{equation}
    t = C \sigma \sqrt{d\times\left(\frac{\kappa\, \mr{diam}(\mc M)}{\log(D)} + 1\right)}
\end{equation}
and with
\begin{equation}\label{eqn: phase 2 lower bound on N_mb}
    N_{mb} \ge C\frac{\log(D)\mr{diam}^2(\mc M)}{\sigma^2\left(\frac{\kappa\, \mr{diam}(\mc M)}{\log(D)} + 1\right)}
\end{equation}
we have with probability $> 1- e^{-Cd}$,
\begin{equation}\label{eqn: second round final niose average}
    \left\| \frac{1}{N_{\mr{mb}}} \sum_{\ell=1}^{N_{\mr{mb}}} \mb z_{\ell}\right\| \le C \sigma \sqrt{d\times\left(\frac{\kappa\, \mr{diam}(\mc M)}{\log(D)} + 1\right)} 
\end{equation}

Meanwhile, with the same arguments as in round 1, equations \ref{eqn: expected value of signal average} and \ref{eqn: variance of signal average} from theorem \ref{lem:expected-signal-distance} gives
\begin{align}
    \bb E\Bigl[ \, d_{\mc M}^2(\mb x_\natural,\mb {q^{(2)}}_{1,\natural}) \mid \| \mb x - \mb {q^{(2)}_1}  \| \le R_2 \, \Bigr] &\le C_2\check{s}_{\star,\parallel}^2 + C_2\sqrt{ d \times \kappa \mr{diam}(\mc M) + d\log\left(\frac{1}{\kappa s_{\star,\parallel}}\right)} \times \sigma^2 \sqrt{D}\\
    &\le C_2'\sigma^2D^{5/12} + C_2'\sigma^2\sqrt{Dd \times \left(\kappa \mr{diam}(\mc M) + \log(D)\right)} 
\end{align}
and 
\begin{align}
    \bb E\Bigl[ \, d_{\mc M}^4(\mb x_\natural,\mb {q^{(1)}}_{1,\natural}) \mid \| \mb x - \mb {q^{(1)}_1}  \| \le R_2 \, \Bigr] &\le C_3\check{s}_{\star,\parallel}^4 + C_3\left(d \times \kappa \mr{diam}(\mc M) + d\log\left(\frac{1}{\kappa s_{\star,\parallel}}\right)\right) \times \sigma^4 D\\\label{eqn: variance of signal term in round 2}
    &\le C_3'\sigma^4 D^{5/6} + C_3'\sigma^4Dd \times \left(\kappa \mr{diam}(\mc M) + \log(D)\right)
\end{align}
For the finite sample bounds, we will apply the same treatment with Bernstein inequality as in round 1. 
To apply equation \ref{eqn: Berstein inequality}, we will use  equation \ref{eqn: variance of signal term in round 2} to bound the expected second moment and the fact that $\left|d_{\mc M}^2(\mb x_\natural,\mb {q^{(0)}}_{1,\natural})\right| \le \mr{diam}^2(\mc M)$ almost surely, which gives us 
\begin{eqnarray}
    &&\quad \bb P\left[ \frac{1}{N_{\mr{mb}}} \sum_{\ell=1}^{N_{\mr{mb}}} d^2_{\mc M}\Bigl(\mb x_{\ell,2,\natural}, \mb q^{(1)}_{1,\natural} \Bigr) > \bb E \left[ \frac{1}{N_{\mr{mb}}} \sum_{\ell=1}^{N_{\mr{mb}}} d^2_{\mc M}\Bigl(\mb x_{\ell,2,\natural}, \mb q^{(1)}_{1,\natural} \Bigr) \right] + t \right] \\
    &&\label{eqn: phase 2signal average prob bound, raw expression}\le  2\exp\left(\frac{-ct^2 N_{mb}}{\sigma^4 D^{5/6} + \sigma^4Dd \times \left(\kappa \mr{diam}(\mc M) + \log(D)\right) + t\mr{diam}^2(\mc M)}\right)
\end{eqnarray}
And so with 
\begin{equation}
    t = c'\left(\sigma^2 D^{5/12} + \sigma^2\sqrt{Dd \times \left(\kappa \mr{diam}(\mc M) + \log(D)\right)}\right)
\end{equation} 
we note $t \le \sigma^2 D\le \mr{diam}^2(\mc M),$ so the denominator of equation \ref{eqn: phase 2signal average prob bound, raw expression} can be upper bounded by $2 t\:\mr{diam}^2(\mc M).$ 
With the same $N_{mb}$ as required by finite sample bound from noise average in phase ii (equation \ref{eqn: phase 2 lower bound on N_mb}), 
\begin{eqnarray}
    &&\quad \bb P\left[ \frac{1}{N_{\mr{mb}}} \sum_{\ell=1}^{N_{\mr{mb}}} d^2_{\mc M}\Bigl(\mb x_{\ell,2,\natural}, \mb q^{(1)}_{1,\natural} \Bigr) > \bb E \left[ \frac{1}{N_{\mr{mb}}} \sum_{\ell=1}^{N_{\mr{mb}}} d^2_{\mc M}\Bigl(\mb x_{\ell,2,\natural}, \mb q^{(1)}_{1,\natural} \Bigr) \right] + t \right] \\ &\le&\label{eqn: phase ii finite sample intermediate bound} 2\exp\left(\frac{-ct N_{mb}}{\mr{diam}^2(\mc M)}\right)\\
    &\le& 2\exp\left(\frac{-c\left(\sigma^2\sqrt{Dd \times \left(\kappa \mr{diam}(\mc M) + \log(D)\right)}\right)\left(\frac{\log^2(D)\mr{diam}^2(\mc M)}{\sigma^2 \left(\log(D)+\kappa \mr{diam}(\mc M)\right)}\right)}{\mr{diam}^2(\mc M)}\right) \\
    &\le& 2 \exp\left(-C d\right)
\end{eqnarray}
where in the second to last line we've applied the lower bound $t\ge c''\sigma^2\sqrt{Dd \times \left(\kappa \mr{diam}(\mc M) + \log(D)\right)}.$
Thus, with probability $> 1- 2e^{-Cd}$ 
\begin{equation}
    \frac{1}{N_{\mr{mb}}} \sum_{\ell=1}^{N_{\mr{mb}}} d^2_{\mc M}\Bigl(\mb x_{\ell,2,\natural}, \mb q^{(1)}_{1,\natural} \Bigr) \le C\sigma^2 D^{5/12} + C'\sigma^2\sqrt{Dd \times \left(\kappa \mr{diam}(\mc M) + \log(D)\right)} 
\end{equation}

Lemma \ref{lem:signal-avg} further gives
\begin{eqnarray}
    d\left( \frac{1}{N_{\mr{mb}}} \sum_{\ell=1}^{N_{\mr{mb}}} \mb x_{2,\natural,\ell} ,\mc M\right) &\le& \kappa\left(C\sigma^2 D^{5/12} + C'\sigma^2\sqrt{Dd \times \left(\kappa \mr{diam}(\mc M) + \log(D)\right)}\right)\\
    \label{eqn: second round final signal avg}
    &\le& C\sigma\sqrt{d \times \left(\frac{\kappa \mr{diam}(\mc M)}{\log(D)} + 1\right)}
\end{eqnarray}
Finally, we can use triangular inequality to combine results from \ref{eqn: second round final niose average} and \ref{eqn: second round final signal avg}, which yields 

\begin{eqnarray}
 d(\mb q_1^{(2)}, \mc M ) = d\left( \frac{1}{N_{\mr{mb}}} \sum_{\ell=1}^{N_{\mr{mb}}} \mb x_{2,\ell} ,\mc M\right)
&\le& C\sigma\sqrt{d \times \left(\frac{\kappa \mr{diam}(\mc M)}{\log(D)} + 1\right)}
\end{eqnarray}

\end{proof}

\begin{corollary}[Signal Estimation via Local Averaging]\label{cor: signal estimation via local averaging-appendix}
Under the same assumptions as Theorem~\ref{thm:landmark-accuracy_appendix} and with the same accepting radius $R_1$ and minibatch size $N_{\mr{mb}}$, let $\mb q^{(0)}_1 $ be the coarse landmark in Stage I and 
$
\mb {q^{(0)}}_{1,\natural}$
be its projection onto $\mc M$. Then there exist constants \(C_1,C_2>0\) such that
with probability at least $1-3e^{-C_1d}$,
the local average's distance from the original clean signal is bounded by
\begin{equation}
    \left\|\frac{1}{N_{\mathrm{mb}}}
\sum_{\ell=1}^{N_{\mathrm{mb}}}
\mb x_{\ell}-\,\mb {q^{(0)}}_{1,\natural}\right\|
\le 
C_2\sigma\sqrt{\bar{\kappa}}(Dd)^{1/4}+C_2\sigma\left(Dd\:\left(\kappa\mr{diam}(\mc M) + \log(D)\right)\right)^{1/4}
\end{equation}

\end{corollary}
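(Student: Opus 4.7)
The plan is to split the minibatch average into a signal mean and a noise mean, then bound each separately, reusing intermediate finite-sample estimates already established in the proof of Theorem~\ref{thm:landmark-accuracy_appendix}. Writing $\mb x_\ell=\mb x_{\ell,\natural}+\mb z_\ell$ and applying the triangle inequality gives
\begin{equation*}
\left\|\tfrac{1}{N_{\mr{mb,1}}}\sum_{\ell=1}^{N_{\mr{mb,1}}}\mb x_\ell-\mb{q^{(0)}}_{1,\natural}\right\|
\;\le\;
\left\|\tfrac{1}{N_{\mr{mb,1}}}\sum_{\ell=1}^{N_{\mr{mb,1}}} \mb x_{\ell,\natural}-\mb{q^{(0)}}_{1,\natural}\right\|
\;+\;
\left\|\tfrac{1}{N_{\mr{mb,1}}}\sum_{\ell=1}^{N_{\mr{mb,1}}} \mb z_\ell\right\|.
\end{equation*}
For the noise summand, I would directly import the Stage~I estimate already derived in the proof of Theorem~\ref{thm:landmark-accuracy_appendix} (obtained by combining the subgaussian conditional-mean bound from Theorem~\ref{thm:noise-size} with the vector Hoeffding inequality), which yields $\bigl\|\tfrac{1}{N_{\mr{mb,1}}}\sum\mb z_\ell\bigr\|\le C\sigma\sqrt{d(\kappa\,\mr{diam}(\mc M)+\log(D))}$ on an event of probability at least $1-e^{-Cd}$.

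The bulk of the new work is on the signal summand. Here I would apply Jensen's inequality to pull the norm inside the average, then Cauchy--Schwarz to pass to a root-mean-square, and finally use the standard inequality ``extrinsic distance $\le$ intrinsic distance'' on $\mc M$ to obtain
\begin{equation*}
\left\|\tfrac{1}{N_{\mr{mb,1}}}\sum \mb x_{\ell,\natural}-\mb{q^{(0)}}_{1,\natural}\right\|
\;\le\;
\sqrt{\tfrac{1}{N_{\mr{mb,1}}}\sum d_{\mc M}^2\!\bigl(\mb x_{\ell,\natural},\,\mb{q^{(0)}}_{1,\natural}\bigr)}.
\end{equation*}
The quantity under the square root is precisely the empirical second moment whose Bernstein concentration was already worked out in Stage~I of the theorem's proof (using the conditional-moment bound of Theorem~\ref{lem:expected-signal-distance}): with probability at least $1-2e^{-Cd}$, it is bounded by $C\bar\kappa\sigma^2\sqrt{Dd}+C'\sigma^2\sqrt{Dd(\kappa\,\mr{diam}(\mc M)+\log(D))}$. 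Taking the square root immediately yields the two terms $C\sigma\sqrt{\bar\kappa}(Dd)^{1/4}+C'\sigma(Dd(\kappa\,\mr{diam}(\mc M)+\log(D)))^{1/4}$ that appear in the statement.

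Combining the two summands and union-bounding the three underlying failure events gives the claimed probability $1-3e^{-C_1d}$. A one-line size check confirms that, under the standing hypothesis $D^{1/12}\ge C\kappa\,\mr{diam}(\mc M)\,d$, the noise summand is of strictly lower order than the second term of the signal summand (their squared ratio is $\sqrt{d(\kappa\,\mr{diam}(\mc M)+\log D)/D}\ll 1$), so the noise contribution is absorbed into the final constants. I do not foresee any substantive technical obstacle: the corollary is essentially a repackaging of the Stage~I intermediate estimates, stopping one step before Lemma~\ref{lem:signal-avg} is invoked. The difference in scaling compared to the theorem's Stage~I bound ($\sqrt{\text{second moment}}$ here versus $\kappa\cdot(\text{second moment})$ there) reflects the fact that the target $\mb{q^{(0)}}_{1,\natural}$ already lies on $\mc M$, so no curvature-driven projection step is needed to pass from the signal mean to the manifold.
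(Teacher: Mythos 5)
Your proposal is correct and follows essentially the same route as the paper's own proof: decompose into signal and noise averages, reuse the Stage~I Hoeffding bound on the noise average and the Bernstein bound on the empirical second moment of intrinsic distances, pass from the mean squared intrinsic distance to the mean (extrinsic) distance via Cauchy--Schwarz and extrinsic~$\le$~intrinsic, and union-bound the three events. The only cosmetic difference is that the paper writes the Cauchy--Schwarz step on the sum of distances rather than as a Jensen/RMS statement, which is the same inequality.
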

\begin{proof}

From equation \ref{eqn: state i bound on expected squared  distance along the manifold}, we have with probability $> 1- 2e^{-Cd}$ 
\begin{equation}
    \frac{1}{N_{\mr{mb}}} \sum_{\ell=1}^{N_{\mr{mb}}} d^2_{\mc M}\Bigl(\mb x_{\ell,1,\natural}, \mb q^{(0)}_{1,\natural} \Bigr) \le C\bar{\kappa}\sigma^2\sqrt{Dd} + C'\sigma^2\sqrt{Dd \times \left(\kappa \mr{diam}(\mc M) + \log(D)\right)} 
\end{equation}
Therefore, by Cauchy-Schwarz inequality, 
\begin{align}\paren{\sum_{\ell=1}^{N_{\mr{mb}}} d_{\mc M}\Bigl(\mb x_{\ell,1,\natural}, \mb q^{(0)}_{1,\natural} \Bigr)}^2 
    &\le N_{\mr mb}\times \sum_{\ell=1}^{N_{\mr{mb}}} d^2_{\mc M}\Bigl(\mb x_{\ell,1,\natural}, \mb q^{(0)}_{1,\natural} \Bigr)\\
    &\le {N_{\mr mb}}^2\paren{C\bar{\kappa}\sigma^2\sqrt{Dd} + C'\sigma^2\sqrt{Dd \times \left(\kappa \mr{diam}(\mc M) + \log(D)\right)}}.
\end{align}
Taking the square root gives
\begin{equation}
    \frac{1}{N_{\mr{mb}}} \sum_{\ell=1}^{N_{\mr{mb}}} d_{\mc M}\Bigl(\mb x_{\ell,1,\natural}, \mb q^{(0)}_{1,\natural} \Bigr) \le 2C\sigma\sqrt{\bar{\kappa}}(Dd)^{1/4}+2C'\sigma\left(Dd\:\left(\kappa\mr{diam}(\mc M) + \log(D)\right)\right)^{1/4}
\end{equation}
From equation 
\ref{eqn: first round final niose average}, we further have with probability $> 1- e^{-Cd}$
\begin{equation}
    \left\| \frac{1}{N_{\mr{mb}}} \sum_{\ell=1}^{N_{\mr{mb}}} \mb z_{\ell}\right\| \le C \sigma \sqrt{d\times\left(\kappa\, \mr{diam}(\mc M) + \log(D)\right)} 
\end{equation}
Lastly, by triangular inequality, we have 
\begin{eqnarray}
    \left\|\frac{1}{N_{\mathrm{mb}}}
\sum_{\ell=1}^{N_{\mathrm{mb}}}
\mb x_{\ell}-\,\mb {q^{(0)}}_{1,\natural}\right\| 
&\le& \left\|\frac{1}{N_{\mathrm{mb}}}
\sum_{\ell=1}^{N_{\mathrm{mb}}}
\mb x_{\ell,\natural}-\,\mb {q^{(0)}}_{1,\natural}\right\| + \left\|\frac{1}{N_{\mathrm{mb}}}
\sum_{\ell=1}^{N_{\mathrm{mb}}}
\mb x_{\ell}-\,\mb x_{\ell,\natural}\right\| 
\\
&\le& \frac{1}{N_{\mathrm{mb}}}\left\|
\sum_{\ell=1}^{N_{\mathrm{mb}}}
\mb x_{\ell,\natural}-\,\mb {q^{(0)}}_{1,\natural}\right\| + \left\|\frac{1}{N_{\mathrm{mb}}}
\sum_{\ell=1}^{N_{\mathrm{mb}}}
\mb x_{\ell}-\,\mb x_{\ell,\natural}\right\| 
\\
&\le& \frac{1}{N_{\mr{mb}}} \sum_{\ell=1}^{N_{\mr{mb}}} d_{\mc M}\Bigl(\mb x_{\ell,1,\natural}, \mb q^{(0)}_{1,\natural} \Bigr)  + \left\|\frac{1}{N_{\mathrm{mb}}}
\sum_{\ell=1}^{N_{\mathrm{mb}}}
\mb x_{\ell}-\,\mb x_{\ell,\natural}\right\| 
\\
&\le& 
C\sigma\sqrt{\bar{\kappa}}(Dd)^{1/4}+C\sigma\left(Dd\:\left(\kappa\mr{diam}(\mc M) + \log(D)\right)\right)^{1/4}
\end{eqnarray}

\end{proof}

\subsection*{Framework of Analysis} 
Our goal in the subsequent sections is to provide high probability bounds on the distance between the minibatch average 
\begin{equation}
\frac{1}{N_{\mr{mb}}} \sum_{\mb x \in \mc X_{\ell}} \mb x \label{eqn:mba}
\end{equation} 
at iteration $\ell$ and the manifold $\mc M$. Below, let 
\begin{equation}
\mb q_{1\natural} = \mc P_{\mc M}[ \mb q_1 ]
\end{equation}
be the nearest point on the manifold $\mc M$ to the landmark $\mb q_1$ at iteration $\ell$. Each sample in the $\ell$-th minibatch $\mc X_{\ell}$ can be expressed as $\mb x = \mb x_\natural + \mb z$, and so 
\begin{equation}
\frac{1}{N_{\mr{mb}}} \sum_{\mb x \in \mc X_{\ell}} \mb x \;=\;  \underset{\color{red} \text{\bf Signal average}}{ \frac{1}{N_{\mr{mb}}} \sum_{\mb x = \mb x_\natural + \mb z \in \mc X_{\ell}} \mb x_\natural } \;+\; \underset{\color{blue}\text{\bf Noise average}}{  \frac{1}{N_{\mr{mb}}} \sum_{\mb x = \mb x_\natural + \mb z \in \mc X_{\ell}} \mb z } \label{eqn:decomp}
\end{equation} 
\begin{equation}
d\left(  \frac{1}{N_{\mr{mb}}} \sum_{\mb x = \mb x_\natural + \mb z \in \mc X_{\ell}} \mb x_\natural, \; \mc M \right) \le \frac{\kappa}{N_{\mr{mb}}} \sum_{\mb x = \mb x_\natural + \mb z \in \mc X_{\ell}} d^2_{\mc M}( \mb x_\natural, \mb q_{1\natural} ),
\end{equation} 
and so the distance of the minibatch average to the manifold satisfies 
\begin{equation}
d\left(  \frac{1}{N_{\mr{mb}}} \sum_{\mb x = \mb x_\natural + \mb z \in \mc X_{\ell}} \mb x, \; \mc M \right) \le \frac{\kappa}{N_{\mr{mb}}} \sum_{\mb x = \mb x_\natural + \mb z \in \mc X_{\ell}} d^2_{\mc M}( \mb x_\natural, \mb q_{1\natural} ) + \left\| \frac{1}{N_{\mr{mb}}} \sum_{\mb x = \mb x_\natural + \mb z \in \mc X_{\ell}} \mb z \right\|
\end{equation}
The right hand side of this inequality involves two sums of independent random variables -- the {\em noise average} 
\begin{equation}
 \frac{1}{N_{\mr{mb}}} \sum_{\mb x = \mb x_\natural + \mb z \in \mc X_{\ell}} \mb z
\end{equation} 
and the {\em average squared intrinsic distance}
\begin{equation}
\frac{1}{N_{\mr{mb}}} \sum_{\mb x = \mb x_\natural + \mb z \in \mc X_{\ell}} d^2_{\mc M}( \mb x_\natural, \mb q_{1\natural} )
\end{equation}
We bound the expectations of these two terms in Sections \ref{sec:noise-avg} and \ref{sec:signal-avg} respectively. 

\subsection*{A Supporting Lemma to Bound Signal Error by Average Squared Intrinsic Distance} 
\begin{lemma} \label{lem:signal-avg} Let $\mb x_{1\natural} \dots \mb x_{N\natural} \in \mc M$, and let $\mb q_\natural$ be an arbitrary point on $\mc M$. Then 
\begin{equation}
d\left( \frac{1}{N} \sum_{i=1}^N \mb x_{i\natural}, \; \mc M \right) \le \frac{\kappa}{N} \sum_{i=1}^N d_{\mc M}^2\left(\mb x_{i\natural},\mb q_\natural \right). 
\end{equation}  
\end{lemma}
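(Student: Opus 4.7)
The plan is to construct an explicit comparison point $\mb y\in\mc M$ whose distance to the average splits into two quadratic errors of matching size, each bounded by $\tfrac{\kappa}{2N}\sum_i d_{\mc M}^2(\mb x_{i\natural},\mb q_\natural)$. First I would decompose each $\mb x_{i\natural}-\mb q_\natural=\mb u_i+\mb n_i$ into its tangent and normal components with respect to $T_{\mb q_\natural}\mc M$, and establish the pointwise bound $\|\mb n_i\|\le \tfrac{\kappa}{2}\,d_{\mc M}^2(\mb x_{i\natural},\mb q_\natural)$. This would follow by taking the minimizing geodesic $\gamma_i$ from $\mb q_\natural$ to $\mb x_{i\natural}$ (which exists because $\mc M$ is compact and geodesically complete), using $\|\gamma_i''(s)\|\le\kappa\|\gamma_i'(s)\|^2$ from the extrinsic curvature bound, and integrating twice to obtain $\|\gamma_i(1)-\gamma_i(0)-\gamma_i'(0)\|\le\tfrac{\kappa}{2}\,d_{\mc M}^2(\mb x_{i\natural},\mb q_\natural)$; projecting onto the normal space extracts $\mb n_i$ on the left and leaves the same bound on the right.

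Next, setting $\mb w=\tfrac{1}{N}\sum_i\mb u_i$, I would pick the comparison point $\mb y=\exp_{\mb q_\natural}(\mb w)\in\mc M$. The identical geodesic-integration argument applied to the constant-speed curve $t\mapsto\exp_{\mb q_\natural}(t\mb w)$ gives
\[
\|\mb y-\mb q_\natural-\mb w\|\;\le\;\tfrac{\kappa}{2}\|\mb w\|^2,
\]
which is a global bound (no small-$\|\mb w\|$ case distinction is needed thanks to geodesic completeness). Writing
\[
\tfrac{1}{N}\sum_i\mb x_{i\natural}-\mb y\;=\;\paren{\tfrac{1}{N}\sum_i\mb n_i}-\paren{\mb y-\mb q_\natural-\mb w}
\]
and applying the triangle inequality yields $\bigl\|\tfrac{1}{N}\sum_i\mb x_{i\natural}-\mb y\bigr\|\le \tfrac{1}{N}\sum_i\|\mb n_i\|+\tfrac{\kappa}{2}\|\mb w\|^2$.

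To close the argument, Cauchy--Schwarz (Jensen) gives $\|\mb w\|^2\le\tfrac{1}{N}\sum_i\|\mb u_i\|^2\le\tfrac{1}{N}\sum_i d_{\mc M}^2(\mb x_{i\natural},\mb q_\natural)$, and combining with the pointwise bound on $\|\mb n_i\|$ concludes
\[
d\!\paren{\tfrac{1}{N}\sum_i\mb x_{i\natural},\,\mc M}\;\le\;\left\|\tfrac{1}{N}\sum_i\mb x_{i\natural}-\mb y\right\|\;\le\;\tfrac{\kappa}{2N}\sum_i d_{\mc M}^2+\tfrac{\kappa}{2N}\sum_i d_{\mc M}^2\;=\;\tfrac{\kappa}{N}\sum_i d_{\mc M}^2(\mb x_{i\natural},\mb q_\natural).
\]
There is no serious obstacle here; the only subtlety is the bookkeeping to ensure the two error sources contribute exactly $\tfrac{\kappa}{2}$ each, so that they combine to give the stated constant $\kappa$ (rather than $\tfrac{\kappa}{2}$ or $2\kappa$) without invoking any case analysis on the size of $\|\mb w\|$.
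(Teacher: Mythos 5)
Your proof is correct, and it takes a route that is parallel to but meaningfully distinct from the paper's. Both arguments rest on the same two geometric inputs --- the geodesic Taylor bound $\|\gamma(1)-\gamma(0)-\gamma'(0)\|\le\tfrac{\kappa}{2}\|\gamma'(0)\|^2$, which follows from $\|\gamma''\|\le\kappa\|\gamma'\|^2$ and a double integration, and Jensen's inequality to control the squared norm of an averaged tangent vector --- but they package those ingredients differently. The paper picks the comparison point $\mb x_\natural^*=\exp_{\mb q_\natural}\bigl(\tfrac{1}{N}\sum_i c_i'(0)\bigr)$, where $c_i'(0)$ are the \emph{geodesic} initial velocities (of length exactly $d_{\mc M}(\mb x_{i\natural},\mb q_\natural)$), and leverages the telescoping cancellation of first-order Taylor terms so that only the two second-derivative integrals survive. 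You instead split each ambient chord $\mb x_{i\natural}-\mb q_\natural$ into tangential and normal parts at $\mb q_\natural$, bound the normal parts pointwise by $\tfrac{\kappa}{2}d_{\mc M}^2$, and take $\mb y=\exp_{\mb q_\natural}(\mb w)$ with $\mb w=\tfrac{1}{N}\sum_i\mb u_i$ the average \emph{chord projection}. These two comparison points differ in general (the geodesic initial velocity is not the tangential chord projection), but both live in $T_{\mb q_\natural}\mc M$ and Jensen bounds their squared norms identically by $\tfrac{1}{N}\sum_i d_{\mc M}^2$, so both proofs land on the same constant $\kappa$. Your version is arguably a bit more modular --- the normal-component bound $\|\mb n_i\|\le\tfrac{\kappa}{2}d_{\mc M}^2$ is a clean standalone fact --- while the paper's is more compact because the cancellation removes the need to name and track the normal pieces separately. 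One small point worth spelling out in a final write-up: your step $\|\mb u_i\|\le d_{\mc M}(\mb x_{i\natural},\mb q_\natural)$ uses two facts in sequence (the projection $\mc P_{T_{\mb q_\natural}\mc M}$ is non-expansive, and extrinsic distance is at most intrinsic distance); stating this explicitly makes the bound on $\|\mb w\|^2$ airtight.
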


\begin{proof} 
For $i=1,2...N,$ consider the unique geodesic curve that goes from $\mb q_{\natural}$ to $\mb x_{i\natural}$: $$c_i:[0,1] \rightarrow \mc M, c_i(0)=\mb q_{\natural}, c_i(1)=\mb x_{i\natural}.$$
By definition of geodesics,  $\langle c_i''(t), c_i'(t) \rangle =0$, which implies that $||c_i'(t)||$ is a constant, and $d_{\mc M}(\mb q_\natural,\mb x_{i\natural})=\underset{0}{\overset{1}{\int}}||c_i'(t)||dt = ||c_i'(0)||$. 
Next, we define $c_*:[0,1]\rightarrow \mc M$ to be the unique geodesic such that $c_*(0)=\mb q_\natural,\, c_*^{'}(0)=\frac{1}{N}\underset{i=1}{\overset{N}{\sum}}c_i^{'}(0)$.  Let $\mb x_{\natural}^* = c_*(1)$, which is clearly on $\mc M$. 
Then, 
\begin{equation}
\begin{aligned}
    d\left( \frac{1}{N} \sum_{i=1}^N \mb x_{i\natural}, \; \mc M \right) &\le \left\|\mb x_{\natural}^* - \frac{1}{N} \sum_{i=1}^N \mb x_{i\natural} \right\|_2 \\
    &=\left\|c_*(0) + \underset{0}{\overset{1}{\int}} c_*^{'}(t) dt - \frac{1}{N} \sum_{i=1}^N \left(c_i(0) + \underset{0}{\overset{1}{\int}} c_i^{'}(t) dt\right) \right\|_2  \\
    &=\left\| \underset{0}{\overset{1}{\int}} c_*^{'}(t) dt - \frac{1}{N} \sum_{i=1}^N  \underset{0}{\overset{1}{\int}} c_i^{'}(t) dt \right\|_2  \\
    &=\left\| \underset{0}{\overset{1}{\int}} \left(c_*^{'}(0) + \underset{0}{\overset{t}{\int}} c_*^{''}(a) da \right)dt - \frac{1}{N} \sum_{i=1}^N  \underset{0}{\overset{1}{\int}} \left(c_i^{'}(0) + \underset{0}{\overset{t}{\int}} c_i^{''}(a) da \right)dt \right\|_2  \\
    &= \left\| \underset{0}{\overset{1}{\int}}\underset{0}{\overset{t}{\int}} c_*^{''}(a) da\:dt - \frac{1}{N} \sum_{i=1}^N  \underset{0}{\overset{1}{\int}} \underset{0}{\overset{t}{\int}} c_i^{''}(a) da\:dt \right\|_2  \\
    &\le \underset{0}{\overset{1}{\int}}\underset{0}{\overset{t}{\int}} \left\|c_*^{''}(a) -\frac{1}{N} \sum_{i=1}^N c_i^{''}(a) \right\|_2 da\:dt\\ 
    &\le \underset{0}{\overset{1}{\int}}\underset{0}{\overset{t}{\int}} \left\|c^{''*}(a)\right\|_2 + \frac{1}{N} \sum_{i=1}^N \left\|c_i^{''}(a) \right\|_2 da\:dt\\
\end{aligned}
\end{equation}
where in the third to last line, the $c_*^{'}(0)$ and $\frac{1}{N}\sum_{i=1}^N c_i^{'}(0)$ gets canceled out by our definition of $c_*^{'}(0)$.

Then, observing that we are left with only second derivatives in the expression above, we now need to consider the bounds by the  curvature $\kappa$. Since we know for any geodesic curve c, $\|c^{''}(t)\| \le \kappa\left\|c^{'}(t)\right\|_2^2$, we have 
\begin{equation}
    \begin{aligned}
     d\left( \frac{1}{N} \sum_{i=1}^N \mb x_{i\natural}, \; \mc M \right)  &\le \underset{0}{\overset{1}{\int}}\underset{0}{\overset{t}{\int}} \left\|c_*^{''}(a)\right\|_2 + \frac{1}{N} \sum_{i=1}^N \left\|c_i^{''}(a) \right\|_2 da\:dt \\
     &\le \underset{0}{\overset{1}{\int}}\underset{0}{\overset{t}{\int}} \kappa\left\|c_*^{'}(0)\right\|_2^2 + \frac{\kappa}{N} \sum_{i=1}^N \left\|c_i^{'}(0) \right\|_2^2 da\:dt \\
    &\le \underset{0}{\overset{1}{\int}}\underset{0}{\overset{t}{\int}} \kappa\left\|\frac{1}{N}\sum_{i=1}^N c_i^{'}(0)\right\|_2^2 + \frac{\kappa}{N} \sum_{i=1}^N \left\|c_i^{'}(0) \right\|_2^2 da\:dt \\
    &\le \underset{0}{\overset{1}{\int}}\underset{0}{\overset{t}{\int}}\frac{2\kappa}{N} \sum_{i=1}^N \left\|c_i^{'}(0) \right\|_2^2 da\:dt \\
    &\le \frac{\kappa}{N} \sum_{i=1}^N d_{\mc M}^2\left(\mb x_{i\natural},\mb q_\natural \right)
    \end{aligned}
\end{equation}
This completes the proof.
\end{proof}

\section{The ``Noise Size'' Subproblem} \label{sec:noise-avg}

The goal of this section is to upper bound the size of the conditional expectation of the noise $\mb z_i$ for points $\mb z_i$ assigned to landmark $\mb q$: 
\begin{equation}
\left\|  \bb E\Bigl[ \, \mb z_i \mid \| \mb x_i + \mb z_i- \mb q \| \le R \, \Bigr] \right\|. 
\end{equation} 
We prove:
\begin{theorem}\label{thm:noise-size} Suppose that $\kappa \,\mr{diam}(\mc M) \ge 1$ and $\sigma > D^{-1}$. There exist numerical constants $C_1\dots C_4$, $c_1, c_2 > 0$, such that whenever $D > C_1$, $d(\mb q,\mc M) < \tfrac{c_1}{\kappa}$, and $s_\star^2 = R^2 - \sigma^2 (D-3)$ satisfies $s_{\star,\parallel}^2 = s_\star^2 - d^2(\mb q, \mc M) \ge 0,$ and 
\begin{equation}
    \max\left\{C_2 \left(\log\left(D \right) + \kappa d \, \mr{diam}(\mc M) + d \log( \frac{1}{\kappa s_{\star,\parallel}} )\right)  \, \sigma^2 D^{2/3} , d^2(\mb q, \mc M )\right\}
    \le s_\star^2 
    \le \min \Bigl\{ 3 \sigma^2 D, d^2(\mb q, \mc M) + \frac{c_2 }{\kappa^2} \Bigr\} \label{eqn:noise-size-cond1}
\end{equation}
we have 
\begin{eqnarray}
   \left\|  \bb E\Bigl[ \, \mb z_i \mid \| \mb x_i + \mb z_i- \mb q \| \le R \, \Bigr] \right\| \;\le\; \frac{C_3 s_\star}{(D-3)^{1/2}} \sqrt{\log\left(D \right) + \kappa d \, \mr{diam}(\mc M) + d \log( \frac{1}{\kappa s_{\star,\parallel}} )},
\end{eqnarray} 
\end{theorem}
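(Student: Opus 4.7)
My plan is to exploit the rotational symmetry of the Gaussian noise to reduce the conditional expectation to a ratio of one-dimensional integrals against the grouping probability $h$, and then combine the subgaussian bounds on $h$ introduced in Section~\ref{section: introduce h} with a near/far decomposition of $\mc M$ around $\mb q_\natural = \mc P_{\mc M}[\mb q]$. Since $\mb x_{i,\natural}$ is uniform on $\mc M$ and independent of $\mb z_i \sim \mc N(\mb 0, \sigma^2 \mb I)$, I write $\bb E[\mb z_i \mid \mc E_i] = A/B$, where $A$ is the vector integral of $\mb z\, \phi(\mb z)\, \mathbf 1[\|\mb x_\natural + \mb z - \mb q\|\le R]$ over $\mc M \times \bb R^D$ and $B$ is the corresponding scalar integral, i.e., the grouping probability. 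For each fixed $\mb x_\natural$, I align the first coordinate with $\mb u(\mb x_\natural) = (\mb q - \mb x_\natural)/\|\mb q - \mb x_\natural\|$ and decompose $\mb z = z_\parallel \mb u + \mb z_\perp$; the event becomes $\|\mb z_\perp\|^2 \le R^2 - s^2$ with $s = \|\mb q - \mb x_\natural\| - z_\parallel$. Because $\mb z_\perp$ is spherically symmetric and restricted to a centered Euclidean ball, its contribution to the inner integral vanishes, and the inner integral collapses to $g(\mb x_\natural)\, \mb u(\mb x_\natural)$ with $g(\mb x_\natural) = \int z_\parallel\, \phi(z_\parallel)\, h(s(z_\parallel))\, dz_\parallel$. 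The triangle inequality then yields $\|A\| \le \int_{\mc M} |g(\mb x_\natural)|\, d\mu(\mb x_\natural)$.

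Both $|g|$ and the marginal grouping probability $(\phi*h)(\|\mb q - \mb x_\natural\|)$ are convolutions of the one-dimensional Gaussian with the chi-square CDF $h$, so the subgaussian bounds on $h$ near the critical radius $s_\star$ transfer directly. I would lower-bound $B$ by restricting the $\mc M$-integration to a geodesic ball around $\mb q_\natural$ of radius of order $s_{\star,\parallel}$, on which $\|\mb q - \mb x_\natural\|$ stays within the bulk of the phase transition of $\phi*h$; standard volume comparison for manifolds with extrinsic curvature bounded by $\kappa$ then gives $B \ge c\, s_{\star,\parallel}^d$ up to constants depending on $d$ and the value of $\phi*h$ at $s_\star$. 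For $\|A\|$, I split $\mc M$ into a near-field $N = \{d_{\mc M}(\mb x_\natural, \mb q_\natural) \le r_0\}$ and its complement; the cutoff $r_0$ is tuned so that on the far-field the subgaussian decay of the convolution against $z_\parallel \phi$ dominates the at-worst exponential volume growth $\mr{Vol}(B_{\mc M}(\mb q_\natural, r)) \le e^{C\kappa r d}$ of the manifold. On $N$, a Taylor expansion of $\mb x_\natural \mapsto \|\mb q - \mb x_\natural\|^2$ in geodesic coordinates at $\mb q_\natural$, valid because $d(\mb q, \mc M) \le c_1/\kappa$, gives uniform pointwise control on $|g|$ of the form $(\phi*h)(s_\star) \cdot s_\star/\sqrt{D-3}$, where the last factor reflects the effective subgaussian width of $h$ near its phase transition.

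The main obstacle is calibrating $r_0$ and the subgaussian rate so that the three additive terms in the final square root appear with the correct constants. The $\log(D)$ term comes from the $(D-1)$-dimensional chi-square tail controlling how far $\|\mb q - \mb x_\natural\|$ can drift from $s_\star$ and still contribute non-negligibly through $\phi*h$. The $\kappa d\, \mr{diam}(\mc M)$ term is the exponent of the crudest volume bound for the far-field, which the subgaussian tails must dominate all the way out to the intrinsic diameter of $\mc M$. The $d \log(1/(\kappa s_{\star,\parallel}))$ term encodes the near-field entropy of the $d$-dimensional manifold at the natural intrinsic length scale $s_{\star,\parallel}$, and arises when relating the lower bound $B \gtrsim s_{\star,\parallel}^d$ to the matching near-field contribution in $\|A\|$. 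The lower hypothesis $s_\star^2 \ge C_2(\log D + \kappa d\, \mr{diam}(\mc M) + d\log(1/\kappa s_{\star,\parallel}))\sigma^2 D^{2/3}$ is exactly what ensures that the near-field contribution dominates the far-field when we take the ratio $\|A\|/B$, while the upper hypothesis $s_\star^2 \le d^2(\mb q, \mc M) + c_2/\kappa^2$ keeps $\mb q$ inside the injectivity tube so that the Taylor expansion of the distance-to-manifold function remains accurate. Assembling these pieces, the ratio $\|A\|/B$ reduces to $s_\star/\sqrt{D-3}$ multiplied by the square root of the combined near-field/far-field error budget, which is precisely the stated bound.
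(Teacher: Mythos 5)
Your reduction of the conditional expectation to the ratio of manifold integrals of $\sigma^2[\phi\ast(-\dot h)]$ (your $g$) and $\phi\ast h$, via rotational symmetry and the vanishing of the $\mb z_\perp$ contribution, matches the paper exactly, and your far-field treatment (subgaussian decay of the convolution beating the volume growth $e^{C\kappa d\,\mr{diam}(\mc M)}(\kappa s_{\star,\parallel})^{-d}$, which is the content of Lemma \ref{lem:vol-ratio}) is also the paper's argument. The genuine gap is in your handling of the near/critical region. You propose to bound the near-field numerator by a \emph{uniform} sup bound $|g|\lesssim (\phi\ast h)(s_\star)\cdot s_\star/\sqrt{D-3}$ times $\mr{vol}(N)$, and to lower bound the denominator $B$ by the contribution of a geodesic ball of radius $\sim s_{\star,\parallel}$. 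Taking the ratio then leaves a multiplicative factor $\mr{vol}(N)/\mr{vol}\bigl(B_{\mc M}(\mb q_\natural, c\,s_{\star,\parallel})\bigr)$. This factor is not controlled by the claimed additive budget: to reach the regime where the far-field subgaussian decay kicks in, $N$ must extend to extrinsic radius roughly $s_\star+\bar\nu\sqrt{\mathrm{budget}}$, i.e.\ intrinsic radius $r_0$ with $r_0^2\approx s_{\star,\parallel}^2+C\sigma^2\sqrt{D}$ or larger, and the resulting volume ratio scales like $(r_0/s_{\star,\parallel})^d e^{\kappa d r_0}$ — e.g.\ of order $D^{cd}$ in the second-stage application where $s_{\star,\parallel}^2\asymp\sigma^2 D^{5/12}$ — which enters \emph{multiplicatively}, not as a logarithm inside the square root. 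On the critical shell there is no exponential decay available to convert such multiplicative volume factors into additive log terms (that conversion is only legitimate in the far field), so your accounting that the $d\log(1/\kappa s_{\star,\parallel})$ term "arises when relating $B\gtrsim s_{\star,\parallel}^d$ to the near-field contribution" cannot be made to work as described.

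The missing idea is the pointwise \emph{relative} bound of Lemma \ref{lem:relative}: matched subgaussian upper bounds for $\phi\ast(-\dot h)$ and lower bounds for $\phi\ast h$ evaluated at the \emph{same} argument $s$, giving $\sigma^2[\phi\ast(-\dot h)](s)\le C\max\{\sigma^2/\nu,\ \sigma^2(s-s_\star)/\nu^2\}\,[\phi\ast h](s)$ for all $s\le s_\star+\Delta$ (this requires the two-sided bounds of Lemmas \ref{lem:phi-conv-h-dot} and \ref{lem:phi-conv-h}, plus the monotonicity statement of Lemma \ref{lem:monotoncity of -h dot} to extend outside the phase-transition window). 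With this, the near term satisfies $\int_{\mathrm{near}}\sigma^2[\phi\ast(-\dot h)]\,d\mu \le C\,\tfrac{\sigma^2\Delta}{\nu^2}\int_{\mc M}[\phi\ast h]\,d\mu$, so the ratio is bounded with \emph{no} volume comparison at all; all three terms of the budget then enter only through the choice of the cut $\Delta\asymp\bar\nu\sqrt{\log D+\kappa d\,\mr{diam}(\mc M)+d\log(1/\kappa s_{\star,\parallel})}$, chosen so that the far-field Gaussian factor $e^{-\Delta^2/2\bar\nu^2}$ cancels the volume ratio, and they propagate into the final bound via the factor $\sigma^2\Delta/\nu^2\asymp \tfrac{s_\star}{\sqrt{D-3}}\sqrt{\mathrm{budget}}$. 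Replacing your sup-times-volume near-field step by this relative-bound mechanism (with the near/far split taken in extrinsic distance $\|\mb q-\bar{\mb x}_\natural\|\lessgtr s_\star+\Delta$, as in the paper) is what closes the argument.
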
 

\vspace{.1in}

\begin{proof} 
Let $\mc E_i$ denote the event that $\| \mb x_i + \mb z_i- \mb x_1 \| \le R$. Using the tower property of the conditional expectation, we have 
\begin{eqnarray}
    \bb E \Bigl[ \mb z_i \mid \mc E_i, \mb x_1 \Bigr] &=& \bb E_{\mb x_{i\natural} \mid \mc E_i, \mb x_1} \Biggl[ \, \bb E  \Bigl[ \mb z_i \mid \mc E_i, \mb x_1, \mb x_{i\natural} \Bigr] \, \Biggr] \\
    &=& \int_{\bar{\mb x}_{i\natural}} \bb E  \Bigl[ \mb z_i \mid \mc E_i, \mb x_1, \mb x_{i\natural} \Bigr] f_{\mb x_{i\natural} \mid \mc E_i, \mb x_1}(  \bar{\mb x}_{i\natural} ) \, d \mu_0(\bar{\mb x}_{i\natural}) 
\end{eqnarray}
The conditional expectation is 
\begin{equation}
    \bb E \Bigl[ \, \mb z_i \mid \mc E_i, \mb x_1, \mb x_{i\natural} \, \Bigr] = \frac{ \int_{\bar{\mb z}_i} \bar{\mb z}_i f_{\mb z}(\bar{\mb z_i}) \indicator{ \| \mb z_i - (\mb x_1 - \mb x_{i\natural} ) \|_2 \le R} \,  d \bar{\mb z}_i }{ \int_{\bar{\mb z}_i} f_{\mb z}(\bar{\mb z_i}) \indicator{ \| \mb z_i - (\mb x_1 - \mb x_{i\natural} ) \|_2 \le R} \,  d \bar{\mb z}_i }.
\end{equation}
Meanwhile the density is 
\begin{eqnarray}
    f_{\mb x_{i\natural} \mid \mc E_i, \mb x_1}(  \bar{\mb x}_{i\natural} ) = \frac{ \bb P[ \mc E_i \mid \mb x_{i\natural} = \bar{\mb x}_{i\natural}, \mb x_1 ] \rho_\natural(\bar{\mb x}_{i\natural} ) }{ \int_{\bar{\bar{\mb x}}_{i\natural}} \bb P[ \mc E_i \mid \mb x_{i\natural} = \bar{\bar{\mb x}}_{i\natural}, \mb x_1 ] \rho_\natural(\bar{\bar{\mb x}}_{i\natural} ) d\mu_0(\bar{\bar{\mb x}}_{i\natural}) },
\end{eqnarray}
whence 
\begin{eqnarray}
    \bb E \Bigl[ \, \mb z_i \mid \mc E_i, \mb x_1 \, \Bigr] = \int_{\bar{\mb x}_{i\natural}} \frac{ \int_{\bar{\mb z}_i} \bar{\mb z}_i f_{\mb z}(\bar{\mb z_i}) \indicator{ \| \mb z_i - (\mb x_1 - \mb x_{i\natural} ) \|_2 \le R} \,  d \bar{\mb z}_i }{ \int_{\bar{\mb z}_i} f_{\mb z}(\bar{\mb z_i}) \indicator{ \| \mb z_i - (\mb x_1 - \mb x_{i\natural} ) \|_2 \le R} \,  d \bar{\mb z}_i } \, \frac{ \bb P[ \mc E_i \mid \mb x_{i\natural} = \bar{\mb x}_{i\natural}, \mb x_1 ] \rho_\natural(\bar{\mb x}_{i\natural} ) }{ \int_{\bar{\bar{\mb x}}_{i\natural}} \bb P[ \mc E_i \mid \mb x_{i\natural} = \bar{\bar{\mb x}}_{i\natural}, \mb x_1 ] \rho_\natural(\bar{\bar{\mb x}}_{i\natural} ) d\mu_0(\bar{\bar{\mb x}}_{i\natural}) } \, d \mu_0(\bar{\mb x}_{i\natural}). \nonumber 
\end{eqnarray}
We can simplify this expression by noting that 
\begin{equation}
    \bb P[ \mc E_i \mid \mb x_{i\natural} = \bar{\bar{\mb x}}_{i\natural}, \mb x_1 ] = \int_{\bar{\mb z}_i} f_{\mb z}(\bar{\mb z_i}) \indicator{ \| \mb z_i - (\mb x_1 - \mb x_{i\natural} ) \|_2 \le R} \,  d \bar{\mb z}_i,
\end{equation}
yielding 
\begin{equation}
        \bb E \Bigl[ \, \mb z_i \mid \mc E_i, \mb x_1 \, \Bigr] =  \frac{ \int_{\bar{\mb x}_{i\natural}} \int_{\bar{\mb z}_i} \bar{\mb z}_i f_{\mb z}(\bar{\mb z_i}) \indicator{ \| \mb z_i - (\mb x_1 - \mb x_{i\natural} ) \|_2 \le R} \,  d \bar{\mb z}_i \rho_\natural(\bar{\bar{\mb x}}_{i\natural} ) \, d \mu_0(\bar{\mb x}_{i\natural}) }{ \int_{\bar{\bar{\mb x}}_{i\natural}} \bb P[ \mc E_i \mid \mb x_{i\natural} = \bar{\bar{\mb x}}_{i\natural}, \mb x_1 ] \rho_\natural(\bar{\bar{\mb x}}_{i\natural} ) d\mu_0(\bar{\bar{\mb x}}_{i\natural}) }.
\end{equation}
In the sequel, we assume a uniform density $\rho_\natural( \bar{\mb x}_{i\natural}) = \rho_0$, so the above simplifies to 
\begin{equation}
        \bb E \Bigl[ \, \mb z_i \mid \mc E_i, \mb x_1 \, \Bigr] =  \frac{ \int_{\bar{\mb x}_{i\natural}} \int_{\bar{\mb z}_i} \bar{\mb z}_i f_{\mb z}(\bar{\mb z_i}) \indicator{ \| \mb z_i - (\mb x_1 - \mb x_{i\natural} ) \|_2 \le R} \,  d \bar{\mb z}_i  \, d \mu_0(\bar{\mb x}_{i\natural}) }{ \int_{\bar{\bar{\mb x}}_{i\natural}} \bb P[ \mc E_i \mid \mb x_{i\natural} = \bar{\bar{\mb x}}_{i\natural}, \mb x_1 ] d\mu_0(\bar{\bar{\mb x}}_{i\natural}) }.
\end{equation}
We let $\Upsilon$ denote the normalizing constant in the denominator: 
\begin{equation}
    \Upsilon = \int_{\bar{\bar{\mb x}}_{i\natural}} \bb P[ \mc E_i \mid \mb x_{i\natural} = \bar{\bar{\mb x}}_{i\natural}, \mb x_1 ] \, d\mu_0(\bar{\bar{\mb x}}_{i\natural})
\end{equation}
For the numerator, using the rotational invariance of the Gaussian distribution, we have 
\begin{eqnarray} 
\int_{\bar{\mb z}_i} \bar{\mb z}_i f_{\mb z}(\bar{\mb z_i}) \indicator{ \| \mb z_i - (\mb x_1 - \mb x_{i\natural} ) \|_2 \le R} \,  d \bar{\mb z}_i &=& \frac{ \mb x_1 - \mb x_{i\natural} }{\| \mb x_1 - \mb x_{i\natural} \|_2 } \left[ \int_{\mb v = (v_1, \dots v_D)} \mb v_1 f_{\mb z}( \mb v ) \indicator{ \| \mb v - t \mb e_1  \| \le R } \, d \mb v \, \right] \Biggr|_{t = \| \mb x_1 - \mb x_{i\natural} \|} \quad 
\end{eqnarray} 
Below, we will let 
\begin{equation}
    \mb u( \mb x_1 - \mb x_{i\natural} ) = \frac{ \mb x_1 - \mb x_{i\natural} }{\| \mb x_1 - \mb x_{i\natural} \|_2 }.
\end{equation}
In that language, 
\begin{eqnarray}
     \bb E \Bigl[ \, \mb z_i \mid \mc E_i, \mb x_1 \, \Bigr] &=& \Upsilon^{-1} \int_{\bar{\mb x}_{i\natural}} \mb u(\mb x_1 - \bar{\mb x}_{i\natural}) \int_{\mb v = (v_1, \dots v_D)}  v_1 f_{\mb z}( \mb v ) \indicator{ \| \mb v - t \mb e_1  \| \le R } \, d \mb v \,  \Bigr|_{t = \| \mb x_1 - \mb x_{i\natural} \|} \, d \mu_0( \bar{\mb x}_{i\natural} ). \quad 
\end{eqnarray}
We can develop the inner integrand as follows: 
\begin{align}
    \lefteqn{ \int_{\mb v = (v_1, \dots v_D)}  v_1 f_{\mb z}( \mb g ) \indicator{ \| \mb v - t \mb e_1  \| \le R } \, d \mb v } \nonumber \\
    &= \int_{v_1} v_1 f_{z_1}(v_1) \Biggl[ \int_{v_2, \dots, v_D} f_{z_2,\dots,z_D}(v_2,\dots,v_D) \indicator{v_2^2+\dots+v_D^2 \le R^2 - (t-v_1)^2 } d v_2,\dots,v_D \Biggr] dv_1 \\
    &= \int_{v_1} v_1 f_{z_1}(v_1) h(t-v_1) d v_1 \label{eqn:inner-final} 
\end{align}
where 
\begin{eqnarray}
    h(s) &=& \bb P_{g_2,\dots, g_D \sim_{\mr{iid}} \mc N(0,1)} \left[ g_2^2 + \dots + g_D^2 \le \frac{R^2 - s^2  }{\sigma^2} \right]  \\ 
    &=& \bb P_{X \sim \chi^2_{D-1}} \left[ X \le \frac{R^2 - s^2}{\sigma^2}\right] \nonumber \\
    &=& \frac{1}{\Gamma(\tfrac{D-1}{2})} \gamma \left( \frac{D-1}{2}, \frac{R^2- s^2}{2 \sigma^2} \right).  
\end{eqnarray}
We can recognize the integral \eqref{eqn:inner-final} as a convolution between $\psi(v_1) = v_1 f_{z_1}(v_1)$ and $h$. Moreover, since 
\begin{eqnarray}
    \phi(v_1) \;\doteq\; f_{z_1}(v_1) &=& \frac{1}{\sqrt{2 \pi} \sigma} \exp\left( - \frac{v_1^2}{2 \sigma^2} \right) \\
    \dot{\phi}(v_1) \;\doteq\; \dot{f}_{z_1}(v_1) &=& -\frac{v_1}{\sigma^2} f_{z_1}(v_1) \\ 
    \psi(v_1) \;\doteq\; v_1 f_{z_1}(v_1) &=& -\sigma^2 \dot{\phi}(v_1),
\end{eqnarray}
and \eqref{eqn:inner-final} satisfies 
\begin{eqnarray}
    \int_{v_1} v_1 f_{z_1}(v_1) h(t-v_1) \, d v_1 &=& [ \psi \ast h ](t) \\
    &=& \sigma^2 [ -\dot{\phi} \ast h ](t) \\
    &=& \sigma^2 [ -\phi \ast \dot{h} ](t). 
\end{eqnarray}
Summing up, we have 
\begin{eqnarray}
    \bb E\Bigl[ \, \mb z_i \mid \mc E_i, \mb x_1 \, \Bigr] &=& \frac{ \int_{\bar{\mb x}_{i\natural}}  \mb u(\mb x_1 - \mb x_{i\natural} ) \, \sigma^2 \bigl[ -\phi \ast \dot{h} \bigr] \bigl( \| \mb x_1 - \bar{\mb x}_{i\natural} \| \bigr) \; d \mu_0( \bar{\mb x}_{i\natural} ) }{ \int_{\bar{\bar{\mb x}}_{i\natural}} \bb P[ \mc E_i \mid \mb x_{i\natural} = \bar{\bar{\mb x}}_{i\natural}, \mb x_1 ] \, d\mu_0(\bar{\bar{\mb x}}_{i\natural})  }
\end{eqnarray}
We can develop the denominator in similar terms: again using the rotational invariance of the gaussian distribution, we obtain 
\begin{eqnarray}
\lefteqn{ 
     \bb P[ \mc E_i \mid \mb x_{i\natural} = \bar{\bar{\mb x}}_{i\natural}, \mb x_1 ] \quad =\quad  \left[ \int_{v_1 \dots v_D} f_{\mb z}( v_1, \dots, v_D ) \indicator{ (v_1 - t)^2 + v_2^2 + \dots + v_D^2 \le R^2 } \, d v_1 \dots v_D  \right]_{t = \| \mb x_1 - \bar{\bar{\mb x}}_{i\natural} \|  }  
     } \nonumber \\
     &=& \Biggl[ \int_{v_1} f_{z_1}(v_1) \Biggl[ \int_{v_2, \dots, v_D} f_{z_2,\dots,z_D}(v_2,\dots,v_D) \indicator{v_2^2+\dots+v_D^2 \le R^2 - (t-v_1)^2 } d v_2,\dots,v_D \Biggr] dv_1 \Biggr]_{\| \mb x_1 - \bar{\bar{\mb x}}_{i\natural} \|} \\
     &=& \Biggl[ \int_{v_1} f_{z_1}(v_1) h(t-v_1) \,  d v_1  \Biggr]_{\| \mb x_1 - \bar{\bar{\mb x}}_{i\natural} \|} \\
     &=& \phi \ast h( \| \mb x_1 - \bar{\bar{\mb x}}_{i\natural} \| ),
\end{eqnarray}
and overall we obtain 
\begin{eqnarray}
    \bb E\Bigl[ \, \mb z_i \mid \mc E_i, \mb x_1 \, \Bigr] &=& \frac{ \int_{\bar{\mb x}_{i\natural}}  \mb u(\mb x_1 - \bar{\mb x}_{i\natural} ) \, \sigma^2 \bigl[ - \phi \ast \dot{h} \bigr] \bigl( \| \mb x_1 - \bar{\mb x}_{i\natural} \| \bigr) \; d \mu_0( \bar{\mb x}_{i\natural} ) }{ \int_{\bar{\bar{\mb x}}_{i\natural}} [\phi \ast h] ( \| \mb x_1 - \bar{\bar{\mb x}}_{i\natural} \| ) \, d\mu_0(\bar{\bar{\mb x}}_{i\natural})  }
\end{eqnarray}
\noindent We bound the norm of this quantity as follows. First, by the triangle inequality,
\begin{eqnarray}
   \left\|  \bb E\Bigl[ \, \mb z_i \mid \mc E_i, \mb x_1 \, \Bigr] \right\| &=& \left\| \frac{ \int_{\bar{\mb x}_{i\natural}}  \mb u(\mb x_1 - \bar{\mb x}_{i\natural} ) \, \sigma^2 \bigl[ - \phi \ast \dot{h} \bigr] \bigl( \| \mb x_1 - \bar{\mb x}_{i\natural} \| \bigr) \; d \mu_0( \bar{\mb x}_{i\natural} ) }{ \int_{\bar{\bar{\mb x}}_{i\natural}} [\phi \ast h] ( \| \mb x_1 - \bar{\bar{\mb x}}_{i\natural} \| ) \, d\mu_0(\bar{\bar{\mb x}}_{i\natural})  }  \right\| \\ 
    &\le & \left\| \frac{ \int_{\bar{\mb x}_{i\natural} : \| \bar{\mb x}_{i\natural} - \mb x_1 \| \le s_\star + \Delta }  \mb u(\mb x_1 - \bar{\mb x}_{i\natural} ) \, \sigma^2 \bigl[ - \phi \ast \dot{h} \bigr] \bigl( \| \mb x_1 - \bar{\mb x}_{i\natural} \| \bigr) \; d \mu_0( \bar{\mb x}_{i\natural} ) }{ \int_{\bar{\bar{\mb x}}_{i\natural}} [\phi \ast h] ( \| \mb x_1 - \bar{\bar{\mb x}}_{i\natural} \| ) \, d\mu_0(\bar{\bar{\mb x}}_{i\natural})  } \right\| \nonumber \\ 
    && \qquad + \qquad \left\| \frac{ \int_{\bar{\mb x}_{i\natural} : \| \bar{\mb x}_{i\natural} - \mb x_1 \| > s_\star + \Delta}  \mb u(\mb x_1 - \bar{\mb x}_{i\natural} ) \, \sigma^2 \bigl[ - \phi \ast \dot{h} \bigr] \bigl( \| \mb x_1 - \bar{\mb x}_{i\natural} \| \bigr) \; d \mu_0( \bar{\mb x}_{i\natural} ) }{ \int_{\bar{\bar{\mb x}}_{i\natural}} [\phi \ast h] ( \| \mb x_1 - \bar{\bar{\mb x}}_{i\natural} \| ) \, d\mu_0(\bar{\bar{\mb x}}_{i\natural})  } \right\|. \hspace{.25in} 
\end{eqnarray} 
Here, we have upper bounded the expectation as the sum of two terms -- an integration over $\bar{\mb x}_{i\natural}$ {\em near} to $\mb x_1$ and an integration over $\bar{\mb x}_{i\natural}$ {\em far} from $\mb x_1$. We use two different techniques to bound these terms: for the near term, we use the relative bound from Lemma \ref{lem:relative}. As long as 
\begin{equation}
\Delta < \tfrac{1}{12} \sigma (D-3)^{1/6}, 
\end{equation} 
for $s \le s_\star + \Delta$, 
Lemma \ref{lem:relative} gives 
\begin{equation}
\frac{ \sigma^2 [\phi \ast - \dot{h}](s) }{ [\phi \ast h](s)  }  \le C \max\left\{ \frac{\sigma^2}{\nu}, \frac{\sigma^2 \Delta}{\nu^2} \right\},
\end{equation} 
where 
\begin{equation}
    \nu^2 = \frac{\sigma^4 (D-3)}{2 s_\star^2}. 
\end{equation}
Using the triangle inequality, we have 
\begin{eqnarray}
\lefteqn{  \left\| \frac{ \int_{\bar{\mb x}_{i\natural} : \| \bar{\mb x}_{i\natural} - \mb x_1 \| \le s_\star + \Delta }  \mb u(\mb x_1 - \bar{\mb x}_{i\natural} ) \, \sigma^2 \bigl[ - \phi \ast \dot{h} \bigr] \bigl( \| \mb x_1 - \bar{\mb x}_{i\natural} \| \bigr) \; d \mu_0( \bar{\mb x}_{i\natural} ) }{ \int_{\bar{\bar{\mb x}}_{i\natural}} [\phi \ast h] ( \| \mb x_1 - \bar{\bar{\mb x}}_{i\natural} \| ) \, d\mu_0(\bar{\bar{\mb x}}_{i\natural})  } \right\| } \nonumber  \\ 
&\le&  \frac{ \int_{\bar{\mb x}_{i\natural} : \| \bar{\mb x}_{i\natural} - \mb x_1 \| \le s_\star + \Delta }    \sigma^2 \bigl[ - \phi \ast \dot{h} \bigr] \bigl( \| \mb x_1 - \bar{\mb x}_{i\natural} \| \bigr) \; d \mu_0( \bar{\mb x}_{i\natural} ) }{ \int_{\bar{\bar{\mb x}}_{i\natural}} [\phi \ast h] ( \| \mb x_1 - \bar{\bar{\mb x}}_{i\natural} \| ) \, d\mu_0(\bar{\bar{\mb x}}_{i\natural})  } \\ 
&\le&  C \max\left\{ \frac{\sigma^2}{\nu}, \frac{\sigma^2 \Delta}{\nu^2} \right\} \frac{ \int_{\bar{\mb x}_{i\natural} : \| \bar{\mb x}_{i\natural} - \mb x_1 \| \le s_\star + \Delta }  [\phi \ast h]  \bigl( \| \mb x_1 - \bar{\mb x}_{i\natural} \| \bigr) \; d \mu_0( \bar{\mb x}_{i\natural} ) }{ \int_{\bar{\bar{\mb x}}_{i\natural}} [\phi \ast h] ( \| \mb x_1 - \bar{\bar{\mb x}}_{i\natural} \| ) \, d\mu_0(\bar{\bar{\mb x}}_{i\natural})  }  \\ 
&\le& C \max\left\{ \frac{\sigma^2}{\nu}, \frac{\sigma^2 \Delta}{\nu^2} \right\}. 
\end{eqnarray} 
For the {\em far} term, we use the absolute bound \eqref{eqn:phi-conv-h-dot-upper} from Lemma \ref{lem:phi-conv-h-dot}, and the upper bound from Lemma \ref{lem:c-of-D}, which gives 
\begin{eqnarray}
\lefteqn{ \left\| \frac{ \int_{\bar{\mb x}_{i\natural} : \| \bar{\mb x}_{i\natural} - \mb x_1 \| > s_\star + \Delta}  \mb u(\mb x_1 - \bar{\mb x}_{i\natural} ) \, \sigma^2 \bigl[ - \phi \ast \dot{h} \bigr] \bigl( \| \mb x_1 - \bar{\mb x}_{i\natural} \| \bigr) \; d \mu_0( \bar{\mb x}_{i\natural} ) }{ \int_{\bar{\bar{\mb x}}_{i\natural}} [\phi \ast h] ( \| \mb x_1 - \bar{\bar{\mb x}}_{i\natural} \| ) \, d\mu_0(\bar{\bar{\mb x}}_{i\natural})  } \right\| } \nonumber \\
&\le& \Upsilon^{-1}  \int_{\bar{\mb x}_{i\natural} : \| \bar{\mb x}_{i\natural} - \mb x_1 \| > s_\star + \Delta}  \sigma^2 \bigl[ - \phi \ast \dot{h} \bigr] \bigl( \| \mb x_1 - \bar{\mb x}_{i\natural} \| \bigr) \; d \mu_0( \bar{\mb x}_{i\natural} ) \\ 
&\le& \sigma^2 \, \Upsilon^{-1} \, \mr{vol}(\mc M) \, \sup \Biggl\{  [ \phi \ast - \dot{h} ](t) \, | \, t > s_\star + \Delta \Biggr\} \\
&\le& C \frac{\mr{vol}(\mc M)}{\Upsilon} s_\star \exp\left( - \frac{\Delta^2}{2 \bar{\nu}^2} \right),
\end{eqnarray} 
with 
\begin{equation}
    \bar{\nu}^2 = \nu^2 + \sigma^2. 
\end{equation}
Using Lemma \ref{lem:vol-ratio} to control the volume ratio $\mr{vol}(\mc M) / \Upsilon$, we get 
\begin{eqnarray}
   \lefteqn{  \left\| \frac{ \int_{\bar{\mb x}_{i\natural} : \| \bar{\mb x}_{i\natural} - \mb x_1 \| > s_\star + \Delta}  \mb u(\mb x_1 - \bar{\mb x}_{i\natural} ) \, \sigma^2 \bigl[ - \phi \ast \dot{h} \bigr] \bigl( \| \mb x_1 - \bar{\mb x}_{i\natural} \| \bigr) \; d \mu_0( \bar{\mb x}_{i\natural} ) }{ \int_{\bar{\bar{\mb x}}_{i\natural}} [\phi \ast h] ( \| \mb x_1 - \bar{\bar{\mb x}}_{i\natural} \| ) \, d\mu_0(\bar{\bar{\mb x}}_{i\natural})  } \right\| } \nonumber \\
   &\le& C \exp \Bigl( \kappa d \, \mr{diam}(\mc M) - d \log(\kappa s_{\star,\parallel} ) \Bigr)   s_\star \exp\left( - \frac{\Delta^2}{2 \bar{\nu}^2 } \right). \qquad \qquad \qquad \qquad \qquad \qquad 
\end{eqnarray}
where 
\begin{equation}
s_{\star,\parallel}^2 = s_\star^2 - d^2(\mb q, \mc M).
\end{equation} 
Set 
\begin{equation}
    \Delta = C\bar{\nu} \sqrt{ 2 \left( \log\left(\frac{ s_\star{\nu}}{\sigma^2} \right) + \kappa d \, \mr{diam}(\mc M) - d \log( \kappa s_{\star,\parallel} ) \right) }
\end{equation}
We verify that this $\Delta$ satisfies $\Delta < \tfrac{1}{12} \sigma (D-3)^{1/6}$. 

Since 
$s_\star^2 \le 3 \sigma^2 D, D\ge C_1$, we have 
\begin{equation}
\nu^2 = \frac{\sigma^4 (D-3)}{2 s_\star^2} \ge \frac{\sigma^2}{7},
\end{equation}
and so 
\begin{equation}
    \bar{\nu}^2 = \nu^2 + \sigma^2 \le 8 \nu^2. 
\end{equation}

By definition of $\nu$ we also have 
\begin{equation}
    \log\left(\frac{ s_\star\nu}{\sigma^2} \right) 
    = \frac{1}{2}\log\left(\frac{ s_\star^2\nu^2}{\sigma^4}\right) 
    = \frac{1}{2}\log\left(\frac{ D-3}{2}\right),
\end{equation} which gives 

\begin{eqnarray}
   \Delta &\le&C'\nu\sqrt{\log\left(D \right) + \kappa d \, \mr{diam}(\mc M) + d \log( \frac{1}{\kappa s_{\star,\parallel}} ) }\\
   &=& C'\frac{\sigma^2 (D-3)^{1/2}}{s_\star}\sqrt{\log\left(D \right) + \kappa d \, \mr{diam}(\mc M) + d \log( \frac{1}{\kappa s_{\star,\parallel}} ) },
\end{eqnarray}
which, with our lower bound on $s_\star$ and appropriate choice of $C_2$, is bounded by $\tfrac{1}{12} \sigma(D-3)^{1/6}$. With this choice, the {\em far} term is bounded by $\frac{\sigma^2}{\nu}$, while the {\em near} term is bounded by 
\begin{equation}
\frac{C\sigma^2 \Delta}{\nu^2} \le \frac{C \sigma^2 \sqrt{\log\left(D \right) + \kappa d \, \mr{diam}(\mc M) + d \log( \frac{1}{\kappa s_{\star,\parallel}} ) }}{ \nu } \le \frac{ 2 C \sqrt{\log\left(D \right) + \kappa d \, \mr{diam}(\mc M) + d \log( \frac{1}{\kappa s_{\star,\parallel}} ) } }{ (D-3)^{1/2} } s_\star 
\end{equation}
giving the claim. 
\end{proof}

\subsection{Technical Lemmas for Controlling the Noise Size}

Consider the function 
\begin{equation}
h(s) = \bb P_{g_2,\dots, g_D \sim_{\mr{iid}} \mc N(0,1)} \left[ g_2^2 + \dots + g_D^2 \le \frac{R^2 - s^2  }{\sigma^2} \right]. 
\end{equation}
Because $h(s)$ is a decreasing function of $s$, 
\begin{equation}
-\dot{h}(s) > 0 \quad \forall \; s. 
\end{equation} 
Note that 
\begin{equation}
h(s) =  \frac{1}{\Gamma(\tfrac{D-1}{2})} \gamma \left( \frac{D-1}{2}, \frac{R^2- s^2}{2 \sigma^2} \right),
\end{equation}
and 
\begin{equation}
-\dot{h}(s) =  \frac{1}{\Gamma(\tfrac{D-1}{2})} \frac{s}{\sigma^2} \dot{\gamma} \left( \frac{D-1}{2}, \frac{R^2- s^2}{2 \sigma^2} \right),
\end{equation}
where $\gamma$ is the lower incomplete gamma function 
\begin{equation}
    \gamma(p,x) = \int_{t = 0}^x t^{p-1} e^{-t} dt.
\end{equation}
The following lemma upper and lower bounds the function $\dot{\gamma}(p,x)$ near the point $x_\star = p-1$: 

\begin{lemma}[Subgaussian Bounds for $\dot{\gamma}$ Near the Phase Transition.] \label{lem:icgf} Let $p \ge 28$ and $x_\star = p-1$. Then 
\begin{equation}
 \tfrac{1}{4} \dot{\gamma}(p,x_\star) \exp\left( - \frac{(x-x_\star)^2}{2 x_\star } \right) \le \dot{\gamma}(p,x) \le 4 \dot{\gamma}(p,x_\star) \exp\left( - \frac{(x-x_\star)^2}{2 x_\star } \right) \qquad x \in \left[x_\star-x_\star^{2/3},x_\star+x_\star^{2/3}\right]. 
\end{equation}
\end{lemma}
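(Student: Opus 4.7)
The plan is to exploit the explicit formula
\[
\dot\gamma(p,x)\;=\;x^{p-1}e^{-x},
\]
which follows immediately from differentiating $\gamma(p,x)=\int_0^x t^{p-1}e^{-t}dt$ in $x$. With this in hand, the ratio $\dot\gamma(p,x)/\dot\gamma(p,x_\star)$ becomes a simple elementary function and the whole lemma reduces to a one-dimensional Taylor expansion around the phase-transition point $x_\star=p-1$.

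First, using $p-1=x_\star$ and setting $\delta=x-x_\star$, I would write
\[
\log\frac{\dot\gamma(p,x)}{\dot\gamma(p,x_\star)}
\;=\;(p-1)\log\!\tfrac{x}{x_\star}-(x-x_\star)
\;=\;x_\star\log\!\Bigl(1+\tfrac{\delta}{x_\star}\Bigr)-\delta.
\]
Expanding $\log(1+u)=u-\tfrac{u^2}{2}+\sum_{k\ge3}(-1)^{k+1}u^k/k$ with $u=\delta/x_\star$, the linear terms cancel and one obtains
\[
\log\frac{\dot\gamma(p,x)}{\dot\gamma(p,x_\star)}
\;=\;-\frac{\delta^2}{2x_\star}\;+\;R(\delta,x_\star),
\qquad
R(\delta,x_\star)\;=\;\sum_{k=3}^\infty\frac{(-1)^{k+1}\delta^k}{k\,x_\star^{k-1}}.
\]
This already exhibits the subgaussian Gaussian kernel $\exp(-\delta^2/(2x_\star))$; the only task left is to show that the higher-order remainder $R$ contributes at most a bounded multiplicative factor in the stated range of $\delta$.

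Second, I would bound $|R|$ using $|\delta|\le x_\star^{2/3}$ and $p\ge28$, which gives $|u|=|\delta|/x_\star\le x_\star^{-1/3}\le 1/3$ since $x_\star\ge 27$. A straightforward geometric-series majorization yields
\[
|R|\;\le\;x_\star\sum_{k=3}^\infty\frac{|u|^k}{k}
\;\le\;\frac{x_\star|u|^3}{3(1-|u|)}
\;=\;\frac{|\delta|^3}{3x_\star^2(1-|\delta|/x_\star)}
\;\le\;\frac{1}{3\,(1-1/3)}\;=\;\tfrac12.
\]
Exponentiating the two-sided bound $-\tfrac12\le R\le\tfrac12$ gives
\[
e^{-1/2}\exp\!\Bigl(-\tfrac{\delta^2}{2x_\star}\Bigr)
\;\le\;
\frac{\dot\gamma(p,x)}{\dot\gamma(p,x_\star)}
\;\le\;
e^{1/2}\exp\!\Bigl(-\tfrac{\delta^2}{2x_\star}\Bigr),
\]
and since $e^{1/2}<4$ and $e^{-1/2}>\tfrac14$, the lemma follows with a comfortable margin.

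There is no real conceptual obstacle: the only delicate step is the remainder bookkeeping in Step~2, where one must verify that the exponent $x_\star^{2/3}$ in the hypothesis $|\delta|\le x_\star^{2/3}$ is exactly what makes $x_\star|u|^3=|\delta|^3/x_\star^2$ stay bounded as $p\to\infty$. This scaling is what forces the Gaussian approximation to the chi-square tail to persist in the $x_\star^{2/3}$-window around the mode, which is precisely the window the analysis of $h$ in Section~\ref{section: introduce h} requires. The constant $\tfrac12$ could easily be tightened by keeping one or two more Taylor terms, but $\tfrac12<\log 4$ suffices.
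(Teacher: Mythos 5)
Your proof is correct and follows essentially the same route as the paper: both use $\dot\gamma(p,x)=x^{p-1}e^{-x}$, expand $\log x$ about $x_\star=p-1$ so the linear terms cancel and the Gaussian factor $\exp\bigl(-\delta^2/(2x_\star)\bigr)$ emerges, and then bound the cubic-and-higher remainder by a constant below $\log 4$ on $|\delta|\le x_\star^{2/3}$ using $x_\star\ge 27$. The only difference is bookkeeping: you majorize the series tail of $\log(1+u)$ to get the factor $e^{1/2}$, while the paper uses a Lagrange-type third-order bound giving $\exp\bigl(x_\star\Delta_x^3/(3(x_\star-\Delta_x)^3)\bigr)\le e^{9/8}$ — both comfortably within the constant $4$.
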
 
\begin{proof}
    Please see Section \ref{sec:icgf}. 
\end{proof}

\noindent The following lemma uses these subgaussian bounds for the incomplete gamma function to obtain subgaussian bounds for the function $[-\dot{h}]$:

\begin{lemma}[Subgaussian Bounds for $-\dot{h}$.] Suppose that $D > 192$. Set 
\begin{equation}
s_\star = \sqrt{ R^2 - \sigma^2 (D-3) },
\end{equation}
and suppose that 
\begin{equation}
\sigma^2 D^{1/2} \le s_\star^2 \le 3 \sigma^2 D.
\end{equation}
Then the function $-\dot{h}(s)$ satisfies 
\begin{equation} \label{eqn:hdot-subgaussian} 
\frac{1}{8e}  \, C(D) \frac{s_\star}{\sigma^2} \exp\left( - \frac{\left( s - s_\star \right)^2 }{ 2 \nu^2 }\right) \le  -\dot{h}(s) \le 6e \, C(D) \frac{s_\star}{\sigma^2} \exp\left( - \frac{\left( s - s_\star \right)^2 }{ 2 \nu^2 }\right) 
\end{equation}
when  
\begin{equation}
|s - s_\star| \le \tfrac{1}{3} \sigma (D-3)^{1/6}.
\end{equation}
Here and in following lemmas, 
\begin{equation}
    C(D) = \frac{ \dot{\gamma}\left( \frac{D-1}{2}, \frac{D-3}{2} \right) }{ \Gamma\left(\frac{D-1}{2} \right) },
\end{equation}
and 
\begin{equation}
    \nu^2 = \frac{\sigma^4 (D-3)}{2 s_\star^2}. 
\end{equation}
\end{lemma}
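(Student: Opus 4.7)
The plan is to reduce the claim to Lemma \ref{lem:icgf} via the chain-rule identity
\[
-\dot{h}(s) \;=\; \frac{s}{\sigma^2}\cdot\frac{\dot{\gamma}(p, x(s))}{\Gamma(p)}
\;=\;\frac{s}{\sigma^2}\cdot C(D)\cdot\frac{\dot{\gamma}(p, x(s))}{\dot{\gamma}(p, x_\star)},
\]
with $p = (D-1)/2$, $x(s) = (R^2 - s^2)/(2\sigma^2)$, and $x_\star = (D-3)/2 = p - 1$. Note that $x(s_\star) = x_\star$ by definition of $s_\star$, so the target expression is exactly $(s_\star/\sigma^2) C(D)$ times the unit-free ratio $\dot{\gamma}(p,x(s))/\dot{\gamma}(p,x_\star)$, which is where Lemma \ref{lem:icgf} will do the heavy lifting.

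First I would verify the range condition $|x(s) - x_\star| \le x_\star^{2/3}$ needed to invoke Lemma \ref{lem:icgf}. Writing $|x(s) - x_\star| = |s - s_\star|(s + s_\star)/(2\sigma^2)$ and plugging in the hypothesis $|s - s_\star| \le \tfrac{1}{3}\sigma(D-3)^{1/6}$ together with $s_\star^2 \le 3\sigma^2 D$ gives $|x(s) - x_\star| \lesssim D^{2/3}$, which is below $x_\star^{2/3} = ((D-3)/2)^{2/3}$ once $D > 192$. Applying Lemma \ref{lem:icgf} then yields $\dot{\gamma}(p, x(s))/\dot{\gamma}(p, x_\star) \in [\tfrac{1}{4}, 4]\cdot\exp\bigl(-(x(s) - x_\star)^2/(2 x_\star)\bigr)$. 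I also need $s/\sigma^2 \in [(1-\epsilon)(s_\star/\sigma^2), (1+\epsilon)(s_\star/\sigma^2)]$ where $\epsilon = |s - s_\star|/s_\star$; using $s_\star \ge \sigma D^{1/4}$ from $s_\star^2 \ge \sigma^2 D^{1/2}$, I get $\epsilon \le 1/(3 D^{1/12}) < 1/2$ for $D > 192$.

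The central step is translating the Gaussian exponent from $x$-coordinates to $s$-coordinates. A direct algebraic manipulation gives
\[
\frac{(x(s) - x_\star)^2}{2 x_\star}
\;=\;\frac{(s - s_\star)^2 (s + s_\star)^2}{4\sigma^4(D-3)}
\;=\;\alpha(s)\cdot\frac{(s - s_\star)^2}{2\nu^2},
\qquad
\alpha(s) := \frac{(s+s_\star)^2}{4 s_\star^2},
\]
and I need to show $\alpha(s)$ is close enough to $1$ to preserve the subgaussian form up to constant factors. Since $\alpha(s) \in [(1-\epsilon/2)^2,(1+\epsilon/2)^2]$, the multiplicative slack introduced is $\exp\bigl(|\alpha(s) - 1|\cdot(s - s_\star)^2/(2\nu^2)\bigr)$. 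Using the definitions of $\epsilon$ and $\nu$, one shows $\epsilon\cdot(s - s_\star)^2/\nu^2 \lesssim s_\star/(\sigma\sqrt{D-3})$, which is $O(1)$ under $s_\star \le \sigma\sqrt{3D}$, so this slack is bounded by a universal constant (chosen to fit within $e^{\pm 1}$).

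Combining the three contributions --- the scalar prefactor giving $(1\pm\epsilon)s_\star/\sigma^2$, the Lemma \ref{lem:icgf} constants $\tfrac{1}{4}$ and $4$, and the exponent-discrepancy factor bounded by $e^{\pm 1}$ --- and tracking the worst case on each side yields the stated bounds $\tfrac{1}{8e}$ and $6e$. The main obstacle will be the bookkeeping of numerical constants: the slack in $\alpha(s)$, the slack in $s/s_\star$, and the $\tfrac{1}{4}/4$ from Lemma \ref{lem:icgf} must all be absorbed simultaneously into the stated constants, which forces a careful choice of how much of $\epsilon$'s smallness is consumed in each step and relies crucially on the joint use of the hypotheses $\sigma^2 D^{1/2} \le s_\star^2 \le 3\sigma^2 D$, $D > 192$, and $|s - s_\star| \le \tfrac{1}{3}\sigma(D-3)^{1/6}$.
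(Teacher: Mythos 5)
Your proposal is correct and matches the paper's proof in all essential respects: both reduce to Lemma \ref{lem:icgf} via the identity $x(s) - x_\star = (s_\star^2 - s^2)/(2\sigma^2)$, both verify the range condition $|x(s)-x_\star| \le x_\star^{2/3}$ under the stated hypotheses, and both control the exponent mismatch by bounding the deviation of $(s+s_\star)^2$ from $4s_\star^2$ (the paper via the convexity bound $(2+t)^2 \ge 4+4t$ together with the cubic estimate $s_\star|s-s_\star|^3/(\sigma^4(D-3)) < 1$, you via the equivalent ratio $\alpha(s) = (s+s_\star)^2/(4s_\star^2)$). The constant bookkeeping $(1\pm\epsilon)\cdot\{4,\tfrac14\}\cdot e^{\pm1}$ reproducing $6e$ and $\tfrac{1}{8e}$ is the same computation.
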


\begin{proof} We first note that that when $\sigma^2 D^{1/2} \le s_\star^2 \le 3 \sigma^2 D$ and 
$|s - s_\star| \le \tfrac{1}{3} \sigma (D-3)^{1/6}$
we have 
\begin{equation} \label{eqn:s-rel-diff}
|s - s_\star| < \tfrac{1}{2} s_\star
\end{equation}
and 
\begin{equation} \label{eqn:s-mixed-cube}
    \frac{5 s_\star | s - s_\star |^3}{4 \sigma^4 (D-3)} < 1.
\end{equation}

We would like to apply Lemma \ref{lem:icgf} (and noting that with $D>192$, the condition $p > 28$ of the lemma is satisfied for $p = (D-1)/2$) to bound $\dot{\gamma}$, we have 
\begin{eqnarray}
\lefteqn{ - \dot{h}(s) \quad =\quad  \frac{1}{\Gamma\left(\frac{D-1}{2} \right)} \frac{s}{\sigma^2} \dot{\gamma}\left( \frac{D-1}{2}, \frac{R^2 - s^2 }{2 \sigma^2} \right) } \nonumber \\ 
 &\le& \frac{4 \dot{\gamma}\left( \frac{D-1}{2}, \frac{D-3}{2} \right) }{ \Gamma\left(\frac{D-1}{2} \right) } \frac{s}{\sigma^2} \exp\left( - \frac{\left( \frac{R^2 - s^2}{2 \sigma^2} - \frac{D-3}{2} \right)^2 }{ D-3 }\right) \qquad   s_\star^2 - \sigma^2 D^{2/3} \le s^2 \le s_\star^2 + \sigma^2 D^{2/3} \nonumber \\
 &=& 4 C(D) \frac{s}{\sigma^2} \exp\left( - \frac{\left( s_\star^2 - s^2 \right)^2 }{ 4 \sigma^4 (D-3) }\right) \qquad   s_\star^2 - \sigma^2 D^{2/3} \le s^2 \le s_\star^2 + \sigma^2 D^{2/3}. 
\end{eqnarray} 
Here, a direct application of Lemma \ref{lem:icgf} ensures that these bounds hold when 
\begin{equation} \label{eqn:hdot-bound-natural-cond}
     \frac{D-3}{2} - \left( \frac{D-3}{2} \right)^{2/3}  \le \frac{R^2 - s^2}{2 \sigma^2} \le \frac{D-3}{2} + \left( \frac{D-3}{2} \right)^{2/3}.
\end{equation}
by plugging our definition of $s_\star^2 = R^2 - \sigma^2(D-3)$, this is equivalent as 
\begin{equation}
     \left|\frac{s_\star^2 - s^2}{2 \sigma^2}\right| \le \left( \frac{D-3}{2} \right)^{2/3}.
\end{equation}
so it's sufficient to show 
\begin{equation}
     |s_\star - s|(2s_\star+|s_\star - s|) \le {2 \sigma^2}\left(\frac{D-3}{2} \right)^{2/3}.
\end{equation}
applying our bounds of $|s_\star - s|$ and $s_\star$, it further suffices to show 

\begin{equation}
     \frac{1}{3}\sigma(D-3)^{1/6}(2\sqrt3\sigma\sqrt D + \frac{1}{3}\sigma(D-3)^{1/6}) \le {2 \sigma^2}\left(\frac{D-3}{2} \right)^{2/3}.
\end{equation}
rearranging the terms and canceling out $\sigma$, we need to show 
\begin{equation}
     \frac{2\sqrt3}{3}(D-3)^{1/6}\sqrt D + \frac{1}{9}(D-3)^{1/3} \le 2^{1/3}\left(D-3\right)^{2/3}.
\end{equation}
So as long as $D \ge 2^6 \times 3, \sqrt D \le \sqrt{D-3} \sqrt{\frac{64}{63}}$, and $(D-3)^{1/3} \ge 3$, we have
\begin{equation}
     \frac{2\sqrt3}{3}(D-3)^{1/6}\sqrt D + \frac{1}{9}(D-3)^{1/3} 
     \le (\sqrt{\frac{64}{63}}\frac{2\sqrt3}{3}+\frac{1}{27})(D-3)^{2/3}
     \le 2^{1/3}\left(D-3\right)^{2/3}.
\end{equation}
so the conditions for Lemma \ref{lem:icgf} hold. We next build a subgaussian bound for the exponential. Using convexity of the function $f(t) = (2 + t )^2$, i.e., $f(t) \ge f(0) + 4 t,$ we have  
\begin{eqnarray}
    (s + s_\star)^2 &=& s_\star^2 \left( 1 + \frac{s}{s_\star} \right)^2 \nonumber \\
    &=& s_\star^2 \left( 2 + \frac{s - s_\star}{s_\star} \right)^2 \nonumber \\
    &\ge& 4 s_\star^2 + 4 s_\star (s - s_\star).
\end{eqnarray}

 Hence 
\begin{eqnarray}  
 -\dot{h}(s) &\le& 4 C(D) \frac{s}{\sigma^2} \exp\left( - \frac{\left( s - s_\star \right)^2 \left(s + s_\star \right)^2 }{ 4 \sigma^4 (D-3) }\right)  \nonumber \\
  &\le& 4 C(D) \frac{s}{\sigma^2} \exp\left( - \frac{\left( s - s_\star \right)^2 s_\star^2 }{  \sigma^4 (D-3) }\right) \exp\left( - \frac{s_\star (s - s_\star)^3 }{\sigma^4 (D-3)} \right) \\
  &\le& 4e C(D) \frac{s}{\sigma^2} \exp\left( - \frac{\left( s - s_\star \right)^2 s_\star^2 }{  \sigma^4 (D-3) }\right), \label{eqn:h-dot-inter-1} \\
&\le& 6 e C(D) \frac{s_\star}{\sigma^2} \exp\left( - \frac{\left( s - s_\star \right)^2 s_\star^2 }{  \sigma^4 (D-3) }\right), \label{eqn:h-dot-inter-2}
\end{eqnarray}
where in \eqref{eqn:h-dot-inter-1} we have used \eqref{eqn:s-mixed-cube} and in \eqref{eqn:h-dot-inter-2} we have used \eqref{eqn:s-rel-diff}. 
For the lower bound, we similarly note that 
\begin{eqnarray}
 - \dot{h}(s) &\ge& \frac{1}{4} C(D) \frac{s}{\sigma^2} \exp\left( - \frac{\left( s_\star^2 - s^2 \right)^2 }{ 4 \sigma^4 (D-3) }\right) \qquad   s_\star^2 - \sigma^2 D^{2/3} \le s^2 \le s_\star^2 + \sigma^2 D^{2/3}. 
\end{eqnarray}
For $\tfrac{1}{2} s_\star \le s \le \tfrac{3}{2} s_\star$, we have 
\begin{eqnarray}
    (s + s_\star)^2 &=& s_\star^2 \left( 1 + \frac{s}{s_\star} \right)^2 \nonumber \\
    &=& s_\star^2 \left( 2 + \frac{s - s_\star}{s_\star} \right)^2 \nonumber \\
    &\le& 4 s_\star^2 + 5 s_\star |s - s_\star|
\end{eqnarray}
we obtain 
\begin{eqnarray}
    -\dot{h}(s)  &\ge& \frac{1}{4} C(D) \frac{s}{\sigma^2} \exp\left( - \frac{\left( s - s_\star \right)^2 s_\star^2 }{  \sigma^4 (D-3) }\right) \exp\left( - \frac{5 s_\star |s - s_\star|^3 }{4 \sigma^4 (D-3)} \right) \\
    &\ge& \frac{1}{4e} C(D) \frac{s}{\sigma^2} \exp\left( - \frac{\left( s - s_\star \right)^2 s_\star^2 }{  \sigma^4 (D-3) }\right) \label{eqn:h-dot-inter-3} \\
    &\ge& \frac{1}{8e} C(D) \frac{s_\star}{\sigma^2} \exp\left( - \frac{\left( s - s_\star \right)^2 s_\star^2 }{  \sigma^4 (D-3) }\right)  \label{eqn:h-dot-inter-4}  
\end{eqnarray}
where in \eqref{eqn:h-dot-inter-3} we have used \eqref{eqn:s-mixed-cube} and in \eqref{eqn:h-dot-inter-4} we have used \eqref{eqn:s-rel-diff}. This completes the proof. \end{proof} 

\begin{lemma}[Monotonicity of $-\dot{h}$ outside region of phase transition] \label{lem:monotoncity of -h dot}
Suppose 
\begin{equation}
96 \log(6) \times \sigma^2 D^{2/3} \le s_\star^2 \le 3 \sigma^2 D
\end{equation}
and $D > 48^3 + 3$. 
\newline 
Set 
\begin{equation*}
    \Delta_s = \tfrac{1}{12} \sigma (D-3)^{1/6}
\end{equation*}
Then 
\begin{eqnarray}
     \forall s \le s_\star - \Delta_s, \frac{d}{ds} \Bigl[ -\dot{h}(s) \Bigr] \ge 0 \\
          \forall s \ge s_\star + \Delta_s, \frac{d}{ds} \Bigl[ -\dot{h}(s) \Bigr] \le 0
\end{eqnarray}
\end{lemma}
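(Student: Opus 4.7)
The plan is to reduce the sign of $\frac{d}{ds}[-\dot h(s)]$ to a purely algebraic condition in $s^{2}$, and then verify that condition on the two required ranges.

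I start from the closed form $-\dot h(s)=\tfrac{s}{\sigma^{2}\Gamma(p)}\,u^{p-1}e^{-u}$, with $p=(D-1)/2$ and $u=(R^{2}-s^{2})/(2\sigma^{2})$, together with the identity $p-1-u=(s^{2}-s_{\star}^{2})/(2\sigma^{2})$, which follows directly from $s_{\star}^{2}=R^{2}-\sigma^{2}(D-3)$. Differentiating once more in $s$ (using $du/ds=-s/\sigma^{2}$) and substituting the identity gives
\begin{equation*}
\frac{d}{ds}\bigl[-\dot h(s)\bigr]\;=\;\frac{u^{p-1}e^{-u}}{\sigma^{2}\Gamma(p)}\left[\,1-\frac{s^{2}(s^{2}-s_{\star}^{2})}{\sigma^{2}(R^{2}-s^{2})}\,\right].
\end{equation*}
The prefactor is strictly positive for $|s|<R$, so the sign of the derivative is the sign of the bracket, and this sign depends only on $s^{2}$.

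For $s\le s_{\star}-\Delta_{s}$ in the relevant range $|s|\le s_{\star}$, one has $s^{2}\le s_{\star}^{2}$, so the second term in the bracket is nonpositive and the bracket is at least $1$. This gives the first monotonicity claim.

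For $s\ge s_{\star}+\Delta_{s}$, the bracket is nonpositive iff $s^{2}(s^{2}-s_{\star}^{2})\ge\sigma^{2}(R^{2}-s^{2})$. Substituting $R^{2}-s^{2}=\sigma^{2}(D-3)-(s^{2}-s_{\star}^{2})$ rewrites this equivalently as
\begin{equation*}
(s^{2}+\sigma^{2})(s^{2}-s_{\star}^{2})\;\ge\;\sigma^{4}(D-3).
\end{equation*}
Factoring $s^{2}-s_{\star}^{2}=(s-s_{\star})(s+s_{\star})\ge 2s_{\star}\Delta_{s}=\tfrac{1}{6}s_{\star}\sigma(D-3)^{1/6}$, and bounding $s^{2}+\sigma^{2}\ge s_{\star}^{2}$, the left side is at least $\tfrac{1}{6}s_{\star}^{3}\sigma(D-3)^{1/6}$, so the target inequality reduces to $s_{\star}^{2}\ge 6^{2/3}\sigma^{2}(D-3)^{5/9}$. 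Since $\tfrac{2}{3}>\tfrac{5}{9}$, the ratio $D^{2/3}/(D-3)^{5/9}$ is increasing and unbounded in $D$; a direct numerical check at $D=48^{3}+3$ shows that $96\log(6)\,D^{2/3}$ already dominates $6^{2/3}(D-3)^{5/9}$ there with huge margin, so the hypothesis $s_{\star}^{2}\ge 96\log(6)\,\sigma^{2}D^{2/3}$ forces the required bound for every $D>48^{3}+3$.

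The main obstacle is the final algebraic reduction: recognizing that $(s^{2}+\sigma^{2})(s^{2}-s_{\star}^{2})$ is the right quantity to bound from below, and then calibrating the hypothesis so that the constant $96\log(6)$ absorbs the factor $6^{2/3}$ and the gap between $D^{2/3}$ and $(D-3)^{5/9}$ at the threshold $D=48^{3}+3$. A minor subtlety, which does not affect the downstream use of the lemma in Theorem~\ref{thm:noise-size}, is that the first part is naturally restricted to $|s|\le s_{\star}$; for $s\le-(s_{\star}+\Delta_{s})$ the bracket again becomes negative because it depends only on $s^{2}$.
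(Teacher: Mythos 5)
Your proof is correct, and at its core it is the same strategy as the paper's: differentiate the closed-form expression for $-\dot h$ and reduce the sign of $\tfrac{d}{ds}[-\dot h(s)]$ to an algebraic condition in $s^2$. The difference is in how that condition is organized and verified. The paper rescales to $x=s/R$, obtains a downward quadratic in $x^2$, solves explicitly for its positive root, and then compares $(s_\star\mp\Delta_s)^2$ against that root through a chain of sufficient conditions (which requires the auxiliary fact $s_\star\Delta_s\ge\sigma^2$ even for the lower range). You instead exploit the identity $p-1-u=(s^2-s_\star^2)/(2\sigma^2)$ to write the criterion as $(s^2+\sigma^2)(s^2-s_\star^2)\lessgtr\sigma^4(D-3)$; this makes the whole region $s^2\le s_\star^2$ (hence all of $[0,s_\star-\Delta_s]$, and in fact a bit more) immediate with no use of the hypotheses, and turns the upper range into a one-line lower bound, $(s^2+\sigma^2)(s^2-s_\star^2)\ge \tfrac16 s_\star^3\sigma(D-3)^{1/6}$, which the assumption $s_\star^2\ge 96\log(6)\,\sigma^2D^{2/3}$ satisfies with large slack (your monotone-ratio-plus-threshold check is more elaborate than needed, since $96\log 6>6^{2/3}$ and $D^{2/3}\ge(D-3)^{5/9}$ already for $D\ge 4$, but it is valid). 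Your closing caveat about negative $s$ is also apt: the lemma as literally stated fails for $s\le-(s_\star+\Delta_s)$, and the paper's own proof carries the same implicit restriction to $|s|\le s_\star-\Delta_s$ since its criterion is likewise a function of $x^2$; as you note, this is harmless downstream because Lemma~\ref{lem:phi-conv-h-dot} only needs monotonicity on $[0,s_{\min}]$ and $[s_{\max},\infty)$, and $-\dot h(s)\le 0$ for $s\le 0$.
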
 

\vspace{.1in}

\begin{proof} 
We note that in the following proof $C(D), C'(D)$ represent positive constants independent of $s$. 
By definition, we have 
\begin{equation}
\begin{aligned}
    -\dot{h}(s) & =  \frac{1}{ \Gamma\left(\frac{D-1}{2} \right)} \frac{s}{\sigma^2}\left( \frac{R^2 - s^2}{2\sigma^2} \right)^{(D-3)/2} \exp\left( - \frac{R^2 - s^2}{2\sigma^2} \right) \\
    & = C(D) s \left( R^2 - s^2\right)^{(D-3)/2} \exp\left( - \frac{R^2 - s^2}{2\sigma^2} \right) \\
    & = C'(D) x \left(1-x^2\right)^{(D-3)/2} \exp\left( - \frac{R^2(1-x^2)}{2\sigma^2} \right) \text{ where } x = \frac{s}{R}\\
    & = C'(D) x \left(1-x^2\right)^{\alpha} \exp\left( - \beta(1-x^2) \right) \text{ where } \alpha = \frac{D-3}{2}, \beta = \frac{R^2}{2\sigma^2}
\end{aligned}
\end{equation}
Let $f(x) = x \left(1-x^2\right)^{\alpha} \exp\left( - \beta(1-x^2) \right) $, then we have 
\begin{equation}
    \left.\frac{d}{ds}\left[-\dot{h}(s)\right]\right|_{s=s_0} \ge 0 \quad \Longleftrightarrow \quad \left.\frac{d}{dx}\left[f(x)\right]\right|_{x=\frac{s_0}{R}} \ge 0.
\end{equation}
\\
We can further compute 
\begin{equation}
    f'(x) = \left(1-x^2\right)^{\alpha - 1}\exp\left( - \beta(1-x^2) \right)\left( -2\beta x^4 + (-2\alpha + 2\beta -1)x^2 + 1\right)
\end{equation}
which in turn implies that 
\begin{equation}
    \frac{d}{dx} \Bigl[f(x)\Bigr] \ge 0 \Longleftrightarrow -2\beta x^4 + (-2\alpha + 2\beta -1)x^2 + 1 \ge 0
\end{equation}
Since the right hand side is a downward-facing parabolic function with respect to $x^2$, with a positive root and a negative root, we have 
\begin{equation}
\label{eqn:quadratic root} 
    \frac{d}{dx} \Bigl[f(x)\Bigr] \ge 0 \longleftrightarrow x^2 \le \frac{-2\alpha + 2\beta - 1 + \sqrt{(-2\alpha + 2\beta - 1)^2 + 8\beta}}{4\beta}
\end{equation}
By assumption $s_\star^2 \ge  \sigma^2 D^{2/3}$, so we make the observation that $-2\alpha + 2\beta - 1 = \frac{R^2 - (D-2)\sigma^2}{\sigma^2} = \frac{{s_\star}^2 - \sigma^2}{\sigma^2} \ge 0$.\\

Then, we can plug $\alpha$ and $\beta$ back into the expression of the root on the right hand side of 
\eqref{eqn:quadratic root}, after simplifying, we have 
\begin{equation}
    \frac{d}{dx} \Bigl[f(x)\Bigr] \ge 0 \longleftrightarrow x^2 \le \frac{{s_\star}^2 - \sigma^2 
+ \sqrt{({s_\star}^2 - \sigma^2 )^2 + 4R^2\sigma^2}}{2R^2}
\end{equation}

Lastly, to show $\forall s \le s_\star - \Delta_s, \frac{d}{ds} \Bigl[ -\dot{h}(s) \Bigr] \ge 0$, it's sufficient to show 
\begin{equation}
    \frac{(s_\star - \Delta_s)^2}{R^2}\le \frac{{s_\star}^2 - \sigma^2 
+ \sqrt{({s_\star}^2 - \sigma^2 )^2 + 4R^2\sigma^2}}{2R^2}
\end{equation}
it's further sufficient to show 
\begin{equation}
    (s_\star - \Delta_s)^2\le {s_\star}^2 - \sigma^2 
\end{equation}
which is equivalent of 
\begin{equation}
    {s_\star}^2(1-\frac{\Delta_s}{s_\star})^2 \le {s_\star}^2(1-\frac{\sigma^2}{{s_\star}^2})
\end{equation}
So it suffices to show 
\begin{equation}
1-\frac{\Delta_s}{s_\star} \le 1- \frac{\sigma^2}{{s_\star}^2}    
\end{equation}
which is the same as 
\begin{equation}
\label{trivial condition}
    s_\star \Delta_s \ge \sigma^2
\end{equation}
Since $s_\star \ge \sigma D^{1/3}$ and $\Delta_s = \tfrac{1}{12} \sigma (D-3)^{1/6}$, with the given lower bound on D, \eqref{trivial condition} is trivially satisfied.\\

\vspace{.1in}

Similarly, to show $\forall s \ge s_\star + \Delta_s, \frac{d}{ds} \Bigl[ -\dot{h}(s) \Bigr] \le 0$, it's sufficient to show 
\begin{equation}
    \frac{(s_\star - \Delta_s)^2}{R^2}\ge \frac{{s_\star}^2 - \sigma^2 
+ \sqrt{({s_\star}^2 - \sigma^2 )^2 + 4R^2\sigma^2}}{2R^2}
\end{equation}
it's further sufficient to show 
\begin{equation}
    (s_\star + \Delta_s)^2\ge \sqrt{({s_\star}^2 - \sigma^2 )^2 + 4R^2\sigma^2}
\end{equation}
which is equivalent of 
\begin{equation}
    (s_\star + \Delta_s)^4\ge ({s_\star}^2 - \sigma^2 )^2 + 4R^2\sigma^2
\end{equation}
Since ${s_\star}^4 \ge ({s_\star}^2 - \sigma^2 )^2$, it's enough to show 
\begin{equation}
    4{s_\star}^3\Delta_s + 6{s_\star}^2{\Delta_s}^2 \ge 4R^2\sigma^2
\end{equation}
Since we have defined ${s_\star}^2 = R^2 - \sigma^2 (D-3) $  and recall from \eqref{trivial condition} $ s_\star \Delta_s \ge \sigma^2$
\begin{equation}
\begin{aligned}
    4{s_\star}^3\Delta_s + 6{s_\star}^2{\Delta_s}^2 &\ge 4 (R^2 - \sigma^2 (D-3))\sigma^2 +6{s_\star}^2{\Delta_s}^2 \\ 
    &= 4R^2\sigma^2 - 4 \sigma^4 (D-3)
 +6{s_\star}^2\frac{1}{144}\sigma^2(D-3)^{1/3} \\ 
    &\ge 4R^2\sigma^2 - 4 \sigma^4 (D-3) + 4\log(6) \sigma^4 D^{2/3}(D-3)^{1/3}\\
    &\ge 4R^2\sigma^2
\end{aligned}
\end{equation}
where in the second line we have used definition $\Delta_s = \tfrac{1}{12} \sigma (D-3)^{1/6}$,and in the third line we used the assumption ${s_\star}^2 \ge 96 \log(6) \times \sigma^2 D^{2/3}$
\end{proof} 

\paragraph{Convolutions with $\phi$.} The following lemma controls the convolution $\phi \ast - \dot{h}$:

\begin{lemma}[Upper and Lower Bounds with $\phi \ast -\dot{h}$] \label{lem:phi-conv-h-dot} Suppose 
\begin{equation}
24 \log(6) \times \sigma^2 D^{2/3} \le s_\star^2 \le 3 \sigma^2 D
\end{equation}
and $D > (48\log(6))^3 + 3$. Then 
\begin{eqnarray}
     \Bigl[\phi \ast -\dot{h}\Bigr] (s) &\le& 9 e C(D)\frac{\nu}{(\nu^2 + \sigma^2)^{1/2}} \frac{s_\star}{\sigma^2} \exp\left( - \frac{\left( s - s_\star \right)^2 }{ 2 (\nu^2+\sigma^2) }\right) \label{eqn:phi-conv-h-dot-upper}  \\
          \Bigl[\phi \ast -\dot{h}\Bigr] (s) &\ge& \frac{1}{16 e} C(D) \frac{\nu}{(\nu^2 + \sigma^2)^{1/2}} \frac{s_\star}{\sigma^2} \exp\left( - \frac{\left( s - s_\star \right)^2 }{ 2 ( \nu^2 + \sigma^2 ) }\right) 
\end{eqnarray}
when
\begin{equation}
|s-s_\star| \le \tfrac{1}{6} \sigma (D-3)^{1/6}.
\end{equation}
\end{lemma}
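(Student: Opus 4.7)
The plan is to split the convolution
\[
[\phi\ast -\dot h](s) \;=\; \int_{\mathbb R}\phi(v)\,(-\dot h)(s-v)\,dv
\]
into a near piece $|v|\le r$ where the subgaussian envelope \eqref{eqn:hdot-subgaussian} for $-\dot h$ applies directly, and a far piece suppressed jointly by the Gaussian tail of $\phi$ and the Gamma-tail decay of $-\dot h$. Choosing $r=\tfrac{1}{6}\sigma(D-3)^{1/6}$, the hypothesis $|s-s_\star|\le r$ guarantees $|s-v-s_\star|\le\tfrac{1}{3}\sigma(D-3)^{1/6}$ throughout $|v|\le r$, placing the near region squarely inside the subgaussian regime.

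On the near piece, substituting the subgaussian bounds turns the integrand into a product of two Gaussians in $v$. Completing the square,
\[
\tfrac{v^2}{2\sigma^2}+\tfrac{(s-v-s_\star)^2}{2\nu^2}
\;=\;\tfrac{(s-s_\star)^2}{2(\nu^2+\sigma^2)}
+\tfrac{\nu^2+\sigma^2}{2\sigma^2\nu^2}\!\left(v-\tfrac{(s-s_\star)\sigma^2}{\nu^2+\sigma^2}\right)^{\!2},
\]
so integrating $v$ over all of $\mathbb R$ produces exactly the claimed prefactor $\tfrac{\nu}{\sqrt{\nu^2+\sigma^2}}\exp\!\bigl(-\tfrac{(s-s_\star)^2}{2(\nu^2+\sigma^2)}\bigr)$, scaled by the subgaussian constants $6e$ (upper) or $\tfrac{1}{8e}$ (lower). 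For the upper bound, extending the non-negative near integrand to all of $\mathbb R$ is legitimate and delivers the full Gaussian-convolution value immediately. For the lower bound, restriction to $|v|\le r$ truncates a Gaussian in $v$ of standard deviation $\tilde\sigma=\sigma\nu/\sqrt{\nu^2+\sigma^2}$ and mean $\mu=(s-s_\star)\sigma^2/(\nu^2+\sigma^2)$, with $|\mu|\le\lambda r$ where $\lambda=\sigma^2/(\nu^2+\sigma^2)<1$; using $s_\star^2\le 3\sigma^2 D$ to force $\lambda\le 6D/(7D-3)$ and the hypothesis $D\ge(48\log 6)^3+3$ to make $r/\tilde\sigma$ large, a short computation with $\Phi((r-\mu)/\tilde\sigma)+\Phi((r+\mu)/\tilde\sigma)$ shows this truncation retains at least half the Gaussian mass, so $\tfrac{1}{8e}$ degrades to at worst $\tfrac{1}{16e}$.

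For the far piece $|v|>r$ (needed only for the upper bound), I use the explicit representation $-\dot h(s')=\tfrac{s'}{\sigma^2}\cdot\tfrac{y^{p-1}e^{-y}}{\Gamma(p)}$ with $y=(R^2-s'^2)/(2\sigma^2)$ and $p=(D-1)/2$, rewriting $y^{p-1}e^{-y}/\Gamma(p)\le C(D)\exp(-(p-1)g(y/(p-1)))$ where $g(u)=u-1-\log u$ is convex with $g(1)=0$. In the far region $|y-(p-1)|=|(s-v)^2-s_\star^2|/(2\sigma^2)\gtrsim s_\star\,r/\sigma^2$, giving subgaussian (or stronger) decay past the subgaussian band; combined with the Gaussian tail $\int_{|v|>r}\phi(v)\,dv\le 2\exp(-(D-3)^{1/3}/72)$, the far contribution is bounded in absolute value by a multiple of $C(D)\tfrac{s_\star}{\sigma^2}\exp(-(D-3)^{1/3}/72)\exp(-c\,s_\star^2/(\sigma^2 D^{2/3}))$. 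The hypothesis $s_\star^2\ge 24\log(6)\sigma^2 D^{2/3}$ is calibrated precisely so that this quantity absorbs into the slack between the near-piece coefficient $6e$ and the lemma's upper coefficient $9e$.

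The hard part is tail control for $-\dot h$ outside the subgaussian band: the naive pointwise bound $|-\dot h(s')|\le C(D)|s'|/\sigma^2$ loses a factor of $\sqrt{\nu^2+\sigma^2}/\nu$ against the main Gaussian prefactor and is not enough, so the subexponential Gamma-tail refinement via the convex function $g$ is essential. Tight bookkeeping of constants---so that the far contribution stays within $3e$ times the main Gaussian term for the upper bound, and the near truncation retains at least half the Gaussian mass for the lower bound---is the delicate part that makes the stated constants $9e$ and $\tfrac{1}{16e}$ close exactly.
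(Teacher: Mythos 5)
Your overall plan—splitting the convolution $\int\phi(v)(-\dot h)(s-v)\,dv$ at $|v|=r$ with $r=\tfrac16\sigma(D-3)^{1/6}$, doing the exact Gaussian--Gaussian computation on the near piece, and keeping half the truncated Gaussian mass for the lower bound—is a legitimate alternative to the paper's route (the paper instead majorizes/minorizes $-\dot h$ globally by the subgaussian envelope plus a band-edge constant, using the monotonicity Lemma \ref{lem:monotoncity of -h dot}, and then convolves exactly, absorbing the constant via \eqref{eqn:sg-conv-comparison}). The near-piece bookkeeping and the truncation estimate for the lower bound do check out numerically under the stated hypotheses. However, your far-piece argument for the upper bound contains a genuine error: the claim that on $|v|>r$ one has $|y-(p-1)|=|(s-v)^2-s_\star^2|/(2\sigma^2)\gtrsim s_\star r/\sigma^2$ is false. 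The constraint $|v|>r$ does not push $s-v$ away from $s_\star$; for example with $s=s_\star+r$ (allowed, since $|s-s_\star|\le r$) and $v=r+\epsilon$, the argument $s-v=s_\star-\epsilon$ sits essentially at the peak of $-\dot h$, where $-\dot h\asymp C(D)s_\star/\sigma^2$ and there is no Gamma-tail decay at all. Consequently the advertised far bound carrying the factor $\exp(-c\,s_\star^2/(\sigma^2D^{2/3}))$ does not follow from your argument, and the ``calibration'' of the hypothesis $s_\star^2\ge 24\log(6)\sigma^2D^{2/3}$ to that factor is not justified as written.

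The step is repairable, but not by the mechanism you describe. Either re-split the far region according to whether $s-v$ lies inside the subgaussian band (handling the in-band portion by the same completed-square Gaussian integral restricted to $|v|>r$, and the out-of-band portion by monotonicity or a Gamma-tail bound), or—contrary to your claim that it ``is not enough''—use the crude pointwise bound $-\dot h(s')\le C(D)s'/\sigma^2\le C(D)R/\sigma^2$ together with $\int_{|v|>r}\phi\le e^{-r^2/(2\sigma^2)}$ and compare against the \emph{worst-case} value $\exp(-r^2/(2\bar\nu^2))$ of the main Gaussian term over $|s-s_\star|\le r$ rather than against its peak: the losses $R/s_\star\le\bigl(1+D^{1/3}/(24\log 6)\bigr)^{1/2}$ and $\bar\nu/\nu\lesssim\sqrt7$ are offset by the gain $\exp\bigl(-\tfrac{r^2}{2}(\sigma^{-2}-\bar\nu^{-2})\bigr)$, and one checks this fits inside the $3e$ slack precisely because of the two hypotheses of the lemma. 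Two smaller points: (i) for the lower bound you cannot simply drop the far piece, since $-\dot h(s')<0$ for $s'<0$; the negative contribution lives on $v>s\ge s_\star-r$ and is of order $C(D)(R/\sigma^2)e^{-cs_\star^2/\sigma^2}$, hence harmless, but it must be said (and there is room, since your retained mass is about $0.72$, not just $0.5$); (ii) if you do invoke monotonicity of $-\dot h$ outside the band, note its hypothesis in the paper requires $96\log(6)\sigma^2D^{2/3}\le s_\star^2$, which is not implied by the present lemma's $24\log(6)$ condition, so that route needs its own justification.
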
 
\vspace{.1in}
\begin{proof} Since 
$s_\star^2 \le 3 \sigma^2 D, D\ge 48^3 + 3$, we have 
\begin{equation}
\nu^2 = \frac{\sigma^4 (D-3)}{2 s_\star^2} \ge \frac{\sigma^2}{6},
\end{equation}
which in turn implies that 
\begin{equation}
    \frac{\nu}{(\nu^2 + \sigma^2)^{1/2}} = \left( \frac{\nu^2}{\nu^2 + \sigma^2} \right)^{1/2} \ge \left( \frac{1/6}{1/6 + 1} \right)^{1/2} > \frac{1}{3}.
\end{equation}
Set 
\begin{eqnarray}
s_{\min}&=& s_\star - \tfrac{1}{3} \sigma (D-3)^{1/6}, \quad s_{\max} \;=\; s_\star + \tfrac{1}{3} \sigma (D-3)^{1/6}, \\ 
s_{\min}' &=& s_\star - \tfrac{1}{6} \sigma (D-3)^{1/6}, \quad s_{\max}' \;=\; s_\star + \tfrac{1}{6} \sigma (D-3)^{1/6}.
\end{eqnarray}
We have 
\begin{eqnarray}
\exp\left( - \frac{(s_{\max}' - s_\star)^2}{2 ( \nu^2 + \sigma^2 ) } \right) &\ge& \exp\left( - \frac{(s_{\max}' - s_\star)^2}{2 \nu^2 } \right) \nonumber \\ &\ge& \exp\left( - \frac{1}{4} \frac{(s_{\max} - s_\star)^2}{2 \nu^2 } \right) 
\;=\; \left[ \exp\left( -\frac{(s_{\max} - s_\star)^2}{2 \nu^2 } \right) \right]^{1/4}
\end{eqnarray}
and so 
\begin{align}
\exp\left( -\frac{(s_{\max} - s_\star)^2}{2 \nu^2 } \right) &\le \frac{\nu}{(\nu^2+\sigma^2)^{1/2}} \exp\left( - \frac{(s_{\max}' - s_\star)^2}{2 ( \nu^2 + \sigma^2 ) } \right) \times 3 \left[ \exp\left( - \frac{(s_{\max}' - s_\star)^2}{2 ( \nu^2 + \sigma^2 ) } \right) \right]^3 \\ 
&\le \frac{\nu}{(\nu^2+\sigma^2)^{1/2}} \exp\left( - \frac{(s_{\max}' - s_\star)^2}{2 ( \nu^2 + \sigma^2 ) } \right) \times 3 \exp\left( - \frac{(D-3)^{1/3}}{24} \frac{\sigma^2}{\nu^2+ \sigma^2} \right).
\end{align}
Under our hypotheses on $s_\star$ and $D$, 
\begin{equation}
\frac{(D-3)^{1/3}}{24} \frac{\sigma^2}{\nu^2+\sigma^2} \ge \log 6,
\end{equation}
and so for any $s \in [s_{\min}',s_{\max}']$, 
\begin{equation} \label{eqn:sg-conv-comparison}
\max \left\{ \exp\left( -\frac{(s_{\max} - s_\star)^2}{2 \nu^2 } \right), \exp\left( -\frac{(s_{\min} - s_\star)^2}{2 \nu^2 } \right) \right\} \le \frac{1}{2} \frac{\nu}{(\nu^2 + \sigma^2)^{1/2}} \exp\left( - \frac{(s - s_\star)^2 }{2 ( \nu^2 + \sigma^2) }\right).
\end{equation}
We use this relationship to establish the desired bounds: 

\vspace{.1in}

\noindent {\em Upper bound:} Write 
\begin{equation}
    \omega(s) = 6 e C(D) \frac{s_\star}{\sigma^2} \exp\left( - \frac{(s - s_\star)^2}{2 \nu^2} \right),
\end{equation}
and notice that on $[s_{\min},s_{\max}]$ the subgaussian approximation  \eqref{eqn:hdot-subgaussian} is valid. Then, using the monotonicity of $-\dot{h}$ from lemma \ref{lem:monotoncity of -h dot}, for all $s$ in the range, we have 
\begin{equation}
    -\dot{h}(s) \le \omega(s) + \max\Bigl\{ \omega(s_{\min}),\omega(s_{\max}) \Bigr\} \indicator{ s < s_{\min} \cup s > s_{\max} }, 
\end{equation}
whence 
\begin{eqnarray}
    \bigl[\phi \ast -\dot{h}\bigr](s) &\le& \phi \ast \omega(s) + \max\Bigl\{ \omega(s_{\min}),\omega(s_{\max}) \Bigr\} \\
     &=& 6e C(D) \frac{s_\star}{\sigma^2} \frac{\nu}{(\nu^2 + \sigma^2)^{1/2}} \exp\left( - \frac{(s-s_\star)^2}{2 (\nu^2 + \sigma^2 )} \right) +  \max\Bigl\{ \omega(s_{\min}),\omega(s_{\max}) \Bigr\},
\end{eqnarray}
where the first term follows from standard convolution identities (c.f., the convolution of two gaussians is itself a gaussian). Using \eqref{eqn:sg-conv-comparison}
\begin{equation}
\max\Bigl\{ \omega(s_{\min}),\omega(s_{\max}) \Bigr\} \le 3e C(D) \frac{s_\star}{\sigma^2} \frac{\nu}{(\nu^2 + \sigma^2)^{1/2}} \exp\left( - \frac{(s-s_\star)^2}{2 (\nu^2 + \sigma^2 )} \right) 
\end{equation}
for any $s \in [ s_{\min}', s_{\max}' ]$. This gives the claimed upper bound. 

\vspace{.1in}

\noindent {\em Lower bound:} Similarly, letting
\begin{equation}
    \eta(s) = \frac{1}{8 e} C(D) \frac{s_\star}{\sigma^2} \exp\left( - \frac{(s - s_\star)^2}{2 \nu^2} \right)
\end{equation}
we have 
\begin{equation}
-\dot{h}(s) \ge \eta(s) - \eta(s_{\min}) \indicator{s < s_{\min}} - \eta( s_{\max}) \indicator{s > s_{\max}}, 
\end{equation}
and
\begin{align} \label{eqn:phi-ast-h-dot-inter-lower}
\bigl[ \phi \ast - \dot{h} \bigr] (s) &\ge \frac{1}{8e} C(D) \frac{s_\star}{\sigma^2} \frac{\nu}{(\nu^2 + \sigma^2)^{1/2}} \exp\left( - \frac{(s-s_\star)^2}{2 (\nu^2 + \sigma^2 )} \right) -  \max\Bigl\{ \eta(s_{\min}),\eta(s_{\max}) \Bigr\}. 
\end{align}
By the same reasoning above, we have 
\begin{equation}
 \max\Bigl\{ \eta(s_{\min}),\eta(s_{\max}) \Bigr\} \le \frac{1}{16e} C(D) \frac{s_\star}{\sigma^2} \frac{\nu}{(\nu^2 + \sigma^2)^{1/2}} \exp\left( - \frac{(s-s_\star)^2}{2 (\nu^2 + \sigma^2 )} \right)
\end{equation}
for any $s \in [ s_{\min}', s_{\max}' ]$. Combining with \eqref{eqn:phi-ast-h-dot-inter-lower} gives the claimed lower bound.   
\end{proof} 

\begin{lemma}[Upper and Lower Bounds on $\phi \ast h$]  
\label{lem:phi-conv-h} Suppose that 
\begin{equation}
576\log(2) \times \sigma^2 D^{2/3} \le s_\star^2 \le 3 \sigma^2 D
\end{equation}
and 
\begin{equation}
D > (576\log(2))^3 + 3
\end{equation}
We have 
\begin{eqnarray}
\Bigl[ \phi \ast h \Bigr](s) &\ge&  \begin{cases} \frac{1}{64e^2} \cdot C(D) \frac{s_\star \nu}{\sigma^2} & 0 \le s \le s_\star + \bar{\nu}  \\ \frac{1}{64e} \cdot C(D) \frac{s_\star \nu}{\sigma^2} \frac{\bar{\nu}}{s - s_\star}  \exp\left( - \frac{(s-s_\star)^2}{2 (\nu^2 + \sigma^2)} \right) &  s_\star + \bar{\nu} \le s \le s_\star + \tfrac{1}{12} \sigma (D-3)^{1/6} \end{cases} \quad 
\\
\Bigl[ \phi \ast h \Bigr](s) &\le& \begin{cases} 1 & 0 \le s < \check{s}_\star\\
 4 \exp\left( - \frac{ ( s - \check{s}_\star )^2 }{ 2 ( \check{\nu}^2 + \sigma^2 ) } \right) & s \ge \check{s}_\star.
\end{cases} 
\end{eqnarray}
where 
\begin{eqnarray}
    \check{s}_\star^2 &=& R^2 - \sigma^2(D-1),\\
    \check{\nu}^2 &=& \frac{\sigma^4 (D-1)}{2 \check{s}_\star^2}.
\end{eqnarray}
\end{lemma}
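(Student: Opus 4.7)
My approach is to handle the upper and lower bounds by different routes. For the upper bound I will split the convolution integral at $\check{s}_\star$, using the trivial estimate $h\le 1$ on the left half and a chi-squared lower tail inequality (Laurent--Massart) on the right half. For the lower bound I will start from the representation $[\phi\ast h](s)=\int_s^{\infty}[\phi\ast -\dot h](u)\,du$ (valid since $h\to 0$ at infinity) and integrate the lower bound on $[\phi\ast -\dot h]$ supplied by Lemma~\ref{lem:phi-conv-h-dot} over a sub-interval inside its domain of validity.

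For the upper bound, the estimate $\le 1$ is immediate from $0\le h\le 1$ and $\int\phi=1$. When $s\ge\check{s}_\star$ I will use $h(u)=\bb P[\chi^2_{D-1}\le (D-1)+(\check{s}_\star^2-u^2)/\sigma^2]$ together with the Laurent--Massart lower tail inequality to deduce $h(u)\le\exp(-(u^2-\check{s}_\star^2)^2/(4\sigma^4(D-1)))$ on $\check{s}_\star\le u\le R$; the factorization $u^2-\check{s}_\star^2=(u-\check{s}_\star)(u+\check{s}_\star)\ge 2\check{s}_\star(u-\check{s}_\star)$ then upgrades this to $h(u)\le\exp(-(u-\check{s}_\star)^2/(2\check\nu^2))$. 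Splitting $[\phi\ast h](s)$ at $u=\check{s}_\star$, the left piece is a standard Gaussian tail bounded by $\exp(-(s-\check{s}_\star)^2/(2\sigma^2))$, and the right piece is a Gaussian-by-Gaussian convolution equal to $\tfrac{\check\nu}{\sqrt{\sigma^2+\check\nu^2}}\exp(-(s-\check{s}_\star)^2/(2(\sigma^2+\check\nu^2)))$. Both are dominated by $\exp(-(s-\check{s}_\star)^2/(2(\sigma^2+\check\nu^2)))$, so their sum is bounded by $2$ times this quantity, comfortably inside the claimed factor $4$.

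For the lower bound I will first verify the sizing estimate $\bar\nu\le\tfrac{1}{12}\sigma(D-3)^{1/6}$ using the hypothesis $s_\star^2\ge 576\log(2)\sigma^2 D^{2/3}$ together with $\nu^2\ge\sigma^2/6$ (which follows from $s_\star^2\le 3\sigma^2 D$). This places intervals of length $O(\bar\nu)$ around $s_\star$ safely inside the window $|u-s_\star|\le\tfrac{1}{6}\sigma(D-3)^{1/6}$ where Lemma~\ref{lem:phi-conv-h-dot} applies. In the first regime $0\le s\le s_\star+\bar\nu$, I will integrate $[\phi\ast -\dot h](u)$ over a length-$\bar\nu$ sub-interval contained in $[s,s_\star+\tfrac{1}{6}\sigma(D-3)^{1/6}]$ on which $e^{-(u-s_\star)^2/(2\bar\nu^2)}\ge e^{-2}$, which yields the plateau value $\tfrac{1}{64e^2}C(D)s_\star\nu/\sigma^2$ after collecting constants. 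In the second regime $s_\star+\bar\nu\le s\le s_\star+\tfrac{1}{12}\sigma(D-3)^{1/6}$, I will apply Mill's ratio $\int_a^\infty e^{-v^2/2}\,dv\ge\tfrac{1}{2a}e^{-a^2/2}$ at $a=(s-s_\star)/\bar\nu\ge 1$ to estimate $\int_{s-s_\star}^{\infty} e^{-u^2/(2\bar\nu^2)}\,du$, then subtract the excluded tail beyond $u=\tfrac{1}{6}\sigma(D-3)^{1/6}$. Because $\tfrac{1}{6}\sigma(D-3)^{1/6}\ge 2(s-s_\star)$ in this regime, the excluded tail is at most half the main term, producing the desired shape $\tfrac{\bar\nu}{s-s_\star}\exp(-(s-s_\star)^2/(2\bar\nu^2))$.

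The main obstacle I anticipate is the bookkeeping in the lower bound: because Lemma~\ref{lem:phi-conv-h-dot} controls $[\phi\ast -\dot h]$ only on a compact window around $s_\star$, I must show that the Gaussian mass outside this window is a strict fraction of the main term uniformly over the stated range of $s$. This is exactly what the quantitative hypothesis $s_\star^2\ge 576\log(2)\sigma^2 D^{2/3}$ is there to enforce, since it implies $\bar\nu\ll\sigma(D-3)^{1/6}$ and hence keeps the tail-to-main ratio safely below $1/2$. Once that comparison is established, the remainder reduces to routine Gaussian calculus and constant tracking.
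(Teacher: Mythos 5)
Your proposal is correct and follows essentially the same route as the paper's proof: the upper bound via the Laurent--Massart $\chi^2$ lower-tail inequality, splitting the convolution at $\check{s}_\star$ and using the Gaussian-by-Gaussian convolution identity plus a Gaussian tail bound, and the lower bound by writing $[\phi \ast h](s)$ as an integral of $[\phi \ast -\dot h]$ to the right of $s$ and inserting the subgaussian lower bound of Lemma~\ref{lem:phi-conv-h-dot}, with a tail-versus-main comparison to handle truncation at $s_\star + \tfrac{1}{6}\sigma(D-3)^{1/6}$. Your small deviations (Chernoff bound $Q(u)\le e^{-u^2/2}$ in place of the paper's Mills-ratio bound and split at $\check{s}_\star+\sigma$, a fixed length-$\bar\nu$ sub-interval in place of monotonicity of $\phi\ast h$ for the plateau, and the doubling argument $s_{\max}-s_\star\ge 2(s-s_\star)$ for the excluded tail) are cosmetic and reproduce the stated constants.
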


\begin{proof}
Set $s_{\max} = s_\star + \tfrac{1}{6} \sigma (D-3)^{1/6}$. By Lemma \ref{lem:phi-conv-h-dot} any $s \in (s_\star,s_{\max})$ we have 
\begin{eqnarray}
    \bigl[\phi \ast h\bigr](s) &=& \bigl[\phi \ast h\bigr](s_{\max}) - \int_{t = s}^{s_{\max}} \frac{d}{du} [ \phi \ast h ]\Bigr|_{u=t} dt \\
    &=& \bigl[\phi \ast h\bigr](s_{\max}) - \int_{t = s}^{s_{\max}}  [ \phi \ast \dot{h} ](t) \, dt \\
    &\ge&  \int_{t = s}^{s_{\max}}  [ \phi \ast -\dot{h} ](t) \, dt \\
    &\ge& \frac{1}{16 e} C(D) \frac{s_\star}{\sigma^2} \frac{\nu}{(\nu^2+ \sigma^2)^{1/2}} \int_{t = s}^{s_{\max} } \exp\Bigl( - \frac{(t-s_\star)^2}{2 (\nu^2 + \sigma^2)} \Bigr) dt \\
    &=& \frac{1}{16 e} C(D) \frac{\nu s_\star}{\sigma^2}  \int_{u = \frac{s - s_\star}{(\nu^2 + \sigma^2)^{1/2}}}^{\frac{ s_{\max} - s_\star }{(\nu^2+\sigma^2)^{1/2}}} \exp\Bigl( - \frac{u^2}{2} \Bigr) du \\
    &=& \left[ \frac{\sqrt{2 \pi}}{16 e} C(D) \frac{\nu s_\star}{\sigma^2} \right] \Biggl( Q\left( \frac{s - s_\star}{\bar{\nu}} \right) - Q\left( \frac{s_{\max} - s_\star }{\bar{\nu}} \right) \Biggr) 
    \end{eqnarray}
    where $Q(\cdot)$ is the Gaussian $Q$-function, $\bar{\nu} = (\nu^2 + \sigma^2)^{1/2}$ and we  have used the change of variables 
    \begin{equation}
    u = \frac{ t- s_\star }{ (\nu^2+ \sigma^2)^{1/2} }.
    \end{equation} 
Note that for arbitrary $u > \Delta \ge 0$,  
\begin{eqnarray}
    Q(u) &=& \frac{1}{\sqrt{2\pi}}\int_{v = u}^\infty e^{-v^2/2} dv = \frac{1}{\sqrt{2\pi}}\int_{v = u-\Delta}^\infty e^{-(v+\Delta)^2/2} dv \\ &\le& \frac{e^{-\Delta^2/2}}{\sqrt{2\pi}} \int_{v = u-\Delta}^\infty e^{-v^2/2} dv \\ &\le&  e^{-\Delta^2/2} Q(u - \Delta). 
\end{eqnarray}
Set $s_{\max}' = s_\star + \tfrac{1}{12} \sigma(D-3)^{1/6}$. Then 
\begin{eqnarray}
    Q\left( \frac{s_{\max} - s_\star }{\bar{\nu}} \right) &\le& \exp\left( - \frac{(s_{\max} - s_{\max}')^2}{2 \bar{\nu}^2} \right)     Q\left( \frac{s_{\max}' - s_\star }{\bar{\nu}} \right) \\
    &\le& \exp\left( - \frac{\sigma^2 (D-3)^{1/3}}{288 \bar{\nu}^2} \right)     Q\left( \frac{s_{\max}' - s_\star }{\bar{\nu}} \right) \\
    &\le& \frac{1}{2} Q\left( \frac{s_{\max}' - s_\star }{\bar{\nu}} \right), 
\end{eqnarray}
where in the final line we have used the fact that our hypotheses on $s_\star$ and $D$ imply 
\begin{equation}
\frac{(D-3)^{1/3}}{288} \frac{\sigma^2}{\nu^2+\sigma^2} \ge \log 2,
\end{equation}
which holds because 
\begin{equation}
    \nu^2 = \frac{\sigma^4(D-3)}{{s_\star}^2} \le \frac{\sigma^4(D-3)}{576 \log(6) \sigma^2 D^{2/3}} \le \frac{\sigma^2(D-3)^{1/3}}{576 \log(2)}
\end{equation}
and 
\begin{equation}
    \frac{(D-3)^{1/3}}{288} \frac{\sigma^2}{\nu^2+\sigma^2} \ge \frac{(D-3)^{1/3}}{288} \frac{1}{\frac{(D-3)^{1/3}}{576 \log(2)}+1} =  \frac{576 \log(2)}{288} \frac{(D-3)^{1/3}}{(D-3)^{1/3}+576 \log(2)}\ge \log 2
\end{equation}
Therefore, for $s \in [s_\star, s'_{\max}]$, 
\begin{eqnarray}
    [\phi \ast h](s) \ge \left[ \frac{\sqrt{2 \pi}}{32 e} C(D) \frac{\nu s_\star}{\sigma^2} \right] Q\left( \frac{s - s_\star}{\bar{\nu}} \right).
\end{eqnarray}
Using the lower bound $Q(u) > (u / (1+ u^2)) \phi(u)$ for $u > 0$,  
\begin{equation}
    Q(u) \ge  \frac{1}{2\sqrt{2 \pi} u} \exp\left( - \frac{u^2}{2} \right)  \qquad u \ge 1
\end{equation}
we obtain 
\begin{equation}
   [\phi \ast h](s) \ge \left[ \frac{1}{64 e} C(D) \frac{\nu s_\star}{\sigma^2} \right] \frac{\bar{\nu}}{s - s_\star} \exp\left( - \frac{(s - s_\star)^2}{2 \bar{\nu}^2} \right)  \qquad  s_\star + \bar{\nu} \le s \le s_{\max}'.
\end{equation}
For $0 \le s < s_\star + \bar{\nu}$, monotonicity of $\phi \ast h$ gives 
\begin{equation}
   [\phi \ast h](s) \ge \frac{1}{64 e^2} C(D) \frac{\nu s_\star}{\sigma^2}.
\end{equation}

\vspace{.1in}

\noindent For the upper bound, we start with the following subgaussian bound for the lower tail of a $\chi$-square random variable (see \cite{laurent2000adaptive})
\begin{equation}
\bb P_{X \sim \chi^2_{D-1}} \Bigl[ \, X \le \bb E[X] - t \, \Bigr] \le \exp\left( - \frac{t^2}{4 (D-1)} \right)
\end{equation}
Setting $\check{s}_\star^2 = R^2 - \sigma^2(D-1)$, for $s > \check{s}_\star$ we have 
\begin{eqnarray}
h(s) &\le& \exp\left( - \frac{ \left(\frac{R^2 - s^2}{\sigma^2}- (D-1) \right)^2 }{ 4 (D-1) } \right) \\
&\le& \exp\left( - \frac{ \left( R^2 - s^2 - \sigma^2 (D-1) \right)^2 }{ 4 \sigma^4 (D-1) } \right) \\
&\le& \exp\left( - \frac{ \left( s^2 - \check{s}_\star^2 \right)^2 }{ 4 \sigma^4 (D-1) } \right) \\ 
&\le& \exp\left( - \frac{(s - \check{s}_\star)^2 ( s + \check{s}_\star)^2}{ 4 \sigma^4 (D-1)} \right) \\
&\le& \exp\left( - \frac{4 \check{s}_\star^2 (s - \check{s}_\star)^2 }{ 4 \sigma^4 (D-1)} \right) \\
&\le& \exp\left( - \frac{(s - \check{s}_\star)^2 }{ 2 \check{\nu}^2 } \right) 
\end{eqnarray}  
with 
\begin{equation}
\check{\nu}^2 = \frac{\sigma^4 (D-1)}{2 \check{s}_\star^2}.
\end{equation}
In total, we have 
\begin{equation}
h(s) \le \begin{cases} 1 & s \le \check{s}_\star \\ 
\exp\left( -\frac{(s - \check{s}_\star)^2}{2 \check{\nu}^2} \right) & s > \check{s}_\star  \end{cases}
\end{equation} 
Writing $\omega(s) = \exp\left( -\frac{(s - \check{s}_\star)^2}{2 \check{\nu}^2} \right)$, we have 
$h(s) \le \omega(s) + \indicator {s \le \check{s}_\star }$, and so for $s > \check{s}_\star  + \sigma$, we have 
\begin{eqnarray}
[\phi \ast h](s) &\le& [\phi \ast \omega](s) + \int_{s' = 0}^{ \check{s}_\star } \phi(s-s') \, ds',  \\
 &\le& \frac{\check{\nu}}{\sqrt{\check{\nu}^2 + \sigma^2}} \exp \left( - \frac{ ( s - \check{s}_\star )^2 }{2 ( \check{\nu}^2 + \sigma^2 )} \right) + Q\left(\frac{s - \check{s}_\star}{\sigma} \right), \\
 &\le&  \frac{\check{\nu}}{\sqrt{\check{\nu}^2 + \sigma^2}} \exp \left( - \frac{ ( s - \check{s}_\star )^2 }{2 ( \check{\nu}^2 + \sigma^2 )} \right) + \frac{\sigma}{s - \check{s}_\star} \exp\left( - \frac{ (s - \check{s}_\star)^2}{2 \sigma^2} \right). 
 \end{eqnarray}
When {$\check{s}_\star^2 \le 3 \sigma^2 (D-2)$}, we have $\check{\nu}^2 = \frac{\sigma^4 (D-1)}{2 \check{s}_\star^2}\ge\frac{\sigma^2(D-1)}{2(3D-2)}\ge\frac{\sigma^2}{8}$, so
\begin{equation}
    \frac{\check{\nu}}{\sqrt{\check{\nu}^2 + \sigma^2}} \ge \frac{1}{3}.
\end{equation}
Using this fact to combine terms, we have that for $s > \check{s}_\star + \sigma$, 
\begin{eqnarray}
    [\phi \ast h](s) &\le& 4 \frac{\check{\nu}}{\sqrt{\check{\nu}^2 + \sigma^2}} \exp \left( - \frac{ ( s - \check{s}_\star )^2 }{2 ( \check{\nu}^2 + \sigma^2 )} \right) \le 4 \exp \left( - \frac{ ( s - \check{s}_\star )^2 }{2 ( \check{\nu}^2 + \sigma^2 )} \right),
\end{eqnarray}
as claimed. 
Lastly, we note that for $ \check{s}_\star + \sigma \ge s \ge \check{s}_\star$, $4 \exp \left( - \frac{ ( s - \check{s}_\star )^2 }{2 ( \check{\nu}^2 + \sigma^2 )} \right)\ge 4 \exp \left( - \frac{\sigma^2}{2 ( \check{\nu}^2 + \sigma^2 )} \right) \ge 4e^{-1/2}\ge 1$, so this upper bound works whenever $s \ge \check{s}_\star$
\end{proof}

\begin{lemma}[Relative Bounds] \label{lem:relative} 
Suppose that 
\begin{equation}
576\log(2) \times \sigma^2 D^{2/3} \le s_\star^2 \le 3 \sigma^2 D
\end{equation}
and 
\begin{equation}
D > (576\log(2))^3 + 3
\end{equation}
 then when $0 \le s \le s_\star + \tfrac{1}{12} \sigma (D-3)^{1/6}$, we have 
\begin{equation}
\frac{\phi \ast - \dot{h}(s)}{ \phi \ast h(s) } \le C \max \left\{ \frac{1}{\nu}, \frac{s - s_\star}{\nu^2} \right\} 
\end{equation}
\end{lemma}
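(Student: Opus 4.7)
The plan is to divide the upper bound on $[\phi\ast-\dot h](s)$ from Lemma~\ref{lem:phi-conv-h-dot} by the two-case lower bound on $[\phi\ast h](s)$ from Lemma~\ref{lem:phi-conv-h}, handling each case in turn. Throughout, the assumption $s_\star^2 \le 3\sigma^2 D$ (together with $D$ large) forces $\nu^2 \ge \sigma^2/6$, so $\bar\nu \le \sqrt{7}\,\nu$; any bound of the form $C/\bar\nu$ or $C(s-s_\star)/\bar\nu^2$ therefore translates into a bound in terms of $\nu$ matching the claim, after adjusting universal constants.

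In the first case $0\le s\le s_\star+\bar\nu$, the denominator is lower bounded by a constant multiple of $C(D)\,s_\star\nu/\sigma^2$. Dividing the numerator bound by this constant and using $\exp\!\bigl(-(s-s_\star)^2/(2\bar\nu^2)\bigr)\le 1$ yields a ratio of order $1/\bar\nu$, which contributes the first term $1/\nu$ in the maximum. In the second case $s_\star+\bar\nu\le s\le s_\star+\tfrac{1}{12}\sigma(D-3)^{1/6}$, the lower bound on the denominator carries precisely the same gaussian factor $\exp\!\bigl(-(s-s_\star)^2/(2\bar\nu^2)\bigr)$ as appears in the numerator, so the two exponentials cancel exactly. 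The remaining ratio of algebraic prefactors collapses to a universal constant times $(s-s_\star)/\bar\nu^2$, giving the second term in the claimed maximum. Taking the larger of the two case bounds yields the stated inequality.

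The main subtlety I anticipate is that Lemma~\ref{lem:phi-conv-h-dot} is stated only on the narrower window $|s-s_\star|\le\tfrac{1}{6}\sigma(D-3)^{1/6}$, whereas the claim allows $s$ as small as $0$. I expect this to be handled by inspecting the proof of Lemma~\ref{lem:phi-conv-h-dot}: the pointwise domination of $-\dot h$ by a shifted gaussian plus a small constant tail term is valid for all $s\ge 0$ via the monotonicity guaranteed by Lemma~\ref{lem:monotoncity of -h dot}, and for $s$ far below $s_\star$ the resulting convolution is even smaller than the subgaussian bound itself and is therefore absorbed by the fixed first-case lower bound on $\phi\ast h$, still producing a ratio of order $1/\bar\nu$. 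Assembling the two cases then completes the proof.
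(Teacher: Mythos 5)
Your proposal is correct and follows essentially the same route as the paper: divide the upper bound of Lemma~\ref{lem:phi-conv-h-dot} by the two-branch lower bound of Lemma~\ref{lem:phi-conv-h}, getting a ratio of order $1/\bar{\nu}$ on $[0,s_\star+\bar{\nu}]$ and, after the exponentials cancel, of order $(s-s_\star)/\bar{\nu}^2$ on $[s_\star+\bar{\nu}, s_\star+\tfrac{1}{12}\sigma(D-3)^{1/6}]$, then pass to $\nu$ since $\bar{\nu}\ge\nu$. Your resolution of the window issue for $s$ far below $s_\star$ (pointwise domination of $-\dot{h}$ via Lemma~\ref{lem:monotoncity of -h dot}, as in the proof of Lemma~\ref{lem:phi-conv-h-dot}) is exactly the mechanism the paper invokes for its Case~1 bound.
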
 
\begin{proof}
We consider two cases: $0 \le s \le s_\star + \bar{\nu}$ and $s_\star + \bar{\nu} < s < s_\star + \tfrac{1}{12} \sigma (D-3)^{1/6}$:

\vspace{.1in}

\noindent {\em Case 1, $0 \le s \le s_\star + \bar{\nu}$:} For the first case,{using monotonicity of $\phi \ast - \dot{h}$} from Lemma \ref{lem:monotoncity of -h dot} on $[0, s_\star - \tfrac{1}{12} \sigma (D-3)^{1/6}]$ as well as the upper bound from Lemma \ref{lem:phi-conv-h-dot}, we have 
\begin{equation}
\phi \ast - \dot{h}(s) \le \frac{9e C(D) \nu s_\star }{ \bar{\nu} \sigma^2 }.
\end{equation}
Meanwhile by Lemma \ref{lem:phi-conv-h}
\begin{equation}
\phi \ast h(s) \ge \frac{1}{64 e^2} C(D) \frac{s_\star \nu}{\sigma^2},
\end{equation}
whence 
\begin{equation}
\frac{\phi \ast - \dot{h}(s)}{ \phi \ast h(s) } \le \frac{64 \times 9 e^3}{\bar{\nu}}.
\end{equation}

\vspace{.1in}

\noindent {\em Case 2, $s_\star + \bar{\nu} < s < s_\star + \tfrac{1}{12} \sigma (D-3)^{1/6}$:} Under the stated condition on $s$, we have 
\begin{equation}
\phi \ast - \dot{h}(s) \le 9 e C(D) \frac{\nu s_\star}{\bar{\nu} \sigma^2} \exp\left( - \frac{(s-s_\star)^2}{ 2 \bar{\nu}^2} \right),
\end{equation}
and  
\begin{equation}
\phi \ast h(s) \ge \frac{1}{64 e} C(D) \frac{s_\star \nu}{\sigma^2} \frac{\bar{\nu}}{s - s_\star} \exp\left( - \frac{(s- s_\star)^2}{2 \bar{\nu}^2} \right). 
\end{equation}
giving 
\begin{equation}
\frac{\phi \ast -\dot{h}(s) }{\phi \ast h(s)} \le 64 \times 9 e^2 \frac{s - s_\star}{\bar{\nu}^2},
\end{equation} 
establishing the claimed bounds. \end{proof}

\section{Controlling the Signal Average} \label{sec:signal-avg}
In this section, we consider the following situation: we have a noisy landmark $\mb q$, which does not necessarily reside on $\mc M$. Let 
\begin{equation}
    \mb q_\natural = P_{\mc M} \mb q
\end{equation}
denote the nearest point to $\mb q$ on $\mc M$. In this section, we attempt to bound 
\begin{equation}
    \bb E\Bigl[ \, d_{\mc M}^2(\mb x_\natural,\mb q_\natural) \mid \| \mb x - \mb q \| \le R \, \Bigr].
\end{equation}
The following lemma does the job: 

\begin{lemma}[Intrinsic Distances of Grouped Points]\label{lem:expected-signal-distance} 
There exist constants $c_1, c_2, c_3, c_4, C_1,C_2$ such that whenever $D > C_1$, $\sigma\sqrt{D} \le \frac{\tau}{2}$, $d\left(\mb q, \mc M\right) \le \min\{\frac{c_1}{\kappa}, \frac{1}{2}\tau\}$, and  $s_\star = R^2 - \sigma^2 (D-3)$ satisfies 
{
\begin{equation}
\max\Bigl\{ 576 \log 2 \times \sigma^2 D^{2/3}, d^2(\mb q, \mc M) \Bigr\} \le s_\star^2 \le \min\left\{ 3 \sigma^2 D, d^2(\mb q, \mc M) + c_2\tau^2\right\} 
\end{equation}
}
and $s_{\star,\parallel}^2 = s_\star^2 - d^2\left(\mb q, \mc M\right), \: \check{s_{\star,\parallel}}^2 =s_{\star,\parallel}^2 - 2\sigma^2$ satisfies 
\begin{equation}
\sigma \sqrt{\log \left(\frac{\mr{diam}(\mc M)} {\check{s}_{\star,\parallel}}\right) + d \times \kappa \mr{diam}(\mc M) + d\log\left(\frac{1}{\kappa s_{\star,\parallel}}\right)} \le c_3\tau
\end{equation}
and 
\begin{equation}
    \check{s}_{\star,\parallel}^2 +\sqrt{\log \left(\frac{\mr{diam}(\mc M)}{\check{s}_{\star,\parallel}}\right) + d \times \kappa \mr{diam}(\mc M) + d\log\left(\frac{1}{\kappa s_{\star,\parallel}}\right)} \times \sigma^2 \sqrt{D} \le c_4 \check{s}_{\star}^2 
\end{equation}
we have 
\begin{equation}\label{eqn: expected value of signal average}
    \bb E\Bigl[ \, d_{\mc M}^2(\mb x_\natural,\mb q_\natural) \mid \| \mb x - \mb q  \| \le R \, \Bigr] \le C_2\check{s}_{\star,\parallel}^2 + C_2\sqrt{\log \left(\frac{\mr{diam}(\mc M)}{\check{s}_{\star,\parallel}}\right) + d \times \kappa \mr{diam}(\mc M) + d\log\left(\frac{1}{\kappa s_{\star,\parallel}}\right)} \times \sigma^2 \sqrt{D}
\end{equation} 
and 
\begin{equation}\label{eqn: variance of signal average}
    \bb E\Bigl[ \, d_{\mc M}^4(\mb x_\natural,\mb q_\natural) \mid \| \mb x - \mb q  \| \le R \, \Bigr] \le C_3\check{s}_{\star,\parallel}^4 + C_3\log \left(\frac{\mr{diam}(\mc M)}{\check{s}_{\star,\parallel}}\right) + d \times \kappa \mr{diam}(\mc M) + d\log\left(\frac{1}{\kappa s_{\star,\parallel}}\right) \times \sigma^4 D
\end{equation} 
\end{lemma}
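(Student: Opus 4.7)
The plan is to unfold the conditional expectation as a ratio of integrals over the clean manifold and then control numerator and denominator separately via a near-far decomposition around the phase-transition point of $\phi \ast h$. Repeating the tower-of-expectation calculation that produced $\bb P[\mc E_i \mid \mb x_{i\natural} = \bar{\mb x}] = [\phi \ast h](\|\bar{\mb x} - \mb q\|)$ in the proof of Theorem~\ref{thm:noise-size}, I would first write
\begin{equation*}
\bb E\bigl[d_{\mc M}^2(\mb x_\natural, \mb q_\natural) \,\big|\, \mc E\bigr] \;=\; \frac{\int_{\bar{\mb x}\in\mc M} d_{\mc M}^2(\bar{\mb x}, \mb q_\natural)\,[\phi \ast h]\bigl(\|\bar{\mb x} - \mb q\|\bigr)\,d\mu_0(\bar{\mb x})}{\int_{\bar{\mb x}\in\mc M} [\phi \ast h]\bigl(\|\bar{\mb x} - \mb q\|\bigr)\,d\mu_0(\bar{\mb x})}\,,
\end{equation*}
and then translate between the extrinsic distance $\|\bar{\mb x} - \mb q\|$ driving $\phi \ast h$ and the intrinsic distance $d_{\mc M}(\bar{\mb x}, \mb q_\natural)$. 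The key translation is a Pythagorean-type identity: since $\mb q - \mb q_\natural$ is orthogonal to $T_{\mb q_\natural}\mc M$ and $\bar{\mb x} - \mb q_\natural$ lies within $O(\kappa\, d_{\mc M}^2)$ of $T_{\mb q_\natural}\mc M$ whenever $d_{\mc M}(\bar{\mb x}, \mb q_\natural)\lesssim 1/\kappa$, one has
\begin{equation*}
\|\bar{\mb x} - \mb q\|^2 \;=\; d^2(\mb q, \mc M) \;+\; d_{\mc M}^2(\bar{\mb x}, \mb q_\natural)\bigl(1 + O(\kappa\, d_{\mc M}(\bar{\mb x},\mb q_\natural))\bigr)\,.
\end{equation*}

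For the denominator, I would lower-bound the integral by restricting $\bar{\mb x}$ to an intrinsic annulus about $\mb q_\natural$ of squared radius $\approx \check{s}_{\star,\parallel}^2$ and thickness $\approx \check{\nu}$, so that the Pythagorean relation places $\|\bar{\mb x}-\mb q\|\approx \check{s}_\star$, then apply the pointwise lower bound $[\phi \ast h]\gtrsim C(D)\nu s_\star/\sigma^2$ from Lemma~\ref{lem:phi-conv-h} together with a standard intrinsic-volume lower bound on a $d$-dimensional annulus. For the numerator I would invoke the layer-cake identity
\begin{equation*}
\int_{\bar{\mb x}\in\mc M} d_{\mc M}^2(\bar{\mb x},\mb q_\natural)\,[\phi\ast h]\,d\mu_0 \;=\; 2\int_0^{\diam(\mc M)} t \int_{\mc M\setminus B_{\mc M}(\mb q_\natural,t)} [\phi\ast h]\bigl(\|\bar{\mb x}-\mb q\|\bigr)\,d\mu_0(\bar{\mb x})\,dt\,,
\end{equation*}
and split the $t$-integral at a threshold $t_\star \approx \check{s}_{\star,\parallel}$. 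For $t\le t_\star$ the inner integral is bounded trivially by the full denominator, contributing $\lesssim t_\star^2$ after dividing. For $t > t_\star$, the Pythagorean relation gives $\|\bar{\mb x}-\mb q\|^2 \gtrsim d^2(\mb q,\mc M) + t^2$, and the Gaussian upper bound from Lemma~\ref{lem:phi-conv-h} produces subgaussian decay of order $\exp(-(t^2 - t_\star^2)/O(\bar{\nu}^2))$. The residual volume factor is absorbed by an intrinsic-ball bound of the form $\mr{vol}(B_{\mc M}(\mb q_\natural,r)) \lesssim r^d \exp(O(\kappa d\, r))$, analogous to the estimate underlying the use of Lemma~\ref{lem:vol-ratio} in the noise-size proof. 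Carrying out the Gaussian-tail integral against this exponential volume growth and dividing by the denominator delivers the claimed correction $\sqrt{\log(\diam(\mc M)/\check{s}_{\star,\parallel}) + d\,\kappa\diam(\mc M) + d\log(1/(\kappa s_{\star,\parallel}))}\cdot \sigma^2\sqrt{D}$. The fourth-moment bound~(\ref{eqn: variance of signal average}) follows from an identical argument with the weight $2t\,dt$ in the layer-cake replaced by $4t^3\,dt$; the near regime now contributes $\check{s}_{\star,\parallel}^4$, and the Gaussian tail picks up an additional $t^2 \lesssim \sigma^2 D\cdot \log(\cdot)$ factor, matching the stated scaling.

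The main obstacle is managing the curvature-dependent error in the Pythagorean relation: the multiplicative factor $1 + O(\kappa\, d_{\mc M})$ must remain strictly subdominant throughout the entire range over which the subgaussian bounds are applied, and this is precisely where the hypotheses $d(\mb q,\mc M)\le c_1/\kappa$, $\sigma\sqrt{D}\le \tau/2$, and the two compound conditions involving $\check{s}_{\star,\parallel}$ are used: they force both the bulk integration radius and the tail integration radius to stay inside the tubular neighborhood of reach $\tau$, on which $\mc P_{\mc M}$ is well-defined and curvature corrections are controllable. A secondary bookkeeping subtlety is that the subgaussian upper bound of Lemma~\ref{lem:phi-conv-h} is centered at $\check{s}_\star$ rather than $s_\star$ (with variance proxy $\check{\nu}^2 + \sigma^2$ rather than $\nu^2$); tracking the small shifts between these closely-related parameters must be done carefully so that the near-regime contribution comes out exactly as $\check{s}_{\star,\parallel}^2$ (and $\check{s}_{\star,\parallel}^4$ for the fourth moment) rather than a parameter that has drifted to a slightly larger quantity.
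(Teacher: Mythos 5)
Your proposal mirrors the paper's proof in all essential respects: both express the conditional expectation as $\int d_{\mc M}^2\,[\phi\ast h]\,d\mu_0 \big/ \int[\phi\ast h]\,d\mu_0$, both split this into a near region about $\mb q_\natural$ of intrinsic radius $\approx \check{s}_{\star,\parallel}$ plus a log-correction and a far tail, both translate intrinsic to extrinsic distance via the same reach/curvature-controlled Pythagorean estimate (Lemma~\ref{lem:far-distance} in the paper), both invoke the subgaussian upper bound on $\phi\ast h$ from Lemma~\ref{lem:phi-conv-h}, and both absorb the resulting mass ratio through the same volume comparison that Lemma~\ref{lem:vol-ratio} packages. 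The only technical divergence is that you use a layer-cake representation of $d_{\mc M}^2$ (and $d_{\mc M}^4$) in place of the paper's blunter indicator-function split, which lets you integrate the Gaussian tail against $t\,dt$ rather than bound $d_{\mc M}^2 \le \mr{diam}^2(\mc M)$ and take a sup; this is a cosmetic sharpening that leads to the same final scaling and the same choice of threshold, so the two proofs are morally identical.
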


\begin{proof}
Using the notation of the previous section, we have 
\begin{equation}
 \bb E\Bigl[ \, d_{\mc M}^2(\mb x_\natural,\mb q_\natural) \mid \| \mb x - \mb q  \| \le R \, \Bigr]  =    \frac{ \int_{\mb x_\natural \in \mc M} d^2( \mb x_\natural, \mb q_\natural ) \; [\phi \ast h] \Bigl( \| \mb q - \mb x_\natural \| \Bigr) \, d\mu_0( \mb x_\natural ) }{ \int_{\mb x_\natural \in \mc M} [\phi \ast h]\Bigl( \| \mb q - \mb x_\natural \| \Bigr)\, d\mu_0( \mb x_\natural ) }.
\end{equation}
We break this average into a ``near'' term and a ``far term''. Indeed, for any $\xi > 0$, we have 
\begin{eqnarray}
 \lefteqn{   \frac{ \int_{\mb x_\natural \in \mc M} d^2( \mb x_\natural, \mb q_\natural ) \; [\phi \ast h] \Bigl( \| \mb q - \mb x_\natural \| \Bigr) \, d\mu_0( \mb x_\natural ) }{ \int_{\mb x_\natural \in \mc M} [\phi \ast h]\Bigl( \| \mb q - \mb x_\natural \| \Bigr)\, d\mu_0( \mb x_\natural ) } }\nonumber \\
  &=& \frac{ \int_{\mb x_\natural \in \mc M, d_{\mc M}(\mb x_\natural, \mb q_\natural) < \xi} d^2( \mb x_\natural, \mb q_\natural ) \; [\phi \ast h] \Bigl( \| \mb q - \mb x_\natural \| \Bigr) \, d\mu_0( \mb x_\natural ) }{ \int_{\mb x_\natural \in \mc M} [\phi \ast h]\Bigl( \| \mb q - \mb x_\natural \| \Bigr)\, d\mu_0( \mb x_\natural ) }  \nonumber \\
  && \quad + \quad \frac{ \int_{\mb x_\natural \in \mc M, d_{\mc M}(\mb x_\natural, \mb q_\natural) \ge \xi} d^2( \mb x_\natural, \mb q_\natural ) \; [\phi \ast h] \Bigl( \| \mb q - \mb x_\natural \| \Bigr) \, d\mu_0( \mb x_\natural ) }{ \int_{\mb x_\natural \in \mc M} [\phi \ast h]\Bigl( \| \mb q - \mb x_\natural \| \Bigr)\, d\mu_0( \mb x_\natural ) }  \nonumber \\
 &\le&\label{eqn: bound signal average by near and far term} \xi^2 + \frac{\mr{diam}^2(\mc M) \, \mr{vol}(\mc M) }{\Upsilon} \sup_{d_{\mc M}(\mb x_\natural,\mb q_\natural) \ge \xi} \Biggl\{  [\phi \ast h]\Bigl( \| \mb q - \mb x_\natural \| \Bigr) \Biggr\} 
\end{eqnarray}
where equation \ref{eqn: bound signal average by near and far term} controls the near term by bounding the worst case distance $\xi^2$ and controls the far term by bounding worst case function value of $[\phi \ast h].$ From Lemma \ref{lem:vol-ratio}, we have 
\begin{equation}\label{eqn: bound vol(M) over Upsilon}
    \frac{\mr{vol}(\mc M)}{\Upsilon} \le \frac{C \exp \left(\kappa(d-1)\diam (\mathcal{M})\right)}{\min\{ (\kappa s_{\star, ||})^d, 1 \} }.
\end{equation}
From Lemma \ref{lem:phi-conv-h}, for $ \check{s}_\star < s \le 2 \check{s}_\star$, 
we have
\begin{eqnarray} 
\Bigl[ \phi \ast h \Bigr](s) &\le& 4 \exp\left( - \frac{(s - \check{s}_\star)^2}{2 (\check{\nu}^2 + \sigma^2) } \right) \nonumber \\
&=& 4 \exp\left( - \frac{(s^2 - \check{s}_\star^2)^2}{2 (\check{\nu}^2 + \sigma^2) } \times \frac{(s-\check{s}_\star)^2}{(s^2 - \check{s}_\star^2)^2} \right) \nonumber \\
&=& 4 \exp\left( - \frac{(s^2 - \check{s}_\star^2)^2}{2 (\check{\nu}^2 + \sigma^2) } \times \frac{1}{(s + \check{s}_\star )^2 } \right) \nonumber \\
&=& 4 \exp\left( - \frac{(s^2 - \check{s}_\star^2)^2}{2 (\check{\nu}^2 + \sigma^2) } \times \frac{1}{\check{s}_\star^2 \left( 2 + \frac{s-\check{s}_\star}{\check{s}_\star} \right)^2 } \right) \nonumber \\
&\le& 4 \exp\left( - \frac{(s^2 - \check{s}_\star^2)^2}{18 \check{s}_\star^2 (\check{\nu}^2 + \sigma^2) } \right) \nonumber \\
&\le& 4 \exp\left( - \frac{(s^2 - \check{s}_\star^2)^2}{162 \, \sigma^4 D} \right).
\end{eqnarray}
In the final line, we have used the fact that $\check{s}_\star^2 ( \check{\nu}^2 + \sigma^2 ) \le 9 \sigma^4 D$, which follows from the definition of $\check{\nu}$, the assumption that $s_\star^2 < 3 \sigma^2D$ and the fact that $\check{s}_\star \le s_\star$. Writing 
\begin{equation}
    \check{s}_\star^2 = d(\mb q, \mc M)^2 + \check{s}_{\star,\parallel}^2 
\end{equation}
From Lemma \ref{lem:far-distance}, for $\xi < \frac{\tau}{2} \le \tfrac{1}{2 \kappa}$, 
\begin{eqnarray}
  \inf_{d_{\mc M}(\mb x_\natural, \mb q_\natural) \ge \xi} \Bigl\{   \| \mb q - \mb x_\natural \|_2^2 \Bigr\} &\ge& \min\Bigl\{ \| \mb q - \mb q_\natural \|_2^2 + c \xi^2, \tau^2 \Bigr\} \quad=\quad  \| \mb q - \mb q_\natural \|_2^2 + c \xi^2,
\end{eqnarray}
because $\| \mb q - \mb q_\natural\|_2^2 + c \xi^2 \le \tau^2$, and by the monotonicity of $\phi \ast h$, we have 
\begin{eqnarray}
    \sup_{d_{\mc M}(\mb x_\natural,\mb q_{\natural}) \ge \xi} \left[ \phi \ast h \right]\Bigl( \| \mb q - \mb x_\natural \|_2 \Bigr) &\le&\label{sup of phi convolved with h} \left[ \phi \ast h \right]\Bigl( \left( \| \mb q - \mb q_{\natural} \|_2^2 + c \xi^2 \right)^{1/2} \Bigr).
\end{eqnarray}

Provided 
\begin{equation} 
\check{s}_\star < \Bigl( \| \mb q - \mb q_{\natural} \|_2^2 + c \xi^2 \Bigr)^{1/2} \le 2 \check{s}_\star,
\end{equation}
we have 
\begin{eqnarray}\label{eqn: bound phi convolve h when intrinsically far}
    \sup_{d_{\mc M}(\mb x_\natural,\mb q_{\natural}) \ge \xi} \left[ \phi \ast h \right]\Bigl( \| \mb q - \mb x_\natural \|_2 \Bigr) &\le& 4 \exp \left( - \frac{\left(  c \xi^2 - \check{s}^2_{\star,\parallel}  \right)^2}{162 \sigma^4 D} \right)
\end{eqnarray}
Combining our results from equations \eqref{eqn: bound signal average by near and far term}, \eqref{eqn: bound vol(M) over Upsilon}, and \eqref{eqn: bound phi convolve h when intrinsically far}, we have \begin{equation}
\label{eqn: substituted bound of signal average}E\Bigl[ \, d_{\mc M}^2(\mb x_\natural,\mb q_\natural) \mid \| \mb x - \mb q  \| \le R \, \Bigr] \leq \xi^2 + \diam(\mathcal{M})^2 \ \cdot \ C \frac{\exp (\kappa(d-1)\diam (\mathcal{M}))}{(\kappa s_{\star, ||} )^d} \ \cdot \ 4\exp\left(-\frac{(c\xi^2 - \check{s}_{\star,||}^2)^2}{162\sigma^4 D}\right),
\end{equation}
as long as $\xi < \tfrac{\tau}{2}$ and $\check{s}_\star^2 \le d^2(\mb q, \mc M) + c \xi^2 \le 4 \check{s}_\star^2.$
Choosing 
\begin{align}
    \xi^2 &=  c^{-1} \check{s}_{\star,\parallel}^2 + c^{-1} \sqrt{\log \left( C\frac{\mr{diam}^2(\mc M)}{\check{s}_{\star,\parallel}^2} \exp\left( d \times \kappa \mr{diam}(\mc M) \right) (\kappa s_{\star,\parallel})^{-d} \right)} \times \sqrt{162 \sigma^4 D},\\
    &\le  C\check{s}_{\star,\parallel}^2 + C' \sqrt{\log \left(\frac{\mr{diam}(\mc M)}{\check{s}_{\star,\parallel}}\right) + d \times \kappa \mr{diam}(\mc M) + d\log\left(\frac{1}{\kappa s_{\star,\parallel}}\right)} \times \sigma^2 \sqrt{D}
\end{align} 
the bounds \eqref{eqn: substituted bound of signal average} gives 
\begin{eqnarray}
    &&\bb E\left[ \, d^2_{\mc M}\left(\mb x_\natural, \mb q_\natural \right)  \, | \, \| \mb x- \mb q \| \le R \, \right] \\
    &\le& \xi^2 + C \check{s}_{\star,\parallel}^2.  \\ 
    &=& C'\check{s}_{\star,\parallel}^2 + C'' \sqrt{\log \left(\frac{\mr{diam}(\mc M)}{\check{s}_{\star,\parallel}}\right) + d \times \kappa \mr{diam}(\mc M) + d\log\left(\frac{1}{\kappa s_{\star,\parallel}}\right)} \times \sigma^2 \sqrt{D}
\end{eqnarray}
where in the first line we plugged our definition of $\xi^2$ and in the next line we applied the upper bound on $\xi^2.$ To verify that $\xi^2 < \tfrac{\tau^2}{4}$, we note our assumption gives 
$s_\star^2 \le d^2(\mb q, \mc M) + c_1\tau^2$, so it's sufficient to verify 
\begin{equation}
  \sqrt{\log \left(\frac{\mr{diam}(\mc M)}{\check{s}_{\star,\parallel}}\right) + d \times \kappa \mr{diam}(\mc M) + d\log\left(\frac{1}{\kappa s_{\star,\parallel}}\right)} \times \sigma^2 \sqrt{D}\le c\tau^2
\end{equation}
which holds given our assumption \[\sigma\sqrt{D} \le \frac{\tau}{2}\quad \text{and} 
\quad\sigma \sqrt{\log \left(\frac{\mr{diam}(\mc M)} {\check{s}_{\star,\parallel}}\right) + d \times \kappa \mr{diam}(\mc M) + d\log\left(\frac{1}{\kappa s_{\star,\parallel}}\right)} \le c_2\tau.\]
To check $\check{s}_\star^2 \le d^2(\mb q, \mc M) + c \xi^2 \le 4 \check{s}_\star^2$, we note the lower bound trivially gets satisfied because $\xi^2 \ge C\check{s_{\star,\parallel}}^2$ and here $C \ge c^{-1}$ and the upper bound follows from our assumption that
\begin{equation}
\check{s}_{\star,\parallel}^2 +\sqrt{\log \left(\frac{\mr{diam}(\mc M)}{\check{s}_{\star,\parallel}}\right) + d \times \kappa \mr{diam}(\mc M) + d\log\left(\frac{1}{\kappa s_{\star,\parallel}}\right)} \times \sigma^2 \sqrt{D} \le c \check{s}_{\star}^2 
\end{equation}
Similarly, to show \ref{eqn: variance of signal average}, we  apply the exact same argument as before,
\begin{eqnarray}
    \bb E\Bigl[ \, d_{\mc M}^4(\mb x_\natural,\mb q_\natural) \mid \| \mb x - \mb q  \| \le R \, \Bigr]  &=&    \frac{ \int_{\mb x_\natural \in \mc M} d^4( \mb x_\natural, \mb q_\natural ) \; [\phi \ast h] \Bigl( \| \mb q - \mb x_\natural \| \Bigr) \, d\mu_0( \mb x_\natural ) }{ \int_{\mb x_\natural \in \mc M} [\phi \ast h]\Bigl( \| \mb q - \mb x_\natural \| \Bigr)\, d\mu_0( \mb x_\natural ) }\\
    &=& \frac{ \int_{\mb x_\natural \in \mc M, d_{\mc M}(\mb x_\natural, \mb q_\natural) < \xi} d^4( \mb x_\natural, \mb q_\natural ) \; [\phi \ast h] \Bigl( \| \mb q - \mb x_\natural \| \Bigr) \, d\mu_0( \mb x_\natural ) }{ \int_{\mb x_\natural \in \mc M} [\phi \ast h]\Bigl( \| \mb q - \mb x_\natural \| \Bigr)\, d\mu_0( \mb x_\natural ) }  \nonumber \\
    &&\quad + \quad \frac{ \int_{\mb x_\natural \in \mc M, d_{\mc M}(\mb x_\natural, \mb q_\natural) \ge \xi} d^4( \mb x_\natural, \mb q_\natural ) \; [\phi \ast h] \Bigl( \| \mb q - \mb x_\natural \| \Bigr) \, d\mu_0( \mb x_\natural ) }{ \int_{\mb x_\natural \in \mc M} [\phi \ast h]\Bigl( \| \mb q - \mb x_\natural \| \Bigr)\, d\mu_0( \mb x_\natural ) }  \nonumber \\
 &\le& \xi^4 + \frac{\mr{diam}^4(\mc M) \, \mr{vol}(\mc M) }{\Upsilon} \sup_{d_{\mc M}(\mb x_\natural,\mb q_\natural) \ge \xi} \Biggl\{  [\phi \ast h]\Bigl( \| \mb q - \mb x_\natural \| \Bigr) \Biggr\} .
\end{eqnarray}
This time, choosing 
\begin{eqnarray}
    \bar{\xi}^2 &=&  c^{-1} \check{s}_{\star,\parallel}^2 + c^{-1} \sqrt{\log \left( C\frac{\mr{diam}^4(\mc M)}{\check{s}_{\star,\parallel}^4} \exp\left( d \times \kappa \mr{diam}(\mc M) \right) (\kappa s_{\star,\parallel})^{-d} \right)} \times \sqrt{162 \sigma^4 D},\\
    &\le&  C\check{s}_{\star,\parallel}^2 + C' \sqrt{\log \left(\frac{\mr{diam}(\mc M)}{\check{s}_{\star,\parallel}}\right) + d \times \kappa \mr{diam}(\mc M) + d\log\left(\frac{1}{\kappa s_{\star,\parallel}}\right)} \times \sigma^2 \sqrt{D},
\end{eqnarray} 
 the exponential bounds for $ \sup_{d_{\mc M}(\mb x_\natural,\mb q_{\natural}) \ge \bar{\xi}} \left[ \phi \ast h \right]\Bigl( \| \mb q - \mb x_\natural \|_2 \Bigr)$ which we previously developed in equation \ref{eqn: bound phi convolve h when intrinsically far} still holds, which gives us 
\begin{eqnarray}
    &&\bb E\left[ \, d^4_{\mc M}\left(\mb x_\natural, \mb q_\natural \right)  \, | \, \| \mb x- \mb q \| \le R \, \right] \\
    &\le& \bar{\xi}^4 + C \check{s}_{\star,\parallel}^4.  \\ 
    &\le& C'\check{s}_{\star,\parallel}^4 + C'' \log \left(\frac{\mr{diam}(\mc M)}{\check{s}_{\star,\parallel}}\right) + d \times \kappa \mr{diam}(\mc M) + d\log\left(\frac{1}{\kappa s_{\star,\parallel}}\right) \times \sigma^4 D
\end{eqnarray}
\end{proof}

\subsection*{Upper Bounding the Tail of $\phi \ast h$ over $\mc M$} 

\begin{lemma}[Points intrinsically far from $\mb q_\natural$ are far from $\mb q$.] \label{lem:far-distance} Suppose that $\| \mb q - \mb q_\natural \|_2 < \tfrac{1}{2}\tau$, where $\tau$ is the reach of $\mc M$. There exists a numerical constant $c > 0$ such that for $\xi < \tfrac{1}{ 2 \kappa }$,   
\begin{equation}
  \inf_{d_{\mc M}(\mb x_\natural, \mb q_\natural) \ge \xi} \Bigl\{   \| \mb q - \mb x_\natural \|_2^2 \Bigr\} \ge \min\Bigl\{ \| \mb q - \mb q_\natural \|_2^2 + c \xi^2, \tau^2 \Bigr\}.
\end{equation}
\end{lemma}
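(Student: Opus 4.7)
The plan is to exploit the fact that $\mb n := \mb q - \mb q_\natural$ lies in the normal space $N_{\mb q_\natural}\mc M$ (by definition of the nearest-point projection $\mb q_\natural = P_{\mc M}\mb q$, which is uniquely defined since $\|\mb q - \mb q_\natural\|_2 < \tau/2 < \tau$). Writing $\mb w := \mb x_\natural - \mb q_\natural$ for the chord, a Pythagorean expansion gives
\begin{equation*}
\|\mb q - \mb x_\natural\|_2^2 \;=\; \|\mb n\|_2^2 \;+\; \|\mb w\|_2^2 \;-\; 2\langle \mb n, \mb w\rangle,
\end{equation*}
and the cross term is controlled by the classical reach-based normal-cone inequality: for any submanifold of reach $\tau$, any normal vector $\mb n \in N_{\mb p}\mc M$, and any $\mb p' \in \mc M$, one has $|\langle \mb n, \mb p' - \mb p\rangle| \le \|\mb n\|_2 \|\mb p' - \mb p\|_2^2/(2\tau)$. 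This is a direct consequence of the two-tangent-balls-of-radius-$\tau$ characterization of the reach (equivalently, Federer's curvature inequality) and holds \emph{globally}, not just for $\mb p'$ close to $\mb p$. Combined with $\|\mb n\|_2 < \tau/2$, it gives $|\langle \mb n, \mb w\rangle| \le \|\mb w\|_2^2/4$, so the key lower bound
\begin{equation*}
\|\mb q - \mb x_\natural\|_2^2 \;\ge\; \|\mb n\|_2^2 \;+\; \tfrac{1}{2}\|\mb w\|_2^2.
\end{equation*}

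With this inequality in hand, I would case-split on $\|\mb w\|_2$. If $\|\mb w\|_2 \ge 3\tau/2$, the triangle inequality gives $\|\mb q - \mb x_\natural\|_2 \ge \|\mb w\|_2 - \|\mb n\|_2 \ge \tau$, delivering the $\tau^2$ branch of the min. Otherwise $\|\mb w\|_2 < 3\tau/2$, and I would lower bound $\|\mb w\|_2$ via the intrinsic constraint $d_{\mc M}(\mb x_\natural, \mb q_\natural) \ge \xi$ in two sub-regimes. When $\|\mb w\|_2 \le \tau/2$, the Niyogi--Smale--Weinberger inequality
\begin{equation*}
d_{\mc M}(\mb x_\natural,\mb q_\natural) \;\le\; \tau\bigl(1 - \sqrt{1 - 2\|\mb w\|_2/\tau}\bigr) \;\le\; 2\|\mb w\|_2
\end{equation*}
forces $\|\mb w\|_2 \ge \xi/2$. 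When $\|\mb w\|_2 > \tau/2$, the hypothesis $\xi < 1/(2\kappa)$ together with the standard submanifold identity $\kappa \tau \le 1$ (osculating-sphere radius bounds the reach) yields $\xi < \tau/2 < \|\mb w\|_2$, so again $\|\mb w\|_2 > \xi/2$. In either sub-regime the key inequality gives $\|\mb q - \mb x_\natural\|_2^2 \ge \|\mb n\|_2^2 + \xi^2/8$, which is the first branch of the min with $c = 1/8$.

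The main technical obstacle I anticipate lies at the interface between the two sub-regimes: the Niyogi bound is only valid for $\|\mb w\|_2 \le \tau/2$, so the middle range $\|\mb w\|_2 \in (\tau/2, 3\tau/2)$ has to be salvaged by invoking $\kappa \tau \le 1$ to transfer the hypothesis $\xi < 1/(2\kappa)$ into $\xi < \tau/2$. Verifying this identity explicitly (via Federer's theorem) and propagating constants cleanly through the normal-cone step and the Niyogi step is the bulk of the bookkeeping. A secondary, purely technical point is that the normal-cone inequality must be applied to the (non-unit) normal $\mb n$ itself so that the factor $\|\mb n\|_2 \le \tau/2$ is used at full strength to yield the clean factor $1/2$ in front of $\|\mb w\|_2^2$; without this the constant $c$ would degrade.
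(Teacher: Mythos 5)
Your overall route is genuinely different from the paper's and most of it is sound: the expansion $\|\mb q-\mb x_\natural\|_2^2=\|\mb n\|_2^2+\|\mb w\|_2^2-2\langle\mb n,\mb w\rangle$, Federer's normal inequality giving $\|\mb q-\mb x_\natural\|_2^2\ge\|\mb n\|_2^2+\tfrac12\|\mb w\|_2^2$, the far case $\|\mb w\|_2\ge 3\tau/2$, and the Niyogi--Smale--Weinberger case $\|\mb w\|_2\le\tau/2$ are all fine. (The paper instead classifies the minimizer of $\|\mb q-\cdot\|_2^2$ over the compact set $\{d_{\mc M}(\cdot,\mb q_\natural)\ge\xi\}$: an interior critical point forces $\|\mb q-\bar{\mb x}_\natural\|\ge\tau$ via injectivity of the normal map inside the reach, and a boundary point with $d_{\mc M}=\xi$ is handled by the geodesic Taylor bound $\|\exp_{\mb q_\natural}(\mb v)-\mb q\|\ge\sqrt{\|\mb n\|^2+\xi^2}-\tfrac12\kappa\xi^2$.) However, your treatment of the middle regime $\tau/2<\|\mb w\|_2<3\tau/2$ has a genuine gap: the deduction ``$\xi<\tfrac{1}{2\kappa}$ together with $\kappa\tau\le1$ yields $\xi<\tau/2$'' runs the inequality the wrong way. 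The fact $\kappa\tau\le1$ means $\tau/2\le\tfrac{1}{2\kappa}$, so $\xi<\tfrac{1}{2\kappa}$ gives no upper bound of $\tau/2$ on $\xi$; what you would need is $\tfrac{1}{2\kappa}\le\tau/2$, i.e.\ $\kappa\tau\ge1$, which is neither assumed nor true in general (the reach can be bottleneck-limited, with $\tau$ much smaller than $1/\kappa$). Consequently the range $\xi\in(\tau,\tfrac{1}{2\kappa})$ is permitted by the hypotheses, and there your middle-regime estimate only gives $\|\mb q-\mb x_\natural\|_2^2\ge\|\mb n\|_2^2+\tau^2/8$, which (since $\|\mb n\|_2<\tau/2$) is below the $\tau^2$ branch and below $\|\mb n\|_2^2+c\xi^2$, so the proof does not close. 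Note the paper's route is immune to this because its boundary case works directly at geodesic distance $\xi$ and only uses $\xi\le\tfrac{1}{2\kappa}$, with no comparison between $\xi$ and $\tau$.

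The fix within your framework is to discard the bogus $\kappa$--$\tau$ step and the $\tau/2$-limited NSW bound, and instead use a chord--arc comparison valid for all chords below $2\tau$, e.g.\ $d_{\mc M}(\mb x_\natural,\mb q_\natural)\le 2\tau\arcsin\bigl(\|\mb w\|_2/(2\tau)\bigr)\le\tfrac{\pi}{2}\|\mb w\|_2$ whenever $\|\mb w\|_2<2\tau$. Applied on the whole range $\|\mb w\|_2<3\tau/2$ this yields $\|\mb w\|_2\ge\tfrac{2}{\pi}\xi$, and then your key inequality gives $\|\mb q-\mb x_\natural\|_2^2\ge\|\mb n\|_2^2+\tfrac{2}{\pi^2}\xi^2$, which delivers the first branch of the minimum with an explicit (slightly smaller) constant and closes the argument; everything else in your proposal can stay as is.
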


\begin{proof} 
We seek a lower bound on 
\begin{equation}
\inf_{d_{\mc M}(\mb x_\natural, \mb q_\natural) \ge \xi} f(\mb x_\natural) \equiv  \| \mb q - \mb x_\natural \|^2_2
\end{equation} 
This problem asks us to minimize a continuous function $f$ over the compact set $\{ \mb x_\natural \mid d_{\mc M}(\mb x_\natural, \mb q_\natural) \ge \xi \}$. By compactness, at least one minimizer exists. Moreover, every minimizer $\mb x_\natural^\star$ is either (i) a critical point of $f$ over $\mc M$, or (ii) occurs on the boundary, i.e., satisfies $d_{\mc M}(\mb x_\natural^\star,\mb q_\natural ) = \xi$. 

\vspace{.1in} 

\noindent {\em Case (i).} Consider the squared distance function $f(\mb x_\natural) = \| \mb q - \mb x_\natural\|_2^2$. Any critical point $\bar{\mb x}_\natural$ of this function satisfies 
\begin{equation}
\mb q - \bar{\mb x}_\natural \in N_{\bar{\mb x}_\natural} \mc M
\end{equation}
I.e., if we consider the normal mapping $(\mb x_\natural,\mb \eta)\mapsto N( \mb x_\natural, \mb \eta) = \mb x_\natural + \mb \eta$, then we have 
\begin{equation}
N(\bar{\mb x}_\natural, \mb q - \bar{\mb x}_\natural) = \mb q = N( \mb q_\natural, \mb q - \mb q_\natural ). 
\end{equation} 
Since the normal map is injective on 
\begin{equation}
\mc M \times \mr{int}\Bigl( B^D(0,\tau) \Bigr)
\end{equation}
this implies that $\| \mb q - \bar{\mb x}_\natural \| \ge \tau$. 

\vspace{.1in}

\noindent {\em Case (ii).} For $\mb x_\natural$ satisfying $d_{\mc M}(\mb q_\natural, \mb x_\natural) = \xi$, there exists $\mb v \in T_{\mb q_\natural} \mc M$ with $\| \mb v \|_2 = \xi$ such that $\mb x_\natural = \exp_{\mb q_\natural}(\mb v)$. We have 
\begin{eqnarray}
\| \exp_{\mb q_\natural} (\mb v) - \mb q \| &\ge& \| \mb q_\natural +  \mb v - \mb q \| - \tfrac{1}{2} \kappa \|\mb v \|^2 \\ 
&\ge&\label{eqn: intermediate for intrinsic extrinsic bound} \left( \| \mb q - \mb q_\natural \|^2 + \xi^2 \right)^{1/2} - \tfrac{1}{2} \kappa \xi^2.
\end{eqnarray} 
Since $\| \mb q - \mb q_\natural \| \le \frac{1}{2}\tau\le\frac{1}{2\kappa}$ and $\xi \le \frac{1}{2\kappa},$ squaring \ref{eqn: intermediate for intrinsic extrinsic bound} gives 
\begin{eqnarray}
    \| \mb x_\natural - \mb q \|_2^2 &\ge& \| \mb q - \mb q_\natural \|_2^2 + \xi^2 - \kappa \xi^2 \Bigl( \| \mb q - \mb q_\natural \|_2^2 + \xi^2 \Bigr)^{1/2} + \tfrac{1}{4} \kappa^2 \xi^4 \\ 
    &\ge& \| \mb q - \mb q_\natural \|_2^2 + \xi^2 - \tfrac{1}{\sqrt{2}} \xi^2 + \tfrac{1}{4} \kappa^2 \xi^4 \\
    &\ge& \| \mb q - \mb q_\natural \|_2^2 + \left( \frac{\sqrt{2} -1 }{ \sqrt{2} } \right) \xi^2. 
\end{eqnarray}
Combining the two cases, gives the claimed result.
\end{proof}

\subsection*{Upper Bounding $\mr{vol}(\mc M) / \Upsilon$} 

\begin{lemma}[Volumes] \label{lem:vol-ratio} Suppose that $\mc M$ is a connected, geodesically complete $d \ge 2$-dimensional submanifold of $\bb R^D$. For $\mb q \in \bb R^D$, set 
\begin{equation}
\Upsilon =  \int_{\mb x_\natural \in \mc M} [\phi \ast h]\Bigl( \| \mb q - \mb x_\natural \| \Bigr)\, d\mu_0( \mb x_\natural )
\end{equation}
There exist numerical constants $c> 0, C_1,C_2,C_3$ such that whenever if $D>C_1$, $d(\mb q, \mc M) < c /\kappa$, and 
\begin{equation}
\max\left\{ C_2 \sigma^2 D^{2/3}, d^2(\mb q, \mc M ) \right\} \le s_\star^2 \le 3 \sigma^2 D,
\end{equation}
we have 
\begin{equation}
\frac{\mr{vol}(\mc M)}{\Upsilon} \le  \frac{ C_3 \exp \Bigl( \kappa (d-1) \, \mr{diam}(\mc M) \Bigr) }{ \min\left\{ (\kappa s_{\star,\parallel})^d, 1 \right\} },
\end{equation}
with $s_{\star,\parallel}^2 = s_\star^2 - d^2(\mb q, \mc M)$.
\end{lemma}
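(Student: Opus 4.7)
The plan is to bound $\mr{vol}(\mc M)/\Upsilon$ by separately lower-bounding $\Upsilon$ via a restriction to a small intrinsic ball around $\mb q_\natural = \mc P_{\mc M}\mb q$, and upper-bounding $\mr{vol}(\mc M)$ by a Bishop--Gromov type comparison.

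For the lower bound on $\Upsilon$, I would fix a radius $r = c'\min\{s_{\star,\parallel},\,1/\kappa\}$ for a small absolute constant $c'>0$, and restrict the integration to the intrinsic ball $B_{\mc M}(\mb q_\natural, r)$. For any $\mb x_\natural = \exp_{\mb q_\natural}(\mb v)$ with $\|\mb v\|\le r$, since $\mb q - \mb q_\natural \in N_{\mb q_\natural}\mc M$ is normal to $\mb v$, a standard expansion of the exponential map under the extrinsic-curvature bound gives
\[
\|\mb q - \mb x_\natural\|^2 \;=\; d^2(\mb q,\mc M) + \|\mb v\|^2 + O\bigl(\kappa\,d(\mb q,\mc M)\,\|\mb v\|^2\bigr).
\]
Using $d(\mb q,\mc M) \le c/\kappa$ with $c$ chosen small enough, the error is at most a tiny fraction of $r^2$, so $\|\mb q - \mb x_\natural\|^2 \le d^2(\mb q,\mc M) + (1 + o(1))\,r^2 \le s_\star^2$, whence $\|\mb q-\mb x_\natural\| \le s_\star + \bar\nu$. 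Lemma~\ref{lem:phi-conv-h} then yields the uniform lower bound $[\phi \ast h](\|\mb q - \mb x_\natural\|)\ge \tfrac{1}{64e^2}\,C(D)\,s_\star\nu/\sigma^2$.

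Next I would check that $C(D)\,s_\star\nu/\sigma^2 = \Theta(1)$ under the standing hypotheses. A direct Stirling estimate of $C(D) = \dot\gamma((D-1)/2,(D-3)/2)/\Gamma((D-1)/2)$ gives $C(D) \asymp D^{-1/2}$; combined with $\nu \asymp \sigma^2\sqrt{D}/s_\star$ this produces $C(D)\,s_\star\nu/\sigma^2 \asymp 1$ whenever $D > C_1$, which is why no factor of $C(D)$ survives in the statement. Combining this with a volume comparison for the intrinsic ball --- since the Gauss equation converts the extrinsic-curvature bound into a sectional-curvature bound $\mr{Sec}(\mc M) \ge -C\kappa^2$, and Bishop--Gromov gives $\mr{vol}(B_{\mc M}(\mb q_\natural,r)) \gtrsim r^d$ for $r \lesssim 1/\kappa$ --- I obtain
\[
\Upsilon \;\gtrsim\; \mr{vol}\bigl(B_{\mc M}(\mb q_\natural,r)\bigr)\,\cdot\, C(D)\,\frac{s_\star \nu}{\sigma^2} \;\gtrsim\; r^d.
\]

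For the upper bound on $\mr{vol}(\mc M)$, I would again invoke Bishop--Gromov with the same sectional-curvature bound, this time comparing the total volume to that of a ball of radius $\mr{diam}(\mc M)$ in the space form of constant curvature $-C\kappa^2$, giving $\mr{vol}(\mc M) \lesssim \kappa^{-d}\,\exp\bigl(\kappa(d-1)\mr{diam}(\mc M)\bigr)$. Dividing, the $\kappa^{-d}$ factor combines with $r^{-d}$ to produce $(\kappa r)^{-d}$, and my choice of $r$ yields exactly $\min\{(\kappa s_{\star,\parallel})^d,1\}$ in the denominator, establishing the claim. The main obstacle is handling both regimes $\kappa s_{\star,\parallel}\lessgtr 1$ uniformly through the single quantity $\min\{(\kappa s_{\star,\parallel})^d,1\}$; the other subtle point is verifying the Stirling-type asymptotic $C(D)s_\star\nu/\sigma^2 = \Theta(1)$ cleanly so that no hidden $D$-dependence sneaks into the constant.
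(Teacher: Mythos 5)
Your proposal is correct and matches the paper's route in all essential respects: restrict the integral defining $\Upsilon$ to the small intrinsic ball $B_{\mc M}(\mb q_\natural, r)$ with $r \asymp \min\{s_{\star,\parallel}, 1/\kappa\}$, show that on this ball $\|\mb q - \mb x_\natural\| \le s_\star$ so that the lower bound of Lemma~\ref{lem:phi-conv-h} gives $[\phi\ast h]\gtrsim C(D)\,s_\star\nu/\sigma^2 = \Theta(1)$, and then apply Bishop--Gromov to control the volume ratio. One small divergence: you propose to bound $\mr{vol}(B_{\mc M}(\mb q_\natural,r))$ from below and $\mr{vol}(\mc M)$ from above separately, which requires an \emph{upper} bound on sectional curvature (Günther) in addition to the Ricci lower bound (Bishop); while that two-sided bound is available via the Gauss equation, the paper's Lemma~\ref{lem:zovolumes} avoids the upper curvature bound entirely by invoking the Bishop--Gromov \emph{monotonicity} of the ratio $\mr{vol}(B_{\mc M}(\cdot,r))/\mr{vol}(B_{\mc K}(\cdot,r))$ directly, so the dimension-dependent prefactors (such as $\omega_d$) cancel without needing a separate Günther step — a slightly cleaner bookkeeping that you would benefit from in a full write-up. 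You also correctly flag that one needs $C(D) = \Theta(D^{-1/2})$ and not merely an upper bound; the paper's Lemma~\ref{lem:c-of-D} as stated gives only the upper bound, but the matching lower bound follows from the two-sided Stirling estimate in Lemma~\ref{tighter bound on Stirling approximation}, so your "subtle point" is exactly the right thing to verify.
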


\begin{proof} 
Set $\mb q_\natural = \mc P_{\mc M} \mb q$, so $d(\mb q,\mc M) = \| \mb q - \mb q_\natural \|_2$. For any choice of $\Delta > 0$, we have 
\begin{eqnarray}
\Upsilon = \int_{\mb x_\natural \in \mc M} [\phi \ast h]\Bigl( \| \mb q - \mb x_\natural \| \Bigr)\, d\mu_0( \mb x_\natural ) &\ge& \int_{d_{\mc M}( \mb x_\natural,\mb q_\natural) \le \Delta } \,  [\phi \ast h]\Bigl( \| \mb q - \mb x_\natural \| \Bigr)\, d\mu_0( \mb x_\natural ).
\end{eqnarray}
For $d_{\mc M}(\mb x_\natural, \mb q_\natural) \le \Delta \le \tfrac{1}{\kappa}$ and $c \le \tfrac{1}{4}$ we have 
\begin{eqnarray}
    \| \mb q - \mb x_\natural \|^2 
    &\le& \left( \| \mb q - \mb q_\natural - \log_{\mb q_\natural} \mb x_\natural \| + \tfrac{1}{2} \kappa \Delta^2 \right)^2 \nonumber \\
    &\le& \| \mb q - \mb q_\natural \|_2^2 + d^2_{\mc M}( \mb x_\natural, \mb q_\natural ) + \Bigl( \| \mb q - \mb q_\natural \|_2 + d_{\mc M}(\mb q_\natural,\mb x_\natural) \Bigr) \kappa \Delta^2 + \tfrac{1}{4} \kappa^2 \Delta^4 \nonumber \\
    &\le& \| \mb q - \mb q_\natural \|_2^2 + \Delta^2 + \Bigl( \frac{c}{\kappa} + \Delta  \Bigr) \kappa \Delta^2 + \tfrac{1}{4} \kappa^2 \Delta^4 \nonumber \\ 
    &\le& d^2(\mb q, \mc M) + \tfrac{5}{2} \Delta^2
\end{eqnarray}
and so, as long as 
\begin{equation}
    \Delta \le \min\left\{ \left( \frac{2}{5}\right)^{1/2} s_{\star,\parallel},  \; \frac{1}{\kappa} \right\},
\end{equation}
we have $\| \mb q - \mb x_\natural \|_2 \le s_\star$ and so by Lemma \ref{lem:phi-conv-h},
we have 
\begin{equation}
    [\phi \ast h ]\Bigl( \| \mb q - \mb x_\natural \| \Bigr) \ge \frac{1}{64e^2} \cdot C(D) \frac{s_\star \nu}{\sigma^2}
\end{equation}
where we recall $\nu^2 = \frac{\sigma^4 (D-3)}{2 s_\star^2},$ and $C(D) = \frac{
    \dot{\gamma}\left( \frac{D-1}{2}, \frac{D-3}{2} \right)}{ \Gamma(\frac{D-1}{2})},$ which is bounded in  Lemma \ref{lem:c-of-D}, further giving
\begin{equation}
    [\phi \ast h ]\Bigl( \| \mb q - \mb x_\natural \| \Bigr) \ge \frac{c}{\sqrt{D}}  \frac{s_\star \nu}{\sigma^2} = \frac{c}{\sqrt{D}} \sqrt{\frac{D-3}{2}} \ge c' 
\end{equation}
for $\mb x_\natural \in B_{\mc M}(\mb q_\natural, \Delta)$. Hence, 
\begin{eqnarray}
    \int_{\mb x_\natural \in \mc M} [\phi \ast h]\Bigl( \| \mb q - \mb x_\natural \| \Bigr) \, d\mu_0(\mb x_\natural ) &\ge& c' \mr{vol}\Bigl( B_{\mc M}(\mb q_\natural, \Delta ) \Bigr)
\end{eqnarray}
By Lemma \ref{lem:zovolumes}, 
\begin{equation}
    \frac{\mr{vol}\left(B_{\mc M}(\mb q_\natural,\Delta) \right)}{\mr{vol}(\mc M)} \ge \tfrac{1}{4} (2 \kappa \Delta)^d \exp \Bigl( - \kappa (d-1) \, \mr{diam}(\mc M) \Bigr),
\end{equation}
and hence 
\begin{equation}
    \frac{\mr{vol}(\mc M)}{\Upsilon} \le \frac{\mr{vol}(\mc M)}{c' \mr{vol}\left(B_{\mc M}(\mb q_\natural, \Delta) \right) } \le \frac{C \exp\Bigl( \kappa (d-1) \, \mr{diam}(\mc M) \Bigr) }{ (2 \kappa \Delta)^d }.
\end{equation}
Setting $\Delta = \tfrac{1}{2} \min\{ s_{\star,\parallel}, \kappa^{-1}\} $
completes the proof. 
\end{proof}

\section{Supporting Lemmas}

\subsection{Volumes} 

\begin{lemma}\label{lem:zovolumes} Suppose that $d \ge 2$. Then for any $\mb q \in \mc M$, 
\begin{equation}
    \frac{\mr{vol}\left(B_{\mc M}(\mb q,r) \right)}{\mr{vol}(\mc M)} \ge \tfrac{1}{4} (2 \kappa r)^d \exp \Bigl( - \kappa (d-1) \, \mr{diam}(\mc M) \Bigr)
\end{equation}
\end{lemma}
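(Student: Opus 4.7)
The plan is to apply a Bishop--Gromov-type volume comparison inequality. First, I would use the hypothesis that the extrinsic curvature (bound on the second fundamental form) is at most $\kappa$, together with the Gauss equation, to conclude that the intrinsic sectional curvatures of $\mc M$ are bounded in absolute value by $O(\kappa^2)$, and in particular Ricci curvature satisfies $\mr{Ric}\ge -(d-1)\kappa^2$. Since $\mc M$ is connected, geodesically complete, and of finite diameter, the Bishop--Gromov comparison theorem applies.

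Next, since $\mb q\in\mc M$ and the intrinsic diameter is $\diam(\mc M)$, every point of $\mc M$ lies in $B_{\mc M}(\mb q,\diam(\mc M))$, so $\mr{vol}(\mc M)=\mr{vol}(B_{\mc M}(\mb q,\diam(\mc M)))$. Bishop--Gromov then gives
\begin{equation*}
\frac{\mr{vol}(B_{\mc M}(\mb q,r))}{\mr{vol}(\mc M)}
\;=\;\frac{\mr{vol}(B_{\mc M}(\mb q,r))}{\mr{vol}(B_{\mc M}(\mb q,\diam(\mc M)))}
\;\ge\;\frac{V_\kappa(r)}{V_\kappa(\diam(\mc M))},
\end{equation*}
where $V_\kappa(t)=\omega_{d-1}\int_0^{t}\bigl(\sinh(\kappa s)/\kappa\bigr)^{d-1}ds$ is the volume of a geodesic ball of radius $t$ in the simply connected $d$-dimensional space form of constant curvature $-\kappa^2$, and $\omega_{d-1}$ denotes the surface area of $S^{d-1}$.

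The remaining step is a direct comparison of the model-space volumes. Using $\sinh(\kappa s)\ge \kappa s$ gives the lower bound
\begin{equation*}
V_\kappa(r)\;\ge\;\omega_{d-1}\int_0^{r} s^{d-1}\,ds\;=\;\frac{\omega_{d-1}\,r^{d}}{d}.
\end{equation*}
Using $\sinh(\kappa s)\le \tfrac{1}{2}e^{\kappa s}$ on $[0,\diam(\mc M)]$ gives the upper bound
\begin{equation*}
V_\kappa(\diam(\mc M))\;\le\;\frac{\omega_{d-1}}{(2\kappa)^{d-1}}\int_0^{\diam(\mc M)} e^{(d-1)\kappa s}\,ds
\;\le\;\frac{\omega_{d-1}\,e^{(d-1)\kappa\,\diam(\mc M)}}{(2\kappa)^{d-1}(d-1)\kappa}.
\end{equation*}
Dividing yields
\begin{equation*}
\frac{V_\kappa(r)}{V_\kappa(\diam(\mc M))}
\;\ge\;\frac{d-1}{d}\cdot\frac{(2\kappa r)^{d}}{2}\,e^{-(d-1)\kappa\,\diam(\mc M)}
\;\ge\;\tfrac14\,(2\kappa r)^{d}\,e^{-(d-1)\kappa\,\diam(\mc M)},
\end{equation*}
where the last inequality uses $(d-1)/(2d)\ge 1/4$ for $d\ge 2$, completing the proof.

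The main technical obstacle is ensuring the curvature-comparison framework is applicable: one needs to translate the extrinsic curvature hypothesis into an honest Ricci lower bound and invoke Bishop--Gromov in its standard (relative volume-comparison) form, using geodesic completeness so that geodesic balls are well-defined and cover $\mc M$. The rest is a routine $\sinh$-based estimate of the model-space volume ratio.
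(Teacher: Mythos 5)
Your proposal is correct and follows essentially the same route as the paper: lower-bound Ricci by $-(d-1)\kappa^2$, invoke Bishop--Gromov relative volume comparison against the hyperbolic model of constant curvature $-\kappa^2$, and then estimate the model-space volume ratio with $\sinh(t)\ge t$ and $\sinh(t)\le e^t/2$. The only differences are that you spell out the extrinsic-to-intrinsic step via the Gauss equation and the final numeric simplification using $(d-1)/(2d)\ge1/4$, both of which the paper leaves implicit.
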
  
\begin{proof}
We begin by noting that since the sectional curvatures of $\mc M$ are uniformly lower bounded by $-\kappa^2$, the Ricci curvarure satisfies 
\begin{equation}
    \mr{Ric} \ge -(d-1)\kappa^2. 
\end{equation}
Let $\mc K$ be a hyperbolic space of constant sectional curvature $-\kappa^2$. Then the Bishop-Gromov volume comparison inequality implies that for any $\mb q\in \mc M$
\begin{equation}
\frac{\mr{vol}(B_{\mc M}(\mb q,r))}{\mr{vol}(B_{\mc K}(\mb p,r))} \ge \frac{\mr{vol}(\mc M)}{ \mr{vol}( B_{\mc K}(\mb p,\mr{diam}(\mc M)) }
\end{equation}
for any $\mb p \in \mc K$, and so 
\begin{eqnarray}
\frac{\mr{vol}(B_{\mc M}(\mb q,r))}{\mr{vol}(\mc M)} &\ge& \frac{\mr{vol}(B_{\mc K}(\mb p,r))}{ \mr{vol}( B_{\mc K}(\mb p,\mr{diam}(\mc M)) } \\
&=& \frac{\int_{t = 0}^r \mr{sinh}^{d-1}(\kappa t) \, dt }{ \int_{t = 0}^{\mr{diam}(\mc M)} \mr{sinh}^{d-1}(\kappa t) \, dt } \\
&\ge& \frac{\int_{t=0}^r (\kappa t)^{d-1} dt }{ 2^{-d+1}\int_{t =0}^{\mr{diam}(\mc M)} e^{\kappa t (d-1)} dt } \label{eqn:vol-ratio-inter-1} \\ 
&=& \frac{\frac{1}{d} (2\kappa)^{d-1} r^d }{\frac{1}{\kappa (d-1)} ( e^{\kappa (d-1) \mr{diam}(\mc M)} - 1 ) } \\
&\ge& 2^{d-1} \kappa^d r^d  \left( \frac{d-1}{d} \right) \exp\Bigl( -\kappa (d-1) \, \mr{diam}(\mc M) \Bigr)
\end{eqnarray}
where in \eqref{eqn:vol-ratio-inter-1} 
we have used the relationships $\mr{sinh}(t) \le \frac{e^t}{2}$ and $\mr{sinh}(t) \ge t$. \end{proof}

\subsection{Projections} 
\begin{lemma} \label{lem:noisy-point distance} Let $\mb x = \mb x_\natural + \mb z$, with $\mb x_\natural \in \mc M$ and $\mb z \sim_{\mr{iid}} \mc N(0,\sigma^2)$. There exist constants $c_1, c_2 > 0, C_1$ such that if $\sigma^2 D < c_1 \tau^2(\mc M),$ then with probability at least $1 - e^{-c_2 d}$, 
\begin{equation} 
    \sigma \sqrt{D-d} - C_1 \bar\kappa \sigma \sqrt{d} \le d(\mb x, \mc M ) \le \sigma \sqrt{D-d} + C_1 \bar\kappa \sigma \sqrt{d}, 
\end{equation}
where $\bar\kappa = \max\{ \kappa, 1 \}$. 
\end{lemma}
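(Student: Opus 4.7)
The plan is to reduce the manifold–distance problem to a low-dimensional optimization by orthogonally decomposing the noise at $\mb x_\natural$ and locally parametrizing $\mc M$ as a graph. Split $\mb z = \mb z_\parallel + \mb z_\perp$ with $\mb z_\parallel \in T_{\mb x_\natural}\mc M$ and $\mb z_\perp \in N_{\mb x_\natural}\mc M$; by rotational invariance of the isotropic Gaussian, $\mb z_\parallel$ and $\mb z_\perp$ are independent with $\|\mb z_\parallel\|^2 \sim \sigma^2\chi^2_d$ and $\|\mb z_\perp\|^2 \sim \sigma^2\chi^2_{D-d}$. Laurent--Massart concentration then yields an event $\mc E_0$ of probability at least $1 - e^{-c_2 d}$ on which $\|\mb z_\parallel\| \le C\sigma\sqrt{d}$ and $\bigl|\,\|\mb z_\perp\| - \sigma\sqrt{D-d}\,\bigr| \le C\sigma\sqrt{d}$. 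In parallel, the reach hypothesis allows us to parametrize $\mc M$ locally as a graph: on $B(\mb 0,c_0\tau) \subset T_{\mb x_\natural}\mc M$ there is a smooth map $F$ into $N_{\mb x_\natural}\mc M$ with $F(\mb 0)=\mb 0$, $DF(\mb 0) = 0$, $\|F(\mb u)\| \le \tfrac{\kappa}{2}\|\mb u\|^2$, and $\mb y(\mb u) = \mb x_\natural + \mb u + F(\mb u) \in \mc M$.

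For the upper bound, take $\mb u = \mb z_\parallel$ (valid on $\mc E_0$, since $\|\mb z_\parallel\| \le C\sigma\sqrt{D} \le c_0\tau$ under the hypothesis $\sigma^2 D \le c_1 \tau^2$). Then
\begin{equation*}
d(\mb x,\mc M) \le \|\mb x - \mb y(\mb z_\parallel)\| = \|\mb z_\perp - F(\mb z_\parallel)\| \le \|\mb z_\perp\| + \tfrac{\kappa}{2}\|\mb z_\parallel\|^2.
\end{equation*}
On $\mc E_0$, $\tfrac{\kappa}{2}\|\mb z_\parallel\|^2 \le C\kappa\sigma^2 d$; using $\kappa\sigma\sqrt{D} \le \sqrt{c_1}$ this becomes $C\sqrt{c_1}\,\sigma\sqrt{d}$, giving $d(\mb x,\mc M) \le \sigma\sqrt{D-d} + C_1\bar\kappa\sigma\sqrt{d}$.

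For the lower bound, the upper bound forces the (unique) projection $\mb y^\star = \mc P_{\mc M}\mb x$ to satisfy $\|\mb y^\star - \mb x_\natural\| \le 2\|\mb z\| \le 2C\sigma\sqrt{D} \le c_0\tau$, so it lies in the graph chart and we can minimize over the graph parameter. For any admissible $\mb u$,
\begin{equation*}
\|\mb x - \mb y(\mb u)\|^2 = \|\mb z_\parallel - \mb u\|^2 + \|\mb z_\perp - F(\mb u)\|^2 \;\ge\; \|\mb z_\parallel - \mb u\|^2 + \|\mb z_\perp\|^2 - \kappa\|\mb z_\perp\|\|\mb u\|^2,
\end{equation*}
where we used Cauchy--Schwarz and $\|F(\mb u)\| \le \tfrac{\kappa}{2}\|\mb u\|^2$. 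The right side is a strongly convex quadratic in $\mb u$ as long as $\kappa\|\mb z_\perp\| < 1$ (ensured by $\sigma^2 D \le c_1\tau^2$), with unconstrained minimum value $\|\mb z_\perp\|^2 - \tfrac{\kappa\|\mb z_\perp\|\|\mb z_\parallel\|^2}{1-\kappa\|\mb z_\perp\|} \ge \|\mb z_\perp\|^2 - 2\kappa\|\mb z_\perp\|\|\mb z_\parallel\|^2$. Taking square roots and invoking the concentration bounds on $\mc E_0$ gives $d(\mb x,\mc M) \ge \|\mb z_\perp\| - \kappa\|\mb z_\parallel\|^2 \ge \sigma\sqrt{D-d} - C_1\bar\kappa\sigma\sqrt{d}$, using the same $\kappa\sigma^2 d \lesssim \sqrt{c_1}\,\sigma\sqrt{d}$ bookkeeping as in the upper bound.

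The main obstacle is ensuring the graph-based minimization correctly captures the true projection: points $\mb y \in \mc M$ with $\|\mb y - \mb x_\natural\| > c_0\tau$ fall outside the chart, and the unconstrained minimizer of the quadratic needs to live inside the chart. Both are resolved by bootstrapping from the upper bound. Any $\mb y$ outside the chart satisfies $\|\mb x - \mb y\| \ge c_0\tau - \|\mb z\| \gg \sigma\sqrt{D}$, which already beats the target $\sigma\sqrt{D-d} - C\bar\kappa\sigma\sqrt{d}$ and so cannot be a candidate for the projection. And the closed-form optimum $\mb u^\star = \mb z_\parallel/(1-\kappa\|\mb z_\perp\|)$ has norm $O(\sigma\sqrt{d}) \ll c_0\tau$ on $\mc E_0$, so the chart constraint is inactive.
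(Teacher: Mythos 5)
Your proposal is correct in substance but takes a genuinely different route from the paper. The paper first controls the \emph{expectation} $\bb E_{\mb z}[d(\mb x,\mc M)]$: the upper bound via the exponential map, and the lower bound by differentiating the projection $\mc P_{\mc M}$ along the segment $\mb x_\natural + t\mb z$ (the Leobacher--Steinicke formula and a Neumann-series bound), reducing to the Gaussian-process supremum $T_{\max}=\sup_{\bar{\mb x}\in B_{\mc M}(\mb x_\natural,1/\kappa),\,\mb v\in T_{\bar{\mb x}}\mc M}\langle \mb v,\mb z\rangle$, which is bounded by $C\bar\kappa\sigma\sqrt d$ by an external theorem, together with a truncation/Cauchy--Schwarz step for the event $\|\mb z\|\le\tau/6$; it then converts to a high-probability statement using the $1$-Lipschitzness of $\mb z\mapsto d(\mb x_\natural+\mb z,\mc M)$ and Gaussian concentration. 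You instead work directly on the high-probability event: tangent/normal splitting of $\mb z$ with $\chi^2$ concentration, a local graph parametrization $\mb y(\mb u)=\mb x_\natural+\mb u+F(\mb u)$ with $\|F(\mb u)\|\le\tfrac\kappa2\|\mb u\|^2$, the upper bound by the feasible point $\mb u=\mb z_\parallel$, and the lower bound by minimizing the convex quadratic minorant over the chart and excluding far points by a crude triangle-inequality argument. Your route avoids both the projection-derivative formula and the chaining-type bound on $T_{\max}$, at the cost of leaning on the standard reach-based graph lemma (that $\mc M\cap B(\mb x_\natural,c_0\tau)$ is a graph over $T_{\mb x_\natural}\mc M$ with quadratic remainder), which you assert rather than prove; it is a citable fact (Federer; Niyogi--Smale--Weinberger; Aamari--Levrard) but should be referenced, including the point that \emph{all} of $\mc M$ near $\mb x_\natural$ is captured by the chart, since that is exactly what your exclusion of ``other sheets'' requires.

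Two small repairs, neither fatal. First, from $d(\mb x,\mc M)^2\ge\|\mb z_\perp\|^2-2\kappa\|\mb z_\perp\|\,\|\mb z_\parallel\|^2$ the correct conclusion (via $\sqrt{1-x}\ge1-x$, valid once $2\kappa\|\mb z_\parallel\|^2\le\|\mb z_\perp\|$, which you should state and which holds on $\mc E_0$ when $d$ is not comparable to $D$) is $d(\mb x,\mc M)\ge\|\mb z_\perp\|-2\kappa\|\mb z_\parallel\|^2$, not $\|\mb z_\perp\|-\kappa\|\mb z_\parallel\|^2$; the factor is absorbed into $C_1$. Second, your steps ``$\kappa\sigma\sqrt D\le\sqrt{c_1}$'' and ``$\kappa\|\mb z_\perp\|<1$ ensured by $\sigma^2D\le c_1\tau^2$'' implicitly use $\kappa\lesssim1/\tau$; this is fine (and the paper's own proof makes the same implicit use when it bounds the Neumann series on $\{\|\mb z\|\le\tau/6\}$), but you should say it.
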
 

\begin{proof} 
We begin by bounding the expected distance between $\mb x$ and $\mc M$. 
\begin{eqnarray}
    \bb E_{\mb z} \Bigl[ d(\mb x, \mc M) \Bigr] &\le& \bb E_{\mb z} \Bigl\| \mb x - \exp_{\mb x_{\natural}} \left( \mc P_{T_{\mb x_{\natural}} \mc M} \mb z \right) \Bigr\| \\
    &\le& \bb E_{\mb z} \left\| \mc P_{N_{\mb x_{\natural}} \mc M } \mb z \right\| + \tfrac{1}{2} \kappa \bb E_{\mb z} \Bigl[ \| \mc P_{T_{\mb x_{\natural}} \mc M }  \mb z \|_2^2 \Bigr] \\
    &\le& \sigma \sqrt{D -d} + \tfrac{1}{2} \kappa \sigma^2 d.
\end{eqnarray}

Conversely, 
\begin{eqnarray}
\bb E_{\mb z} \Bigl[ d(\mb x, \mc M) \Bigr] &\ge& \bb E_{\mb z} \Bigl[ d(\mb x, \mc M) \indicator{\| \mb z \|_2 \le \tfrac{1}{6} \tau } \Bigr]
\end{eqnarray}
For $\| \mb z \|_2 \le \tfrac{\tau}{6}$, we can notice that for every $t \in [0,1]$ 
\begin{equation}
   \mb x_t = (1-t) \mb x_{\natural} + t \mb x = \mb x_{\natural} + t \mb z,
\end{equation}
satisfies $d(\mb x_t, \mc M) < \tau$. Hence the projection $\mc P_{\mc M} \mb x_t$ is differentiable at every $t \in [0,1]$, and satisfies 
\begin{equation}
    \frac{d}{dt} \mc P_{\mc M} \mb x_t = \sum_{k=0}^\infty \left( \sff^*[\mb \eta_t] \right)^k \mc P_{T_{\mc P_{\mc M}{\mb x_t}}\mc M} \mb z,
\end{equation}
whence (bounding the $k \ge 1$ terms in the above Neumann series)

\begin{equation}
    \left\|  \frac{d}{dt} \mc P_{\mc M} \mb x_t - \mc P_{T_{\mb x_t}\mc M} \mb z \right\| \le \frac{3 \kappa \| \mb z \|_2 }{ 1 - 3 \kappa \| \mb z \|_2} \left\| \mc P_{T_{\mb x_t}\mc M} \mb z \right\| \le  \| \mc P_{T_{\mb x_t} \mc M} \mb z \|_2 
\end{equation}
Applying this bound, when $\| \mb z \| \le \tfrac{1}{6} \tau$, we have 
\begin{eqnarray}
    d( \mb x, \mc M ) &=& \left\| \mb x - \mb x_{\natural} - \int_0^1 \left[\frac{d}{ds} \mc P_{\mc M} \mb x_s \right] \Bigr|_{s = t} dt \right\| \\
    &\ge& \left\| \mb x - \mb x_{\natural} - \int_0^1 \mc P_{T_{\mb x_t} \mc M} \mb z \, dt \right\| -  \int_0^1 \| \mc P_{T_{\mb x_t} \mc M} \mb z \| \, dt \\ 
    &\ge&  \left\| \mb x - \mb x_{\natural} \right\| -  2 \int_0^1 \| \mc P_{T_{\mb x_t} \mc M} \mb z \| \, dt  \\
    &\ge& \| \mb x - \mb x_\natural \| - 2 T_{\max},
\end{eqnarray} 
with 
\begin{equation}
T_{\max} = \sup_{\bar{\mb x} \in B_{\mc M}(\mb x_\natural,1/\kappa), \mb v \in T_{\bar{\mb x}}\mc M} \innerprod{\mb v}{\mb z}.
\end{equation}
{We note that from Theorem 3 from \cite{tpopt}}, we have
\begin{equation}
    \bb E[T_{\max} ] \le C \bar\kappa \sigma \sqrt{d}, 
\end{equation}
while 
\begin{eqnarray}
    \bb E[T_{\max}^2] &=& \left( \bb E T_{\max} \right)^2 + \bb E \left[ ( T_{\max} - \bb E T_{\max} )^2 \right] \\
    &=& \left( \bb E T_{\max} \right)^2 + \int_{s = 0}^\infty \bb P[ ( T_{\max} - \bb E T_{\max} )^2 \ge s ] \, ds \\
    &\le& C' \bar\kappa^2 \sigma^2 d + 2 \int_{s = 0}^{\infty} \exp\left( - \frac{s}{2 \sigma^2} \right) ds \\ 
    &=& C' \bar\kappa^2 \sigma^2 d + 4 \sigma^2 \\
    &\le& C'' \bar\kappa^2 \sigma^2 d. 
\end{eqnarray} 
So
\begin{eqnarray}
\bb E \Biggl[  d(\mb x, \mc M) \indicator{\| \mb z \| \le \tfrac{1}{6} \tau } \Biggr] &\ge& \bb E \Biggl[  \Biggl(  \left\| \mb x - \mb x_{\natural} \right\| -  2T_{\max} \Biggr) \indicator{\| \mb z \| \le \tfrac{1}{6} \tau } \Biggr] \\ 
&=&  \bb E \Biggl[  \left\| \mb x - \mb x_{\natural} \right\| -  2T_{\max} \Biggr] -  \bb E \Biggl[  \Biggl(  \left\| \mb x - \mb x_{\natural} \right\| -  2T_{\max} \Biggr) \indicator{\| \mb z \| > \tfrac{1}{6} \tau } \Biggr] \\
&\ge&  \bb E \Biggl[  \left\| \mb x - \mb x_{\natural} \right\| -  2T_{\max} \Biggr] -  \left( \bb P\Biggl[ \| \mb z \| > \tfrac{1}{6} \tau \Biggr] \bb E \Biggl[ \Biggl(  \left\| \mb x - \mb x_{\natural} \right\| -  2T_{\max} \Biggr)^2 \Biggr] \right)^{1/2} 
\end{eqnarray}
We have 
\begin{equation}
    \bb P\left[ \| \mb z \| > \sigma \sqrt{D} + t \right] \le \exp\left( - \frac{t^2}{2 \sigma^2} \right),
\end{equation}
and so with assumption $c\tau \ge \sigma\sqrt{D},$
\begin{equation}
    \bb P\left[ \| \mb z \| > \tfrac{1}{6} \tau \right] \le \exp\left( - c D \right).
\end{equation}
Meanwhile,
\begin{equation}
    \bb E \Biggl[ \Biggl(  \left\| \mb x - \mb x_{\natural} \right\| -  2T_{\max} \Biggr)^2 \Biggr] \le 2  \bb E \| \mb x - \mb x_\natural \|^2 + 8 \bb E \Bigl[T_{\max}^2\Bigr] \le 2 \sigma^2 D + 8 C'' \bar\kappa^2 \sigma^2 d 
\end{equation}
Combining, we obtain 
\begin{eqnarray}
\bb E \Biggl[  d(\mb x, \mc M) \indicator{\| \mb z \| \le \tfrac{1}{6} \tau } \Biggr] \ge \sigma \sqrt{D} - 2 C \bar\kappa \sigma \sqrt{d} - \sigma e^{-cD } \sqrt{\left( 2 D + 8 C'' \bar\kappa^2 d \right)} \ge \sigma \sqrt{D-d} - C^{(3)} \bar\kappa \sigma \sqrt{d}. 
\end{eqnarray}

\noindent Using the fact that $d(\mb x_\natural + \mb z,\mc M)$ is $1$-Lipschitz in the Gaussian vector $\mb z$, we have that with probability at least $1 - 2 e^{-t^2 / 2 }$, 
\begin{equation}
    \left| d(\mb x, \mc M) - \bb E_{\mb z} \left[ d(\mb x, \mc M) \right] \right| \le t \sigma.
\end{equation}
Setting $t = \sqrt{d}$ and combining with our bounds on $\bb E[ d(\mb x, \mc M) ]$ completes the proof. 
\end{proof}

\subsection{Bounds on the Incomplete Gamma Function} \label{sec:icgf}

\begin{lemma}[Subgaussian Bounds for the Integrand of the Gamma Function Near the Phase Transition.] Let $p \ge 9$ and $x_\star = p-1$. Then 
\begin{equation}
 \frac{1}{4} \dot{\gamma}(p,x_\star) \exp\left( - \frac{(x-x_\star)^2}{2 x_\star } \right) \le \dot{\gamma}(p,x) \le 4 \dot{\gamma}(p,x_\star) \exp\left( - \frac{(x-x_\star)^2}{2 x_\star } \right) \qquad x \in \left[x_\star-x_\star^{2/3},x_\star+x_\star^{2/3}\right]. 
\end{equation}
\end{lemma}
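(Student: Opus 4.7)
The key observation is that $\dot{\gamma}(p,x) = x^{p-1} e^{-x}$ admits a closed form, so the log-ratio is explicit:
\begin{equation*}
\log \frac{\dot{\gamma}(p,x)}{\dot{\gamma}(p,x_\star)} \;=\; (p-1)\log(x/x_\star) \;-\; (x - x_\star).
\end{equation*}
The first step is to change variables to $u = x - x_\star$ and use the identity $p-1 = x_\star$ to rewrite this as $x_\star \log(1 + u/x_\star) - u$. Taylor expanding the logarithm, the linear terms cancel and the quadratic term produces exactly the target Gaussian exponent $-u^2/(2 x_\star)$. The problem then reduces to controlling the cubic and higher remainder
\begin{equation*}
R(u) \;:=\; x_\star \log(1 + u/x_\star) \;-\; u \;+\; \frac{u^2}{2 x_\star}
\end{equation*}
uniformly on $|u| \le x_\star^{2/3}$.

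To bound $R$, I would set $t = u/x_\star$ and invoke the standard Taylor tail estimate $|\log(1+t) - t + t^2/2| \le |t|^3/(3(1-|t|))$, which follows immediately from the geometric bound $\sum_{k\ge 3} |t|^k/k \le (1/3)\sum_{k\ge 3} |t|^k = |t|^3/(3(1-|t|))$. For $p \ge 9$ we have $x_\star \ge 8$, so $|t| \le x_\star^{-1/3} \le 1/2$, which tightens the bound to $|\log(1+t) - t + t^2/2| \le 2|t|^3/3$. Multiplying by $x_\star$ and using $|u| \le x_\star^{2/3}$ gives
\begin{equation*}
|R(u)| \;\le\; \frac{2|u|^3}{3 x_\star^2} \;\le\; \frac{2}{3}.
\end{equation*}
Exponentiating the identity $\log[\dot{\gamma}(p,x)/\dot{\gamma}(p,x_\star)] = -u^2/(2x_\star) + R(u)$ with this bound yields $e^{-2/3} \le \dot{\gamma}(p,x)\,\exp(u^2/(2x_\star))/\dot{\gamma}(p,x_\star) \le e^{2/3}$, and since $e^{2/3} < 2 < 4$, the stated constants $1/4$ and $4$ follow with significant slack.

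There is no real obstacle here --- this is a one-variable Taylor estimate. The main conceptual point is that the width of the window $|u| \le x_\star^{2/3}$ is precisely calibrated so that the leading cubic correction $|u|^3/x_\star^2$ remains $O(1)$; this is the largest symmetric window on which the Gaussian approximation to $x^{p-1} e^{-x}$ near its mode $x_\star$ holds with a bounded multiplicative error. Any wider window would push quartic and higher corrections into the exponent and break the clean subgaussian form, while the slack between the derived factor $e^{\pm 2/3}$ and the stated constants $1/4$ and $4$ is what allows the mild hypothesis $p \ge 9$ to suffice.
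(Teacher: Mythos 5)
Your proof is correct and follows essentially the same route as the paper's: a second-order Taylor expansion of the logarithm about the mode $x_\star$, with the cubic remainder controlled uniformly on the window $|x-x_\star|\le x_\star^{2/3}$ and then exponentiated. Your bookkeeping via the series remainder (yielding the factor $e^{\pm 2/3}$) is in fact slightly tighter than the paper's Lagrange-type bound (which gives $e^{9/8}$ and, as written, invokes $p>28$ to ensure $x_\star^{2/3}\le x_\star/3$), so your argument covers the stated hypothesis $p\ge 9$ directly.
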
 
\begin{proof} 
We have 
\begin{equation}
    \frac{d}{dx} \gamma(p,x) = x^{p-1} e^{-x},
\end{equation}
which is maximized at $x_\star = p-1$. For $0 < \Delta_x < x_\star$, and  $x \in [x_\star - \Delta_x,x_\star+\Delta_x]$, 
\begin{eqnarray}
\log x \quad \le \quad \log x_\star + \frac{x- x_\star}{x_\star} - \frac{1}{2 x_\star^2} (x-x_\star)^2 + \frac{\Delta_x^3}{3(x_\star-  \Delta_x)^3},
\end{eqnarray}
and so 
\begin{eqnarray}
\lefteqn{ \dot{\gamma}(p,x) \quad=\quad \exp\Biggl( (p-1) \log x - x \Biggr) } \nonumber \\  
&\le& \exp\Biggl( x_\star \Bigl( \log x_\star + \frac{x-x_\star}{x_\star} - \frac{1}{2 x_\star^2} ( x - x_\star)^2 + \frac{\Delta_x^3}{3(x_\star -\Delta_x)^3}  \Bigr) - x \Biggr), \quad \text{for} \quad  0 < x_\star - \Delta_x < x < x_\star + \Delta_x \nonumber \\ 
&\le& \exp\left( \frac{x_\star \Delta_x^3 }{3 (x_\star - \Delta_x)^3} \right) \dot{\gamma}(p,x_\star)  \exp\left( - \frac{(x -x_\star)^2}{ 2 x_\star } \right) \quad \text{for} \quad 0 < x_\star - \Delta_x < x < x_\star + \Delta_x, \nonumber \\
&\le& \exp\left( 9/8 \right) \dot{\gamma}(p,x_\star)  \exp\left( - \frac{(x -x_\star)^2}{ 2 x_\star } \right) \quad \text{for} \quad 0 < x_\star - \Delta_x < x < x_\star + \Delta_x, 
\end{eqnarray}
where  in the penultimate line we used that $\exp( x_\star \log x_\star - x_\star ) = \gamma(p,x_\star)$, and in the final line we used \begin{equation}
    \Delta_x < \min\{ x_\star / 3, x_\star^{2/3} \},
\end{equation}
which under our hypotheses is implied by $\Delta_x < x_\star^{2/3}$, since $p > 28 \implies x_\star > 27 \implies x_\star^{2/3} < x_\star / 3$. Using the corresponding lower bound \begin{eqnarray}
\log x \quad \ge \quad \log x_\star + \frac{x- x_\star}{x_\star} - \frac{1}{2 x_\star^2} (x-x_\star)^2 - \frac{\Delta_x^3}{3(x_\star-  \Delta_x)^3},
\end{eqnarray}
and applying the same simplifications, we obtain the claimed lower bound on $\dot{\gamma}(p,x)$. 
\end{proof} 

\begin{lemma} \label{lem:c-of-D} There exists a numerical constant $c > 0$ such that 
\begin{equation}
\frac{
    \dot{\gamma}\left( \frac{D-1}{2}, \frac{D-3}{2} \right)}{ \Gamma(\frac{D-1}{2})} \le \frac{c}{ \sqrt{D} }   
\end{equation}
for all $D \ge 4$.
\end{lemma}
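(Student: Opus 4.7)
The plan is to reduce this to a direct computation followed by Stirling's formula. First, using $\dot\gamma(p,x)=x^{p-1}e^{-x}$ with $p=(D-1)/2$ and $x=(D-3)/2$, the numerator becomes
\[
  \dot\gamma\!\left(\tfrac{D-1}{2},\tfrac{D-3}{2}\right)
  \;=\;\left(\tfrac{D-3}{2}\right)^{(D-3)/2} e^{-(D-3)/2}.
\]
So the whole ratio has an explicit form, and the task is to bound
\[
  \frac{\bigl(\tfrac{D-3}{2}\bigr)^{(D-3)/2} e^{-(D-3)/2}}{\Gamma\!\bigl(\tfrac{D-1}{2}\bigr)}
\]
by $c/\sqrt{D}$.

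Next I would apply Stirling's lower bound to the denominator. A convenient form is $\Gamma(p)\ge \sqrt{2\pi}\,p^{p-1/2}e^{-p}$ valid for $p\ge 1$, which with $p=(D-1)/2$ gives
\[
  \Gamma\!\left(\tfrac{D-1}{2}\right)\;\ge\;\sqrt{2\pi}\,\left(\tfrac{D-1}{2}\right)^{(D-2)/2} e^{-(D-1)/2}.
\]
Plugging this in and collecting the exponential factors yields
\[
  \frac{\dot\gamma\!\left(\tfrac{D-1}{2},\tfrac{D-3}{2}\right)}{\Gamma\!\bigl(\tfrac{D-1}{2}\bigr)}
  \;\le\;\frac{e}{\sqrt{2\pi}}\,\left(\tfrac{D-1}{2}\right)^{-1/2}\left(\tfrac{D-3}{D-1}\right)^{(D-3)/2}
  \;=\;\frac{e}{\sqrt{\pi(D-1)}}\,\left(1-\tfrac{2}{D-1}\right)^{(D-3)/2}.
\]
The final step is to observe that $\left(1-\tfrac{2}{D-1}\right)^{(D-3)/2}\le e^{-1}\cdot (1+O(1/D))$ and is uniformly bounded above by some numerical constant (in fact by $e^{-1}$ for all $D\ge 3$, since $\log(1-2/(D-1))\le -2/(D-1)$ implies $(D-3)/2\cdot\log(1-2/(D-1))\le -(D-3)/(D-1)\le -1+2/(D-1)$, so the factor is at most $e^{-1+2/(D-1)}$, bounded for $D\ge 4$). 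This gives the bound $c/\sqrt{D-1}\le c'/\sqrt{D}$ for a numerical constant $c'$.

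There is essentially no obstacle here: the argument is a direct Stirling estimate and the only subtlety is handling small values $D=4,5,6,7$, which can be checked directly (the ratio is continuous in $D$ and bounded on any fixed finite range), and confirming the Stirling lower bound holds uniformly for $p\ge 3/2$. If one prefers a fully explicit constant, one can instead use the two-sided Robbins bound $\sqrt{2\pi}\,p^{p+1/2}e^{-p}\le \Gamma(p+1)\le \sqrt{2\pi}\,p^{p+1/2}e^{-p}e^{1/(12p)}$ applied at $p=(D-3)/2$, which gives a slightly cleaner cancellation between numerator and denominator.
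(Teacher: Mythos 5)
Your proof is correct and follows essentially the same route as the paper: write $\dot\gamma\bigl(\tfrac{D-1}{2},\tfrac{D-3}{2}\bigr)=\bigl(\tfrac{D-3}{2}\bigr)^{(D-3)/2}e^{-(D-3)/2}$ explicitly and lower-bound the denominator with a Stirling estimate. The only (cosmetic) difference is that the paper applies Stirling at $(D-3)/2$ via $\Gamma\bigl(\tfrac{D-1}{2}\bigr)=\bigl(\tfrac{D-3}{2}\bigr)!$ so the powers cancel exactly, whereas you apply it at $(D-1)/2$ and absorb the leftover factor $\bigl(1-\tfrac{2}{D-1}\bigr)^{(D-3)/2}\le e^{-1+2/(D-1)}$ into the constant -- your closing Robbins remark is precisely the paper's variant.
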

\begin{proof}
By lemma \ref{tighter bound on Stirling approximation}, we have $n!\ge\sqrt{2\pi}n^{n+1/2}e^{-n+\frac{1}{12n+1}}$, thus \begin{eqnarray}
\frac{
    \dot{\gamma}\left( \frac{D-1}{2}, \frac{D-3}{2} \right)}{ \Gamma(\frac{D-1}{2})} &= &\frac{\left(\frac{D-3}{2}\right)^{\frac{D-3}{2}} e^{-\frac{D-3}{2}}}{(\frac{D-3}{2})!} \\
    &\le& \frac{\left(\frac{D-3}{2}\right)^{\frac{D-3}{2}} e^{-\frac{D-3}{2}}}{\sqrt{2\pi\left(\frac{D-3}{2}\right)}\left(\frac{D-3}{2}\right)^{\frac{D-3}{2}}e^{-\frac{D-3}{2}+\frac{1}{6(D-3)+1}}}\\
    &=& \frac{1}{\sqrt{\pi (D-3)}}e^{-\frac{1}{6(D-3)+1}}
\end{eqnarray}
as claimed
\end{proof}

\begin{lemma}\label{tighter bound on Stirling approximation}
For any integer n, 
$n!=\sqrt{2\pi}n^{n+1/2}e^{-n+\epsilon_n}$ for some $\frac{1}{12n+1}\le \epsilon_n\le \frac{1}{12n}$
\end{lemma}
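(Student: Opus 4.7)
The plan is to follow Robbins' telescoping argument. Define the auxiliary sequence $a_n = \log n! - (n + \tfrac{1}{2})\log n + n$, so the stated inequalities are equivalent to $\tfrac{1}{12n+1} \le a_n - \tfrac{1}{2}\log(2\pi) \le \tfrac{1}{12n}$ once one invokes the classical fact $\lim_{n\to\infty} a_n = \tfrac{1}{2}\log(2\pi)$ (Stirling's formula via Wallis' product). I would quote the latter and concentrate on the two-sided quantitative bound.

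First I would compute the one-step difference. A direct expansion gives $a_n - a_{n+1} = (n+\tfrac12)\log(1 + \tfrac1n) - 1$. Using the identity $\log\tfrac{1+x}{1-x} = 2\sum_{k\ge 0}\tfrac{x^{2k+1}}{2k+1}$ with $x = \tfrac{1}{2n+1}$ (for which $\tfrac{1+x}{1-x} = \tfrac{n+1}{n}$), and multiplying by $n+\tfrac12 = \tfrac12(2n+1)$, the $k=0$ term contributes exactly $1$ and cancels, leaving
\begin{equation*}
    a_n - a_{n+1} \;=\; \sum_{k=1}^{\infty} \frac{1}{(2k+1)(2n+1)^{2k}}.
\end{equation*}
This series is strictly positive, so $a_n$ is decreasing, and every term is accessible by elementary bounds.

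For the upper bound I would replace each coefficient $\tfrac{1}{2k+1}$ by $\tfrac{1}{3}$ and sum the resulting geometric series to obtain
\begin{equation*}
    a_n - a_{n+1} \;\le\; \frac{1}{3}\cdot\frac{1}{(2n+1)^2 - 1} \;=\; \frac{1}{12n(n+1)} \;=\; \frac{1}{12n} - \frac{1}{12(n+1)},
\end{equation*}
and telescoping from $k = n$ to $\infty$ yields $a_n - \tfrac12\log(2\pi) \le \tfrac{1}{12n}$. For the lower bound I would target the matching telescoping inequality $a_n - a_{n+1} \ge \tfrac{1}{12n+1} - \tfrac{1}{12(n+1)+1} = \tfrac{12}{(12n+1)(12n+13)}$; summing from $k=n$ would then deliver $a_n - \tfrac12\log(2\pi) \ge \tfrac{1}{12n+1}$.

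The main obstacle is this last inequality, since one cannot uniformly replace $\tfrac{1}{2k+1}$ by a constant in a lower bound. The standard trick is a direct rational-function comparison: write $\tfrac{12}{(12n+1)(12n+13)}$ and the leading term $\tfrac{1}{3(2n+1)^2}$ over a common denominator and verify nonnegativity of the resulting polynomial in $n$, using the remaining terms $\sum_{k\ge 2}\tfrac{1}{(2k+1)(2n+1)^{2k}}$ (all positive) as slack. Equivalently, one can check $\tfrac{12}{(12n+1)(12n+13)} \le \tfrac{1}{3(2n+1)^2} + \tfrac{1}{5(2n+1)^4}$ and absorb higher-order tails, which reduces to a finite polynomial inequality valid for all $n \ge 1$. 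Everything else — the series expansion, the geometric sum, and the Wallis-based limit $a_n \to \tfrac12\log(2\pi)$ — is routine.
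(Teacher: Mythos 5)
Your proposal is correct, and its quantitative core is exactly the paper's argument: the same sequence $a_n=\log n!-(n+\tfrac12)\log n+n$, the same identity $a_n-a_{n+1}=\sum_{k\ge1}\tfrac{1}{(2k+1)(2n+1)^{2k}}$, the same geometric-series upper bound telescoping to $\tfrac{1}{12n}$, and a lower bound obtained from the leading term and telescoped to $\tfrac{1}{12n+1}$. (In fact the leading term alone already suffices: $\tfrac{1}{3(2n+1)^2}\ge\tfrac{12}{(12n+1)(12n+13)}$ reduces to $24n\ge 23$, which is how the paper does it, so your proposed two-term check with the $\tfrac{1}{5(2n+1)^4}$ slack is unnecessary though harmless.) The genuine divergence is in identifying the constant: you propose to quote the classical limit $a_n\to\tfrac12\log(2\pi)$ (Stirling via Wallis' product), which is the standard Robbins route and keeps the proof short, but it imports an external fact. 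The paper instead gives a self-contained derivation: it first fixes $n!=e^{C}n^{n+1/2}e^{-n+\epsilon_n}$ with an unknown constant $C$, proves a local De Moivre--Laplace theorem for binomial probabilities using that form, and then pins down $e^{C}=\sqrt{2\pi}$ by the normalization of the limiting Gaussian. The trade-off is length versus self-containment; if you are permitted to cite Wallis' product (which is independent of Stirling, so no circularity), your route is a legitimate and more economical proof of the same statement.
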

\begin{proof}
We first note that since $\log(n)$ is monotonically increasing, we have 
\begin{equation}
    {\int_1}^n \log(x) dx \le \log(n!)=\overset{n}{\sum_{m=1}}\log(m)\le {\int_1}^{n+1}\log(x) dx
\end{equation}
which implies 
\begin{equation}
    n\log(n)-n \le \log(n!)\le (n+1)\log(n+1) - n
\end{equation}
This motivates us to consider the "middle" and define the series $C_n = \log(n!) - (n+\frac{1}{2})\log(n) + n$.
Observe that 
\begin{eqnarray*}
    C_n - C_{n+1} &=&
    -\log(n+1)-(n+\frac{1}{2})\log(n)+(n+\frac{3}{2})\log(n+1)-1\\
    \\
    &=& -1 + (n+\frac{1}{2})\log(1+\frac{1}{n})\\
    &=& -1 + (\frac{1}{2n+1})\log(\frac{2n+2}{2n})
\end{eqnarray*}
We let $t=\frac{1}{2n+1}$, then, we have 
\begin{eqnarray*}
    C_n - C_{n+1} 
    &=& -1 + (\frac{1}{2t})\log(\frac{1+t}{1-t})\\
    &=& -1 +(\frac{1}{2t})(-2)(t+\frac{t^3}{3}+\frac{t^5}{5}+...) \\
    &=& \sum_{k\ge1}\frac{t^{2k}}{2k+1}
\end{eqnarray*}
Since we have both the following lower and upper bound,
\begin{equation}
    \sum_{k\ge1}\frac{t^{2k}}{2k+1}\ge \frac{t^2}{3} =\frac{1}{12n^2+12n+3}\ge \frac{1}{12n+1}-\frac{1}{12(n+1)+1}
\end{equation}
\begin{equation}
    \sum_{k\ge1}\frac{t^{2k}}{2k+1}\le \frac{1}{3}\sum_{k\ge1}t^{2k} = \frac{t^2}{3(1-t^2}=\frac{1}{12n}-\frac{1}{12n+1}
\end{equation}
We have 
\begin{equation}\label{sandwitch C_n}
\frac{1}{12n+1} - \frac{1}{12(n+1)+1}\le C_n - C_{n+1} \le \frac{1}{12n}-\frac{1}{12n+1}    
\end{equation}
which implies that series $C_n$ converges, and let $\lim_{n \to \infty} C_n = C$.  Further, line \ref{sandwitch C_n} also implies $C_n = C +\epsilon_n$ where $\frac{1}{12n+1} \le \epsilon_n \le \frac{1}{12n}$. Then, from our definition of $C_n$ we have 
\begin{equation}\label{striling bound eq}
    n!=e^Cn^{n+1/2}e^{-n+\epsilon_n}
\end{equation}
Of course now we only have to show $C = \log(\sqrt{2\pi})$. We will prove and use the De Moivre Laplace Theorem from probability theory, which is a special case of central limit theorem and roughly states that the number of successes in binomial distribution approaches normal distribution when number of trials is large enough. From the theorem we shall see $e^{-C}e^{\frac{-x^2}{2}}$ is a probability distribution and conclude the value of $C$.

Let $X_1,X_2...$ be a sequence of Bernoulli Random Variables with fixed probability of success $p \in (0,1)$, and let $q = 1-p$. Consider $$S_n = \overset{n}{\sum_{i=1}}X_i$$
Then, for any non-decreasing sequence $A_1,A_2...$ such that $A_1\ge1$, $\lim_{n \to \infty} \frac{A_n}{n^{1/6}} = 0$, for any $k$ such that $|np-k| \le A_n\sqrt{npq}$, we have 
\begin{equation}\label{bound for local bernoulli S_n}
    \bb P\left(S_n = k\right) = \frac{1}{\sqrt{npq}}\exp\left(-C+O(\frac{A_n^3}{\sqrt{n}})-\frac{(np-k)^2}{2npq}\right)
\end{equation}
To prove this, we will use the condition $|np-k| \le A_n\sqrt{npq}$, which implies $k = O(n)$ and invoke the results from line \ref{striling bound eq} to substitute the factorials, 
\begin{align}
    \bb P(S_n = k) &= \frac{n!p^kq^{n-k}}{k!(n-k)!}\\ 
    &= \frac{\exp\left(O\left(\frac{1}{n}\right)\right)n^{n+1/2}p^kq^{n-k}}{\exp\left(C+O\left(\frac{1}{k}\right)+O(\frac{1}{n-k})\right)k^{k+1/2}(n-k)^{n-k+1/2}}\\ 
    &= \exp\left(-C+O(\frac{1}{n})\right)\frac{1}{\sqrt{npq}}\frac{n^{n+1}p^{k+1/2}q^{n-k+1/2}}{k^{k+1/2}(n-k)^{n-k+1/2}}\\
    &= \exp\left(-C+O(\frac{1}{n})\right)\frac{1}{\sqrt{npq}}\frac{1}{(\frac{k}{np})^{k}(\frac{n-k}{nq})^{n-k}}\frac{1}{(\frac{k}{np})^{1/2}(\frac{n-k}{nq})^{1/2}}\\
    \label{eq:decomposition of prob Sn = k}&= \exp\left(-C+O(\frac{1}{n})\right)\frac{1}{\sqrt{npq}}\underbrace{\frac{1}{(1-\frac{np-k}{np})^{k}(1+\frac{np-k}{nq})^{n-k}}}_{\textbf Term (I)}\underbrace{\frac{1}{(1-\frac{np-k}{np})^{1/2}(1+\frac{np-k}{nq})^{1/2}}}_{\text Term(II)}
\end{align}
We will analyze Term(I) first by second order Taylor expansion, as follows 
\begin{align}
    \log(\text Term(I)) 
    &= -k\left(-\frac{np-k}{np}-\frac{(np-k)^2}{2n^2p^2}+O(\frac{A_n^3}{n\sqrt n})\right) - (n - k)\left(\frac{np-k}{nq}-\frac{(np-k)^2}{2n^2q^2}+O(\frac{A_n^3}{n\sqrt n})\right)\\
    &= -\frac{(np-k)(-kq+(n-k)p)}{npq} +\frac{(np-k)^2}{2n^2p^2q^2}\left(kq^2+(n-k)p^2\right) + O\left(\frac{A_n^3}{\sqrt n}\right)\\
    &= -\frac{(np-k)^2}{npq} +\frac{(np-k)^2}{2n^2p^2q^2}\left(np^2-k(p-q)\right) + O\left(\frac{A_n^3}{\sqrt n}\right)\\
    &= -\frac{(np-k)^2}{2n^2p^2q^2}\left(2npq-np^2+kp-kq\right) + O\left(\frac{A_n^3}{\sqrt n}\right)\\
    &= -\frac{(np-k)^2}{2n^2p^2q^2}\left(npq+(q-p)(np-k)\right) + O\left(\frac{A_n^3}{\sqrt n}\right)\\
    &= -\frac{(np-k)^2}{2npq} - \underbrace{\frac{(np-k)^3(q-p)}{2n^2p^2q^2}}_{\text{dissolved in } O\left(\frac{A_n^3}{\sqrt n}\right)} + O\left(\frac{A_n^3}{\sqrt n}\right)\\
    &= -\frac{(np-k)^2}{2npq} + O\left(\frac{A_n^3}{\sqrt n}\right)
\end{align}
Then, we consider another Taylor expansion for Term(II), but this time only zero-order, giving us 
\begin{eqnarray}
    \log(\text Term(II)) 
    &=& O(\frac{A_n}{\sqrt n})
\end{eqnarray}
Combining the terms back into line \ref{eq:decomposition of prob Sn = k}, we have 
\begin{align}
    \bb P(S_n = k) &= \exp\left(-C+O(\frac{1}{n})\right)\frac{1}{\sqrt{npq}}\exp\left(-\frac{(np-k)^2}{2npq} + O\left(\frac{A_n^3}{\sqrt n}\right)\right)\exp\left(O(\frac{A_n}{\sqrt n})\right)\\
    &= \frac{1}{\sqrt{npq}}\exp\left(-C+O(\frac{A_n^3}{\sqrt{n}})-\frac{(np-k)^2}{2npq}\right)
\end{align}
where in the last line we've used the fact that $A_1\ge1$ and $A_n$ non-decreasing, which allows us to dissolve terms. We've shown line \ref{bound for local bernoulli S_n}. Now we will finally prove and use the  DeMoivre-Laplace Theorem, that for any fixed $a < b$, we have 
\begin{equation}\label{DeMoivre-Laplace Theorem}
    \lim_{n \to \infty} \bb P\left(a\le \frac{S_n-np}{\sqrt{npq}} < b\right) = \frac{\sqrt{2\pi}}{e^{C}}\int_{t=a}^{b} \psi(t)dt
\end{equation}, where $\psi(t) = \frac{1}{\sqrt{2\pi}}e^{-t^2/2}$
We will prove this by invoking and triangular inequality. For any n, we have 
\begin{eqnarray}
   && \left|P(a\le\frac{S_n-np}{\sqrt{npq}}<b)- \frac{\sqrt{2\pi}}{e^{C}}\int_{t=a}^{b} \psi(t)dt\right| \\
   &\le& \left|P(a\le\frac{S_n-np}{\sqrt{npq}}<b)- \frac{\sqrt{2\pi}}{e^C\sqrt{npq}}\sum_{k=\lceil a\sqrt{npq}+np\rceil}^{\lceil b\sqrt{npq}+np - 1\rceil}\psi(\frac{k-np}{\sqrt{npq}})\right|\\
    &+& \frac{\sqrt{2\pi}}{e^C}\left|\frac{1}{\sqrt{npq}}\sum_{k=\lceil a\sqrt{npq}+np\rceil}^{\lceil b\sqrt{npq}+np - 1\rceil}\psi(\frac{k-np}{\sqrt{npq}})-\int_{t=a}^{b} \psi(t)dt\right|
\end{eqnarray}
Here the second bottom term is the difference between Riemman sum and integral which clearly converges to 0 as n approaches infinity, and to analyze the first term, we will consider constant $\lambda > \max\{1, a\sqrt{pq},b\sqrt{pq}\}$ and apply results from \ref{bound for local bernoulli S_n}, 
\begin{eqnarray}
    &&\left|P(a\le\frac{S_n-np}{\sqrt{npq}}<b)- \frac{\sqrt{2\pi}}{e^C\sqrt{npq}}\sum_{k=\lceil a\sqrt{npq}+np\rceil}^{\lceil b\sqrt{npq}+np - 1\rceil}\psi(\frac{k-np}{\sqrt{npq}})\right|\\
    &=&\sum_{k=\lceil a\sqrt{npq}+np\rceil}^{\lceil b\sqrt{npq}+np - 1\rceil}\left|P(S_n=k)- \frac{\sqrt{2\pi}}{e^C\sqrt{npq}}\psi(\frac{k-np}{\sqrt{npq}})\right|\\
    &=&\frac{1}{\sqrt{npq}}\left|\sum_{k=\lceil a\sqrt{npq}+np\rceil}^{\lceil b\sqrt{npq}+np - 1\rceil}e^{-C+O(\frac{\lambda^3}{\sqrt{n}})-\frac{(np-k)^2}{2npq}} - \frac{\sqrt{2\pi}}{e^C}\psi(\frac{k-np}{\sqrt{npq}})\right|\\
    &=&\frac{e^{-C}}{\sqrt{npq}}\left|\sum_{k=\lceil a\sqrt{npq}+np\rceil}^{\lceil b\sqrt{npq}+np - 1\rceil}e^{O(\frac{1}{\sqrt{n}})}e^{-\frac{(np-k)^2}{2npq}} - e^{-\frac{(np-k)^2}{2npq}}\right|\\
    &=&\frac{e^{-C}}{\sqrt{npq}}\left|\sum_{k=\lceil a\sqrt{npq}+np\rceil}^{\lceil b\sqrt{npq}+np - 1\rceil}O(\frac{1}{\sqrt n})e^{-\frac{(np-k)^2}{2npq}}\right|\\
    &\le&\frac{e^{-C}}{\sqrt{npq}}\left|\sum_{k=\lceil a\sqrt{npq}+np\rceil}^{\lceil b\sqrt{npq}+np - 1\rceil}O(\frac{1}{\sqrt n})\right|\\
    &=&O(\frac{1}{\sqrt n})
\end{eqnarray}
where in the last line we note that there's only a total of $O(\frac{1}{\sqrt n})$ possible terms. Therefore, the DeMoivre-Laplace Theorem (\ref{DeMoivre-Laplace Theorem}) is proven, and noting that since right hand side must be a probability distribution, we can conclude $e^C = \sqrt{2\pi}$, which finishes the proof.

\end{proof}

\begin{lemma}\label{Vector Hoeffding Lemma}
Let $\mb z_1, \mb z_2 \ldots \mb z_N$ be i.i.d. $D$ dimensional vectors, with $\|\mb z_i\| \le B$ almost everywhere, for some $B > 0$. Then, 
\begin{equation}
\bb P\left[ \left\| \frac{1}{N} \sum_{\ell=1}^{N} \mb z_{\ell} - \bb E\left[ \frac{1}{N} \sum_{\ell=1}^{N} \mb z_{\ell} \right] \right\| > t \right] \le D \exp\left( - \frac{t^2 N}{64B^2} \right)
\end{equation} 
\end{lemma}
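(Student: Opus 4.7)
The plan is to reduce this vector concentration inequality to a matrix concentration statement via a Hermitian dilation, and then invoke Tropp's matrix Hoeffding bound. First I would center the vectors by setting $\mb w_\ell = \mb z_\ell - \bb E\mb z_\ell$; the triangle inequality together with $\|\bb E\mb z_\ell\|_2 \le \bb E\|\mb z_\ell\|_2 \le B$ gives $\|\mb w_\ell\|_2 \le 2B$ almost surely. After centering, the claim reduces to bounding $\bb P\bigl[\bigl\|\tfrac{1}{N}\sum_\ell \mb w_\ell\bigr\|_2 > t\bigr]$ for i.i.d., mean-zero, $2B$-bounded vectors $\mb w_\ell$.

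Next I would introduce the symmetric (Hermitian) dilation
\[
\mb X_\ell \;=\; \begin{pmatrix} 0 & \mb w_\ell^{T} \\ \mb w_\ell & 0 \end{pmatrix} \;\in\; \bb R^{(D+1)\times(D+1)}.
\]
A direct eigenvalue computation shows that $\mb X_\ell$ has spectrum $\{+\|\mb w_\ell\|_2,\,-\|\mb w_\ell\|_2,\,0,\dots,0\}$, so $\|\mb X_\ell\|_{\mathrm{op}} = \|\mb w_\ell\|_2 \le 2B$ a.s. By linearity of the dilation, $\sum_\ell \mb X_\ell$ is the dilation applied to $\sum_\ell \mb w_\ell$, and hence $\lambda_{\max}\bigl(\sum_\ell \mb X_\ell\bigr) = \bigl\|\sum_\ell \mb w_\ell\bigr\|_2$. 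Thus the vector norm we want is exactly the top eigenvalue of a sum of i.i.d., mean-zero, self-adjoint matrices of uniformly bounded operator norm.

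Now I would apply Tropp's matrix Hoeffding inequality: for independent self-adjoint mean-zero $d \times d$ random matrices with $\mb X_\ell^2 \preceq \mb A_\ell^2$ almost surely,
\[
\bb P\Bigl[\lambda_{\max}\bigl(\textstyle\sum_\ell \mb X_\ell\bigr) \ge s\Bigr] \;\le\; d \, \exp\!\left(-\frac{s^2}{8\,\bigl\|\sum_\ell \mb A_\ell^2\bigr\|_{\mathrm{op}}}\right).
\]
Taking $\mb A_\ell = 2B \cdot \mb I$, $d = D+1$, and $s = Nt$ yields $\bb P\bigl[\bigl\|\sum_\ell \mb w_\ell\bigr\|_2 \ge Nt\bigr] \le (D+1)\exp\bigl(-Nt^2/(32B^2)\bigr)$. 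The final step is to absorb the $(D+1)\to D$ prefactor and $32\to 64$ constant: in the regime where the right-hand side of the claimed bound is below one, we have $Nt^2/(64B^2) \ge \log D \ge \log((D+1)/D)$, which gives $(D+1)\exp(-Nt^2/(32B^2)) \le D\exp(-Nt^2/(64B^2))$; in the complementary regime, the claimed bound is trivial since any probability is at most one.

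The main obstacle is recognizing that a naive coordinate-wise union bound cannot produce the stated inequality. Indeed, applying scalar Hoeffding per coordinate of $\mb w_\ell$ (each ranging in an interval of width at most $4B$) and then union-bounding over the $D$ coordinates via $\|\mb v\|_2 \le \sqrt{D}\,\|\mb v\|_\infty$ yields at best $2D\exp(-Nt^2/(2DB^2))$, which places the ambient dimension $D$ in the \emph{denominator} of the exponent and is strictly weaker than the stated bound for large $D$. The Hermitian dilation plus matrix Hoeffding is precisely the mechanism that promotes $D$ out of the exponent and down to a linear prefactor, matching the form of the desired inequality. Once that tool is in place the remaining work is routine: verifying the spectrum of the dilation, choosing the deterministic envelope $\mb A_\ell$, and the elementary constant-absorption step above.
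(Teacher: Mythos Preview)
Your proposal is correct and follows essentially the same approach as the paper: center the vectors, apply the Hermitian dilation to embed the vector sum as the top eigenvalue of a self-adjoint matrix sum, and invoke Tropp's matrix Hoeffding inequality. Your treatment is in fact slightly more careful than the paper's---you explicitly handle the $(D{+}1)\to D$ prefactor and obtain the sharper intermediate constant $32B^2$ (from $\mb X_\ell^2 \preceq 4B^2 \mb I$) before relaxing it to the stated $64B^2$, whereas the paper uses a looser bound $Y^2 \preceq 8B^2 \mb I$ and writes $D$ directly.
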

\begin{proof}
For each $\mb z$, let $\mb z' = \mb z - E\left[\mb z\right]$, and  consider the $(D+1)*(D+1)$ self-adjoint matrix \[
Y =
\begin{bmatrix}
0 & \mb z'^\top \\
\mb z' & 0_{D \times D}
\end{bmatrix} = \mb{1}_{D+1} {\mb z'_{D+1}}^{\top} + \mb z'_{D+1} {\mb{1}_{D+1}}^{\top}
\]
where \[ \mb z'_{D+1}  =  =
\begin{bmatrix}
0 \\
z'^{(1)} \\
\vdots \\
z'^{(D)}
\end{bmatrix}
\in \mathbb{R}^{D+1}
\]
Since  $\|\mb z'_{D+1}\|\le 2B$, and 
\[
Y^2 = \begin{bmatrix}
\|\mb z'\|_2^2 & 0_{D\times 1} \\
0_{1\times D} & \mb z' \mb z'^{\top}
\end{bmatrix} =\mb z'_{D+1} {\mb z'_{D+1}}^{\top} + {\mb{1}_{D+1}\mb{1}_{D+1}}^{\top}\|\mb z'_{D+1}\|_2^2
\]
which implies $\|Y\|_{op} \le 8B^2$ so $Y^2 \preceq 8B^2 I.$ Then, by Matrix Hoeffding's Inequality, Theorem 1.3 from \cite{tropp2012user}, we have  for all $t \ge 0$,
\[
\mathbb{P}\left( \lambda_{\max}(\frac{Y_1+\cdots Y_N}{N}) \ge t \right) \le D \cdot \exp\left( \frac{-t^2N}{8*(8B^2)} \right).
\]
We now analyze $\lambda_{\max}(\bar{Y})$ where \[\bar{Y}=\frac{Y_1+\cdots Y_N}{N}= \begin{bmatrix}
0 & \bar{\mb z'}^\top \\
\bar{\mb z'} & 0_{D \times D}
\end{bmatrix} = \mb{1}_{D+1} \bar{\mb z'_{D+1}}^{\top} + \bar{\mb z'_{D+1}} \mb{1}_{D+1}^{\top}\] 
Since it's at most a rank 2 matrix, we can consider its representation in the subspace spanned by $\mb{1}_{D+1}$ and $\bar{\mb z'_{D+1}}$, which gives 
\[\bar{Y}|_{span(\mb{1}_{D+1},\bar{\mb z'_{D+1}})}=\begin{bmatrix}
0 & \|\bar{\mb z'}\|_2 \\
\|\bar{\mb z'}\|_2 & 0
\end{bmatrix},\]
therefore, we have \[\lambda_{\max}(\bar{Y})=\|\bar{\mb z'}\|_2=\|\frac{1}{N} \sum_{\ell=1}^{N}\left( \mb z_{\ell} - \bb E\left[ \mb z_{\ell} \right]\right)\|,\] which finishes the proof.
\end{proof}


\end{document}